\newcommand{\statname}{\texttt{MAGDiff}}
\newcommand{\distrib}{\mathbb{P}}
\newcommand{\expect}{\mathbb{E}}
\newcommand{\R}{\mathbb{R}}
\DeclareMathOperator*{\argmax}{arg\,max}
\newcommand{\todo}[1]{\textcolor{red}{ToDo: #1}}
\newcommand*{\charles}[1]{{\textcolor{blue}{[\textbf{CA:} #1]}}}
\newcommand*{\felix}[1]{{\textcolor{purple}{[\textbf{FH:} #1]}}}
\newtheorem{fact}{Fact}
\newtheorem{corollary}{Corollary}
\newtheorem{proposition}{Proposition}
\title{{\statname{}}: Covariate Data Set Shift Detection via Activation Graphs of Deep Neural Networks}
\author[1]{Charles Arnal$^*$}
\author[1]{Felix Hensel$^*$}
\author[2]{Mathieu Carri\`ere}
\author[3]{Th\'eo Lacombe}
\author[4]{Hiroaki Kurihara}
\author[5]{Yuichi Ike}
\author[1]{Fr\'ed\'eric Chazal}
\affil[1]{Universit{\'e} Paris-Saclay, CNRS, Inria, Laboratoire de Math{\'e}matiques d'Orsay, France}
\affil[2]{Universit{\'e} C{\^o}te d'Azur, Inria, France}
\affil[3]{LIGM, Université Gustave Eiffel, Champs-sur-Marne, France}
\affil[4]{Fujitsu Limited, Japan}
\affil[5]{Institute of Mathematics for Industry, Kyushu University, Japan}
\begin{document}

\title{{\statname{}}: Covariate Data Set Shift Detection via Activation Graphs of Neural Networks}

\def\thefootnote{*}\footnotetext{These authors contributed equally to this work.}\def\thefootnote{\arabic{footnote}}

\maketitle

\begin{abstract}
Despite their successful application to a variety of tasks, neural networks remain limited, like other machine learning methods, by their sensitivity to shifts in the data: their performance can be severely impacted by differences in distribution between the data on which they were trained and that on which they are deployed. 
 In this article, we propose a new family of representations, called \statname{}, that we extract from any given neural network classifier and that allows for efficient covariate data shift detection without the need to train a new model dedicated to this task. These representations are computed by comparing the activation graphs of the neural network for samples belonging to the training distribution and to the target distribution, and yield powerful data- and task-adapted statistics for the two-sample tests commonly used for data set shift detection. We demonstrate this empirically by measuring the statistical powers of two-sample Kolmogorov-Smirnov (KS) tests on several different data sets and shift types, and showing that our novel representations induce significant improvements over a state-of-the-art baseline relying on the network output.
\end{abstract}

\section{Introduction}

During the last decade, neural networks (NN) have become immensely popular, reaching state-of-the-art performances in a wide range of situations. 
Nonetheless, once deployed in real-life settings, NN can face various challenges such as being subject to adversarial attacks \cite{huang2017adversarial}, being exposed to out-of-distributions samples (samples that were not presented at training time) \cite{hendrycks2016baseline}, or more generally being exposed to a \emph{distribution shift}: when the distribution of inputs 
differs from the training distribution (\textit{e.g.}, input objects are exposed to a 
corruption due to deterioration of measure instruments such as cameras or sensors). 
Such distribution shifts are likely to degrade performances of presumably well-trained models \cite{wiles2021fine}, and being able to detect such shifts is a key challenge in monitoring NN once deployed in real-life applications. 
%
%
Though shift detection for univariate variables is a well-studied problem, the task gets considerably harder with high-dimensional data, and seemingly reasonable methods often end up performing poorly \cite{Ramdas2014OnTD}. 

In this work, we introduce the Mean Activation Graph Difference (\statname{}), a new approach that harnesses the powerful dimensionality reduction capacity of deep neural networks in a data- and task-adapted way. The key idea, further detailed in Section \ref{sec:stat}, is to consider the activation graphs generated by inputs as they are processed by a neural network that has already been trained for a classification task, and to compare such graphs to those associated to samples 
from the 
training distribution.
The method can thus be straightforwardly added as a diagnostic tool on top of preexisting classifiers without requiring any further training ; it is easy to implement, and computationally inexpensive. As the activation graphs depend on the network weights, which in turn have been trained for the data and task at hand, one can also hope for them to capture information that is most relevant to the context. Hence, our method can easily support, and benefit from, any improvements in deep learning.

Our approach is to be compared to {\em Black box shift detection}~(BBSD), a method introduced in \cite{Lipton2018,Rabanser2019} that shares a similar philosophy. 
BBSD uses the output of a trained classifier to efficiently detect various types of shifts (see also Section \ref{sec:stat}); in their experiments, BBSD generally beats other methods, the runner-up being a much more complex and computationally costly multivariate two-sample test combining an autoencoder and the Maximum Mean Discrepancy statistic~\cite{JMLR:v13:gretton12a}.



\vspace{10pt}

Our contributions are summarized as follows.
\begin{enumerate}
    \item Given any neural network classifier, we introduce a new family of representations \statname{}, that is obtained by comparing the activation graphs of samples to the mean activation graph of each class in the training set.
    \item We propose to use \statname{}{} as a statistic for data set shift detection. 
    More precisely, we combine our representations with the statistical method that was proposed and applied to the Confidence Vectors (CV) of classifiers in \cite{Lipton2018}, yielding a new method for shift detection.
    \item We experimentally show that our shift detection method with \statname{}{} outperforms the state-of-the-art BBSD with CV on a variety of datasets, covariate shift types and shift intensities, often by a wide margin.
    Our code is provided in the Supplementary Material and will be released publicly.
\end{enumerate}

\section{Related Work}\label{sec:related_works}


Detecting changes or outliers in data can be approached from the angle of anomaly detection, a well-studied problem~\cite{AnomalyDetectionSurvey}, or out-of-distribution (OOD) sample detection~\cite{Shafaei2018DoesYM}.
Among techniques that directly frame the problem as shift detection, kernel-based methods such as Maximum Mean Discrepancy (MMD) \cite{JMLR:v13:gretton12a,Zaremba2013} and Kernel Mean Matching (KMM) \cite{gretton2009covariate,Zhang2013} have proved popular, though they scale poorly with the dimensionality of the data \cite{Ramdas2014OnTD}. Using classifiers to test whether samples coming from two distributions can be correctly labeled, hence whether the distributions can be distinguished, has also been attempted; see, \textit{e.g.}, \cite{Kim2021}.
The specific cases of covariate shift \cite{Jang2022,uehara2020off,Rabanser2019} and label shift \cite{Storkey,Lipton2018} have been further investigated, from the point of view of causality and anticausality \cite{CausalAnticausal}.
Moreover, earlier investigations of similar questions have arisen from the fields of economics  \cite{Heckman1977} and epidemiology \cite{saerens2002adjusting}.

Among the works cited above, \cite{Lipton2018} and \cite{Rabanser2019} are of particular interest to us.
In \cite{Lipton2018}, the authors detect label shifts using shifts in the distribution of the outputs of a well-trained classifier; they call this method Black Box Shift Detection (BBSD). In \cite{Rabanser2019}, the authors observe that BBSD tends to generalize very well to covariate shifts, though without the theoretical guarantees it enjoys in the label shift case.
Our method is partially related to BBSD. Roughly summarized, we apply similar statistical tests---combined univariate Kolmogorov-Smirnov tests---to different features---Confidence Vectors (CV) in the case of BBSD, distances to mean activation graphs (\statname{}) in ours.
Similar statistical ideas have also been explored in  
\cite{alberge:hal-02172275} and \cite{BarShalom}, while neural network activation graph features have been studied in, \textit{e.g.}, \cite{Lacombe2021} and \cite{HORTA2021109}.
The related issue of the robustness of various algorithms to diverse types of shifts has been recently investigated in \cite{ShiftGeneral}.

\section{Background}\label{sec:background}

\subsection{Shift Detection with Two-Sample Tests}\label{subsec:shift-detection}

There can often be a shift between the distribution $\distrib_0$ of data on which a model has been trained and tested and the distribution $\distrib_1$ of the data on which it is used after deployment; many factors can cause such a shift, \textit{e.g.},~a change in the environment, in the data acquisition process, or the training set being unrepresentative.
Detecting shifts is crucial to understanding, and possibly correcting, potential losses in performance; even shifts that do not strongly impact accuracy can be important symptoms of inaccurate assumptions or changes in deployment conditions.



Additional assumptions can sometimes be made on the nature of the shift. In the context of a classification task, where data points are of the shape $(x,y)$ with $x$ the feature vector and $y$ the label, a shift that preserves the conditional distribution $p(x|y)$ (but allows the proportion of each label to vary) is called {\em label shift}. Conversely, a {\em covariate shift} occurs when $p(y|x)$ is preserved, but the distribution of $p(x)$ is allowed to change. In this article, we focus on the arguably harder case of covariate shifts. See Section~\ref{sec:expes} for examples of such shifts in numerical experiments.

Shifts can be detected using {\em two-sample tests}: that is, a statistical test that aims at deciding between the two hypotheses
\[H_0:\distrib_0=\distrib_1\text{ and }H_1:\distrib_0\neq\distrib_1,\] 
given two random sets of samples, $X_0$ and $X_1$, independently drawn from two distributions $\distrib_0$ and $\distrib_1$ (see, \textit{e.g.}, \cite{heumann2023introduction} for an introduction to hypothesis testing). 
To do so, many statistics have been derived, depending on the assumptions made on $\distrib_0$ and $\distrib_1$.
In the case of distributions supported on $\R$, one such test is the {\em univariate Kolmogorov-Smirnov (KS) test}, of which we make use in this article.
Given, as above, two sets of samples $X_0,X_1\subset \R$, consider the empirical distribution functions $F_i(z) \coloneqq \frac{1}{\text{Card}(X_i)} \sum_{x\in X_i}1_{x\leq z}$ for $i=0,1$ and $z\in\R$, where $\text{Card}$ denotes the cardinality.
Then the statistic associated with the KS test and the samples is $T \coloneqq \sup_{z\in\R}|F_0(z)-F_1(z)|$.
If $\distrib_0 = \distrib_1$, the distribution of $T$ is independent of $\distrib_0$ and converges to a known distribution when the sizes of the samples tend to infinity (under mild assumptions) \cite{smirnov1939estimation}. Hence approximate $p$-values can be derived.
The KS test can also be used to compare multivariate distributions: if $\distrib_0$ and $\distrib_1$ are distributions on $\R^D$, a $p$-value $p_i$ can be computed from the samples by comparing the $i$-th entries of the vectors of $X_0,X_1\subset \R^D$ using the univariate KS test, for $i=1,\dots,D$.
A standard and conservative way of combining those $p$-values is to reject $H_0$ if $\min(p_1,\dots,p_D)<\alpha/D$, where $\alpha$ is the significance level of the test. This is known as the {\em Bonferroni correction} \cite{Voss1073}.
Other tests tackle the multidimensionality of the problem more directly, such as the {\em Maximum Mean Discrepancy (MMD) test}, though not necessarily with greater success (see, \textit{e.g.}, \cite{Ramdas2014OnTD}).

\subsection{Neural Networks}

We now recall the basics of \emph{neural networks} (NN), which will be our main object of study.
We define a neural network\footnote{While our exposition is restricted to fully-connected feedforward neural networks for the sake of concision, our representations
are well-defined for other types of neural nets (\textit{e.g.}, recurrent neural nets). In particular, they adapt seamlessly to the case of convolutional layers: such a layer can always be represented as a fully-connected layer whose weight matrix is constrained to have many zeroes, and what follows applies without further modifications.} as a (finite) sequence of functions called \emph{layers} $f_1,\dots,f_L$ of the form $f_\ell \colon \R^{n_\ell} \to \R^{n_{\ell+1}}, x\mapsto \sigma_\ell(W_\ell\cdot x + b_\ell)$, where the parameters $W_\ell \in \R^{n_{\ell+1} \times n_{\ell}}$ and $b_\ell \in \R^{n_{\ell+1}}$ are called the weight matrix and the bias vector respectively, and $\sigma_\ell$ is an (element-wise) activation map ({\em e.g.}, sigmoid or ReLU). 
The  neural network encodes a map $F \colon \R^d \to \R^D$ given by $F = f_L \circ \dots \circ f_1$.  We sometimes use $F$ to refer to the neural network as a whole, though it has more structure. 

When the neural network is used as a classifier, the last activation function $\sigma_L$ is often taken to be the \emph{softmax} function, so that $F(x)_i$ can be interpreted as the confidence that the network has in $x$ belonging to the $i$-th class, for $i=1,\dots,D$.
For this reason, we use the terminology \emph{confidence vector} (CV) for the output $F(x)\in \R^D$.
The true class of $x$ is represented by a label $y = (0,\dots,0,1,0,\dots,0) \in \R^D$ that takes value $1$ at the coordinate indicating the correct class and $0$ elsewhere. 
The parameters of each layer $(W_\ell, b_\ell)$ are typically learned from a collection of training observations and labels $\{(x_n, y_n)\}_{n=1}^N$ by minimizing a cross-entropy loss through gradient descent, in order to make $F(x_n)$ as close to $y_n$ as possible on average over the training set. 
The \emph{prediction} of the network on a new observation $x$ is then given by $\argmax_{i=1,\dots,D} F(x)_i$, and its (test) \emph{accuracy} is the proportion of correct predictions on a new set of observations $\{(x'_n, y'_n)\}_{n=1}^{N'}$, that is assumed to have been independently drawn from the same distribution as the training observations.
In this work, we consider NN classifiers that have already been trained on some training data and that achieve reasonable accuracies on test data following the same distribution as training data.

\subsection{Activation Graphs}
\label{subsec:activatin_graphs_def}

Given an instance $x = x_0 \in \R^d$, a trained neural network $f_1,\dots,f_L$ with $x_{\ell+1}=f_\ell(x_\ell) = \sigma_\ell(W_\ell \cdot x_\ell + b_\ell)$ and a layer $f_\ell \colon \R^{n_\ell} \longrightarrow \R^{n_{\ell+1}}$, we can define a weighted graph, called the \emph{activation graph} $G_\ell(x)$ of $x$ for the layer $f_\ell$,  as follows.
We let $V \coloneqq V_\ell \sqcup V_{\ell+1}$ be the disjoint union of the two sets $V_\ell = \{1,\ldots, n_\ell \}$ and $V_{\ell+1} = \{1,\ldots, n_{\ell+1}\}$. The edges are defined as $E \coloneqq V_\ell \times V_{\ell+1}$. To each edge $(i,j) \in E_\ell$, we associate the weight $w_{i,j}(x)\coloneqq W_\ell(j,i) \cdot x_\ell(i)$, where $x_\ell(i)$ (resp.\ $W_\ell(j,i)$) denotes the $i$-th coordinate of $x_\ell \in \R^{n_\ell}$ (resp.\ entry $(j,i)$ of $W_\ell \in \R^{n_{\ell+1} \times n_\ell}$).
The activation graph $G_\ell(x)$ is the weighted graph $(V,E,\{w_{i,j}(x)\})$, which can be conveniently represented as a $n_\ell\times n_{\ell+1} $ matrix whose entry $(i,j)$ is $w_{i,j}(x)$. A simple illustration of this definition can be found in the Supplementary Material.
Intuitively, these activation graphs---first considered in \cite{gebhart2019characterizing}---represent how the network ``reacts'' to a given observation $x$ at inner-level, rather than only considering the network output ({\em i.e.}, the Confidence Vector).


\section{Two-Sample Statistical Tests using {\statname{}}}\label{sec:stat}

\subsection{The \statname{} representations}

Let  $\distrib_0$ and $\distrib_1$ be two distributions for which we want to test $H_0 : \distrib_0 = \distrib_1$.
As mentioned above, two-sample statistical tests tend to underperform when used directly on high-dimensional data.
It is thus common practice to extract lower-dimensional representations $\Psi(x)$ from the data\footnote{Here, as in the remainder of the article, we commit a minor abuse of notation: $\distrib_0$ and $\distrib_1$ are distributions on both the features and the labels, \textit{i.e.} $(x,y) \sim \distrib_i$, but we often write $x\sim \distrib_i$ to indicate that $x$ has been drawn from $(\distrib_i)_x$, the marginal of $\distrib_i$ with respect to the features. To avoid any confusion, we always let the letter $x$ (possibly with a subscript) indicate features. } $x \sim \distrib_i$, where $\Psi \colon  {\rm supp}~\distrib_0 \cup {\rm supp}~\distrib_1 \rightarrow \R^M$.
Given a classification task with classes $1,\dots,D$, we define a family of such representations as follows.
Let $T \colon {\rm supp}~\distrib_0 \cup {\rm supp}~\distrib_1 \rightarrow V $ be any map whose codomain $V$ is a Banach space with norm $\|\cdot\|_V$. For each class $i\in \{1,\dots,D\}$, let $\distrib_{0,i}$ be the conditional distribution of data points from $\distrib_0$ in class $i$.
We define
\[\Psi_i(x) \coloneqq \| T(x) - \expect_{\distrib_{0,i}}[T(x')] \|_V\]
for $x\in  {\rm supp}~\distrib_0 \cup {\rm supp}~\distrib_1$.
Given a fixed finite dataset $x_1, \dots,x_m \stackrel{\mathrm{iid}}{\sim} \distrib_0$, we similarly define the approximation 
\vspace{-2.5pt}
\[\tilde{\Psi}_i(x) \coloneqq \| T(x) - \frac{1}{m_i}\sum_{j=1}^{m_i} T(x^i_j) \|_V,\]

\vspace{-10pt}
where $x^i _1,\dots,x^i_{m_i}$ are the points whose class is $i$. 
This defines a map $\tilde{\Psi} \colon {\rm supp}~\distrib_0 \cup {\rm supp}~\distrib_1 \rightarrow \R^D$.

The map $T \colon {\rm supp}~\distrib_0 \cup {\rm supp}~\distrib_1 \rightarrow V $ could \textit{a priori} take many shapes.
In this article, we assume that we are provided with a neural network $F$ that has been trained for the classifying task at hand,
as well as a training set drawn from $\distrib_0$.
We let $T$ be the activation graph $G_\ell$ of the layer $f_\ell$ of $F$ represented as a matrix, so that the expected values $\expect_{\distrib_{0,i}}[G_\ell(x')]$ (for $i=1,\dots,D$) are simply mean matrices, and the norm $\|\cdot\|_V$ is the Frobenius norm $\|\cdot\|_2$.
We call the resulting $D$-dimensional representation \textit{Mean Activation Graph Difference} (\statname{}): 
\[\statname{}(x)_i \coloneqq \| G_\ell(x) - \frac{1}{m_i}\sum_{j=1}^{m_i} G_\ell(x^i_j) \|_2,\]

\vspace{-10pt}
for $i=1,\dots,D$, where $x^i _1,\dots,x^i_{m_i}$ are, as above, samples of the training set whose class is $i$.
Therefore, for a given new observation $x$, we derive a vector $\statname{}(x)\in \R^D$ whose $i$-th coordinate indicates whether $x$ activates the chosen layer of the network in a similar way ``as training observations of the class $i$''.

Many variations are possible within that framework.
One could, \textit{e.g.},~consider the activation graph of several consecutive layers, use another matrix norm, or apply Topological Data Analysis techniques to compute a more compact representation of the graphs,
such as the {\em topological uncertainty}~\cite{Lacombe2021}.
In this work, we focus on \statname{} for dense layers, though it could be extended to other types.

\subsection{Comparison of distributions of features with multiple KS tests}
Given as above a (relatively low-dimensional) representation $\Psi \colon {\rm supp}~\distrib_0 \cup {\rm supp}~\distrib_1 \rightarrow \R^N $ and samples $x_1,\dots,x_n \stackrel{\mathrm{iid}}{\sim} \distrib_0$ and $x'_1,\dots,x'_m \stackrel{\mathrm{iid}}{\sim} \distrib_1$, one can apply multiple univariate (coordinate-wise) KS tests with Bonferroni correction to the sets $\Psi(x_1),\dots,\Psi(x_n)$ and $\Psi(x'_1),\dots,\Psi(x'_m)$, as described in Section \ref{sec:background}.
If $\Psi$ is well-chosen, 
a difference between the distributions $\distrib_0$ and $\distrib_1$ (hard to test directly due to the dimensionality of the data) will translate to a difference between the distributions $\Psi(x)$ and $\Psi(x')$ for $x\sim \distrib_0$ and $x'\sim\distrib_1$ respectively.
Detecting such a difference serves as a proxy for testing $H_0:\distrib_0=\distrib_1$.
In our experiments, we apply this procedure to the   $\statname{}$ representations defined above (see Section~\ref{subsec:expes_settings} for a step-by-step description). This is a reasonable approach, as it is a simple fact that a generic shift in the distribution of the random variable $x\sim \distrib_0$ will in turn induce a shift in the distribution of $\Psi(x)$, as long as $\Psi$ is not constant\footnote{See the Supplementary Material, Section 3 for an elementary proof.}; however, this does not give us any true guarantee, as it does not provide any quantitative result regarding the shift in the distribution of $\Psi(x)$. Such results are beyond the scope of this paper, in which we focus on the good experimental performance of the \statname{}~statistic.

\subsection{Differences from BBSD and motivations}

The BBSD method described in \cite{Lipton2018} and \cite{Rabanser2019} is defined in a similar manner, except that the representations $\Psi$ on which the multiple univariate KS tests are applied are simply the Confidence Vectors (CV) $F(x)\in \R^D$ of the neural network $F$ (or of any other classifier that outputs confidence vectors), rather than our newly proposed \statname{} representations. In other words, they detect shifts in the distribution of the inputs  by testing for shifts in the distribution of the outputs of a given classifier\footnote{This corresponds to the best-performing variant of their method, denoted as \emph{BBSDs} (as opposed to, \textit{e.g.},~\emph{BBSDh}) in \cite{Rabanser2019}.}.

Both our method and theirs share advantages: the features are task- and data-driven, as they are derived from a classifier that was trained for the specific task at hand. They do not require the design or the training of an additional model specifically geared towards shift detection, and they have favorable algorithmic complexity, especially compared to some kernel-based methods. In particular, combining the KS tests with the Bonferroni correction spares us from having to calibrate our statistical tests with a permutation test, which can be costly as shown in \cite{Rabanser2019}.
A common downside is that the Bonferroni correction can be overly conservative; other tests might offer higher power.
The main focus of this article is the relevance of the \statname{}~representations, rather than the statistical tests that we apply to them, and it has been shown in \cite{Rabanser2019} that KS tests 
yield state-of-the-art performances; as such, we did not investigate alternatives, though additional efforts in that direction might produce even better results.


The nature of the construction of the \statname{}~representations is geared towards shift detection since it is directly based on encoding differences (\textit{i.e.},~deviations) from the mean activation graphs (of $\distrib_0$).
 Moreover, they are based on representations from deeper within the NN, which are less compressed than the CV - passing through each layer leads to a potential loss of information. Hence, we can hope for the \statname{}~ to encode more information from the input data than the CV representations used in \cite{Rabanser2019} which focus on the class to which a sample belongs to, while sharing the same favorable dimensionality reduction properties. 
Therefore, we expect \statname{}~to perform particularly well with covariate shifts, where shifts in the distribution of the data do not necessarily translate to strong shifts in the distribution of the CV. Conversely, we do not hope for our representations to bring significant improvements over CV in the specific case of label shifts; all the information relative to labels available to the network is, in a sense, best summarized in the CV, as this is the main task of the NN.
These expectations were confirmed in our experiments.

\section{Experiments}\label{sec:expes}

This experimental section is devoted to showcasing the use of the \statname{}~representations and its benefits over the well-established baseline CV when it comes to performing covariate shift detection.
As detailed in Section~\ref{subsec:expes_settings}, we combine coordinate-wise KS tests for both these representations.
Note that in the case of CV, this corresponds exactly to the method termed \emph{BBSDs} in \cite{Rabanser2019}.
Our code is provided in the Supplementary Material, as well as a more thorough presentation of the datasets and parameters used.

\subsection{Experimental Settings}\label{subsec:expes_settings}


\paragraph{Datasets.} We consider the standard datasets MNIST \cite{MNIST}, FashionMNIST (FMNIST) \cite{xiao2017fashionmnist}, CIFAR-10 \cite{CIFAR-10}, SVHN \cite{SVHN}, as well as a lighter version of ImageNet (restricted to $10$ classes) called Imagenette \cite{Imagenette}.

\paragraph{Architectures.} For MNIST and FMNIST, we used a simple CNN architecture consisting of 3 convolutional layers followed by 4 dense layers. 
For CIFAR-10 and SVHN, we considered (a slight modification, to account for input images of size $32\times32$, of) the ResNet18 architecture \cite{Resnet}. For Imagenette, we used a pretrained ResNet18 model provided by Pytorch \cite{ResNet18_pretrained}. 
With these architectures, we reached a test accuracy of $98.6\%$ on MNIST, $91.1\%$ on FMNIST, $94.1\%$ on SVHN, $81\%$ on CIFAR-10 and $99.2\%$ for Imagenette, validating the ``well-trained'' assumption mentioned in Section \ref{sec:stat}.
Note that we used simple architectures, without requiring the networks to achieve state-of-the-art accuracy.

\paragraph{Shifts.} We applied three types of shift to our datasets: Gaussian noise (additive white noise), Gaussian blur (convolution by a Gaussian distribution), and Image shift (random combination of rotation, translation, zoom and shear),
for six different levels of increasing intensities (denoted by I, II,\dots,VI), and a fraction of shifted data $\delta \in \{0.25, 0.5, 1.0\}$. 
For each dataset and shift type, we chose the shift intensities in such a manner that the shift detection for the lowest intensities and low $\delta$ is almost indetectable for both methods (\statname{} and CV), and very easily detectable for high intensities and values of $\delta$. Details (including the impact of the shifts on model accuracy) and illustrations can be found in the Supplementary Material.


\paragraph{Sample size.} 
We ran the shift detection tests with sample sizes\footnote{That is, the number of elements from the clean and shifted sets on which the statistical tests are performed; see the paragraph \textbf{Experimental protocol} for more details.} $\{10, 20, 50, 100, 200, 500, 1000\}$ to assess how many samples a given method requires to reliably detect a distribution shift. A good method should be able to detect a shift with as few samples as possible.  

\paragraph{Experimental protocol.}
In all of the experiments below, we start with a neural network that is pre-trained on the training set of a given dataset. The test set will be referred to as the \textit{clean set} (CS). 
We then apply the selected shift (type, intensity, and proportion $\delta$) to the clean set and call the resulting set the \textit{shifted set} $SS$; it represents the target distribution $\distrib_1$ in the case where $\distrib_1\neq \distrib_0$.

As explained in Section~\ref{sec:stat}, for each of the classes $i=1,\dots,D$ (for all of our datasets, $D=10$), we compute the mean activation graph of a chosen dense layer $f_\ell$ of (a random subset of size $1000$ of all) samples in the training set whose class is $i$; this yields $D$ mean activation graphs $G_1,\dots,G_D$.
We compute for each sample $x$ in $CS$ and each sample in $SS$ the representation $\statname{}(x)$, where $\statname{}(x)_i = \|G_\ell(x)-G_i\|_2$ for $i=1,\dots,D$ and $G_\ell(x)$ is the activation graph  of $x$ for the layer $f_\ell$ (as explained in Section \ref{sec:stat}).
Doing so, we obtain two sets $\{\statname{}(x) \mid x\in CS\} $ and $\{\statname{}(x') \mid x'\in SS\} $ of $D$-dimensional features with the same cardinality as the test set. 

Now, we estimate the power of the test for a given sample size\footnote{The same sample size that is mentioned in the \textbf{Sample size} paragraph.} $m$ and for a type I error of at most $0.05$; in other words, the probability that the test rejects $H_0$ when $H_1$ is true and when it has access to only $m$ samples from the respective datasets, and under the constraint that it does not falsely reject $H_0$ in more than $5\%$ of cases.
To do so, we randomly sample (with replacement) $m$ elements $x_1', \dots, x_m'$ from $SS$, and consider for each class $i=1,\dots,D$ the discrete empirical univariate distribution $q_i$ of the values $\statname{}(x'_1)_i,\dots,\statname{}(x'_m)_i$.
Similarly, by randomly sampling $m$ elements from $CS$, we obtain another discrete univariate distribution $p_i$ (see Figure \ref{fig:comparison_distributions} for an illustration).
Then, for each $i=1,\dots,D$, the KS test is used to compare $p_i$ and $q_i$ to obtain a $p$-value $\lambda_i$, and reject $H_0$ if $\min(\lambda_1,\dots,\lambda_D)<\alpha/D$, where $\alpha$ is the threshold for the univariate KS test at confidence $0.05$ (\textit{cf.}~Section~\ref{subsec:shift-detection}).
Following standard bootstrapping protocol, we repeat that experiment (independently sampling $m$ points from $CS$ and $SS$, computing $p$-values, and possibly rejecting $H_0$) $1500$ times; the percentage of rejection of $H_0$ is the estimated \emph{power} of the statistical test (since $H_0$ is false in this scenario). 
We use the asymptotic normal distribution of the standard Central Limit Theorem to compute approximate $95\%$-confidence intervals on our estimate.

\begin{figure}[h]
\begin{center}
\includegraphics[width=0.52\textwidth]{./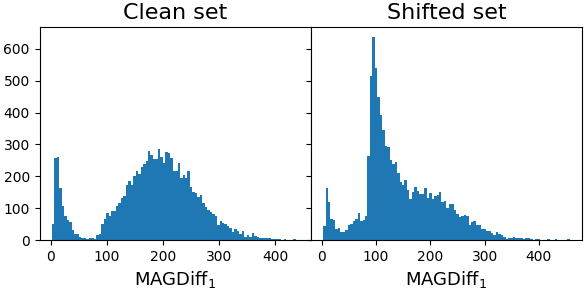}
\captionof{figure}{Empirical distributions of $\statname{}_1$ for the $10,000$ samples of the clean and shifted sets (MNIST, Gaussian noise, $\delta=0.5$, last dense layer). For the clean set, the distribution of the component $\statname{}{}_1$ of \statname{} exhibits a peak close to $0$.
This corresponds to those samples whose distance to the mean activation graph of (training) samples belonging to the associated class is very small, \textit{i.e.},~these are samples that presumably belong to the same class as well.
Note that, for the shifted set, this peak close to $0$ is substantially  diminished,
which indicates that the activation graph of samples affected by the shift is no longer as close to the mean activation graph of their true class.}
\label{fig:comparison_distributions}
\vspace{-3mm}
\end{center}
\end{figure}
To illustrate that the test is well calibrated, we repeat the same procedure while sampling twice $m$ elements from $CS$ (rather than $m$ elements from $SS$ and $m$ elements from $CS$), which allows us to estimate the type I error (\textit{i.e.},~the percentage of incorrect rejections of $H_0$) and assert that it remains below the significance level of $5\%$ (see, \textit{e.g.},~Figure~\ref{fig:Experiment1}).

We experimented with a few variants of the \statname{}~representations: we tried reordering the coordinates of each vector $\statname{}(x)\in\R^D $ in increasing order of the value of the associated confidence vectors. We also tried replacing the matrix norm of the difference to the mean activation graph by either its Topological Uncertainty (TU) \cite{Lacombe2021}, or variants thereof.
Early analysis suggested that these variations did not bring increased performances, despite their increased complexity. Experiments also suggested that \statname{}~representations brought no improvement over CV in the case of label shift.
We also tried to combine (\textit{i.e.}, concatenate) the CV and \statname{}~representations, but the results were unimpressive, which we attribute to the Bonferroni correction getting more conservative the higher the dimension. 
We thus only report the results for the standard \statname{}.

\paragraph{Competitor.} 
We used multiple univariate KS tests applied to CV (the method BBSDs from \cite{Rabanser2019}) as the baseline, which we denote by ``CV'' in the figures and tables, in contrast to our method denoted by ``\statname{}''.
The similarity in the statistical testing between BBSDs and \statname{} allows us to easily assess the relevance of the \statname{}~features.  We chose them as our sole competitors as it has been convincingly shown in \cite{Rabanser2019} that they outperform on average all other standard methods, including the use of dedicated dimensionality reduction models, such as autoencoders, or of  multivariate kernel tests. Many of these methods are also either computationally more costly (to the point where they cannot be practically applied to more than a thousand samples) or harder to implement (as they require an additional neural network to be implemented) than both BBSDs and  \statname{}.

\subsection{Experimental results and influence of parameters.}

We now showcase the power of 
shift detection using our \statname{}{} representations in various settings and compare it to the state-of-the-art competitor CV. 
%
Since there were a large number of hyper-parameters in our experiments (datasets, shift types, shift intensities, etc.),
we started with a standard set of hyper-parameters that yielded representative and informative results according to our observations (MNIST and Gaussian noise, as in \cite{Rabanser2019}, $\delta=0.5$, sample size $100$, \statname{} computed with the last layer of the network) and let some of them vary in the successive experiments.
We focus on the well-known MNIST dataset to allow for easy comparisons, and refer to the Supplementary Material for additional experimental results that confirm our findings on other datasets.




\paragraph{Sample size.} The first experiment consists of estimating the power of the shift detection test as a function of the sample size (a common way of measuring the performance of such a test) using either the \statname{} or the baseline CV representations. 
Figure~\ref{fig:Experiment1} shows the powers of the KS tests using the \statname{} (red curve) and CV (green curve) representations with respect to the sample size for the MNIST dataset.
Here, we choose to showcase the results for Gaussian noise of intensities II, IV and IV with shift proportion $\delta=0.5$.

\begin{figure*}[t]
    \centering
    \textbf{Power as a Function of Sample Size}\par
    \includegraphics[width=0.32\textwidth]{./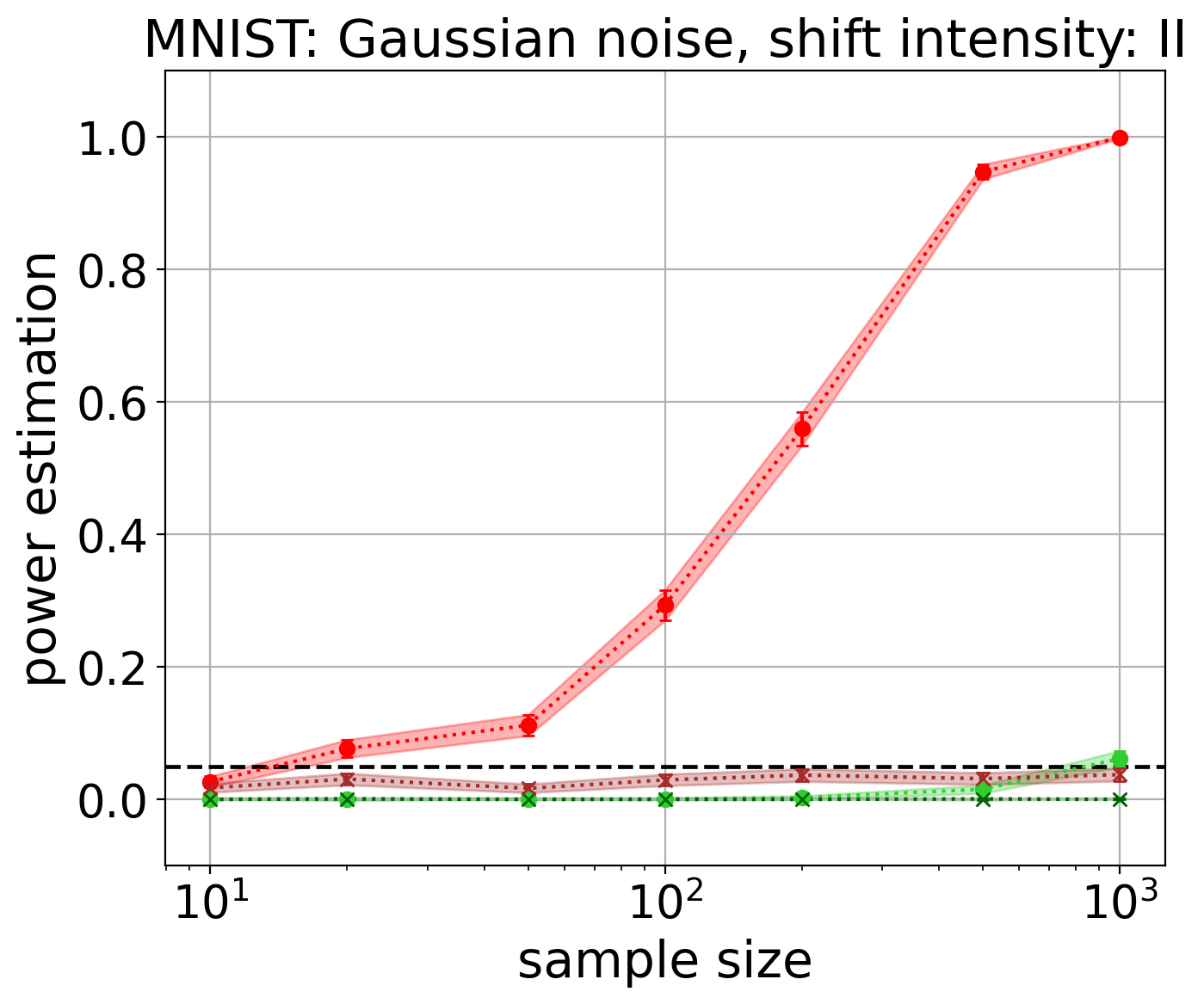}
    \includegraphics[width=0.32\textwidth]{./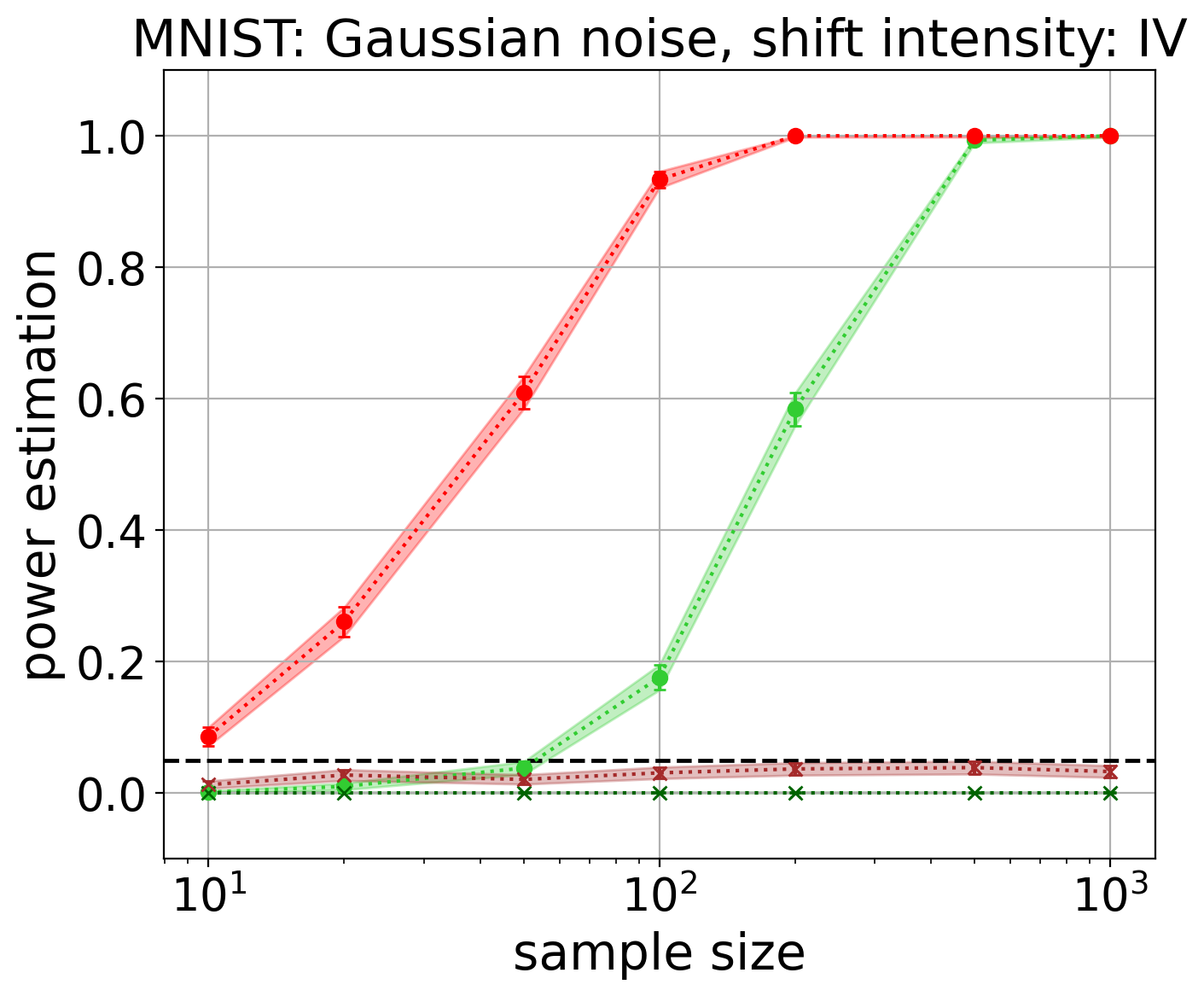}
   \includegraphics[width=0.32\textwidth]{./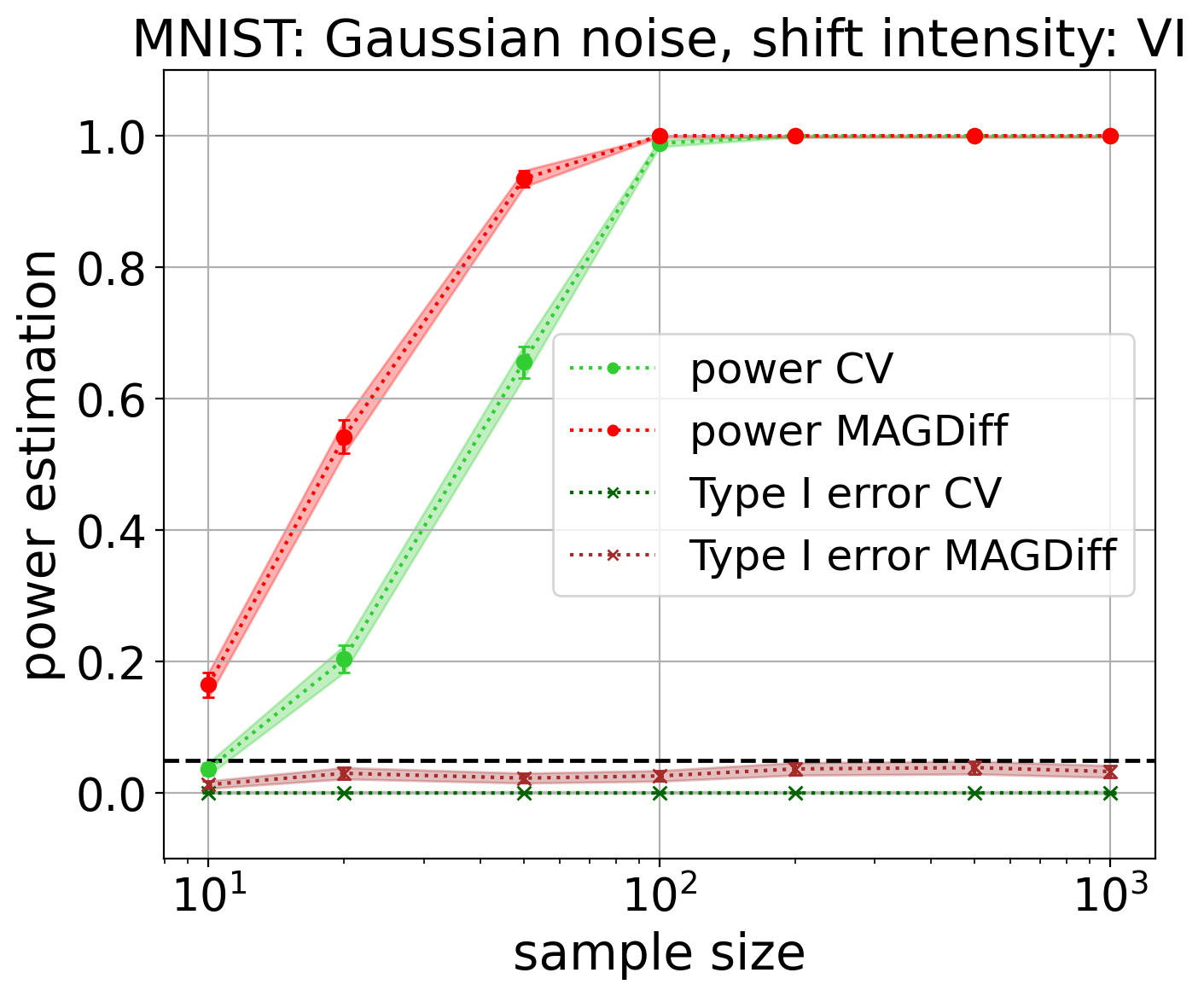}
    \caption{Power and type I error of the statistical test with \statname{}~(red) and CV (green) representations  w.r.t.~sample size (on a log-scale) for three different shift intensities (II, IV, VI) and fixed $\delta = 0.5$ for the MNIST dataset, Gaussian noise and last layer of the network, with 
    estimated $95\%$-confidence intervals. 
    }
    \vspace{-3mm}
     \label{fig:Experiment1}
\end{figure*}




It can clearly be seen that  \statname{} consistently and significantly outperformed the CV representations.
While in both cases, the tests achieved a power of $1.0$ for large sample sizes ($m\approx 1000$) and/or high shift intensity (VI), \statname{} was capable of detecting the shift even with much lower sample sizes.
This was particularly striking for the low intensity level II, where the test with CV was completely unable to detect the shift, even with the largest sample size, while \statname{} was capable of reaching non-trivial power already for a medium sample size of $100$ and exceptional power for large sample size.
Note that the tests were always well-calibrated.
That is, the type I error remained below the significance level of $0.05$, indicated by the horizontal dashed black line in the figures.\par

To further support our claim that \statname{} outperforms CV on average in other scenarios, we provide, in Table \ref{table:Experiment1.5}, averaged results over all parameters except the sample size. 
Though the precise values obtained are not particularly informative (due to the aggregation over very different sets of hyper-parameters), the comparison between the two rows remains relevant. 
In the Supplementary Material, a more comprehensive experimental report (including, in particular, the CIFAR-10 and Imagenette datasets) further supports our claims.

\begin{table}[ht]
\begin{center}
\setlength{\tabcolsep}{0.45em}
\setlength{\tabcolsep}{0.45em}
\begin{tabular}{lrrrrrrr}
\multicolumn{8}{c}{Averaged power (\%)}\\
\hline
Sample size  & $10$ & $20$ & $50$ & $100$ & $200$ & $500$ & $1000$  \\
\hhline{========}
\statname &  $7.4$ & $17.1$ & $27.6$ & $40.7$ & $54.7$ & $71.4$ & $80.4$ \\
CV & $4.0$ & $9.8$ & $15.6$ & $24.7$ & $35.3$ & $49.7$ & $59.2$ \\
\hline
\end{tabular}
\vspace{3mm}
\caption{
Averaged test power of \statname{}{} and CV over all hyper-parameters except sample size (dataset, shift type, $\delta$, shift intensity). A $95\%$-confidence interval for the averaged powers has been estimated via bootstrapping and is, in all cases, strictly contained in a $\pm 0.1\%$ interval. }
\label{table:Experiment1.5}
\end{center}
\end{table}
\vspace{-8mm}

\paragraph{Shift intensity.}
\begin{figure*}[ht]
    \centering
    \textbf{Impact of Shift Intensity}\par
    \includegraphics[width=0.32\textwidth]{./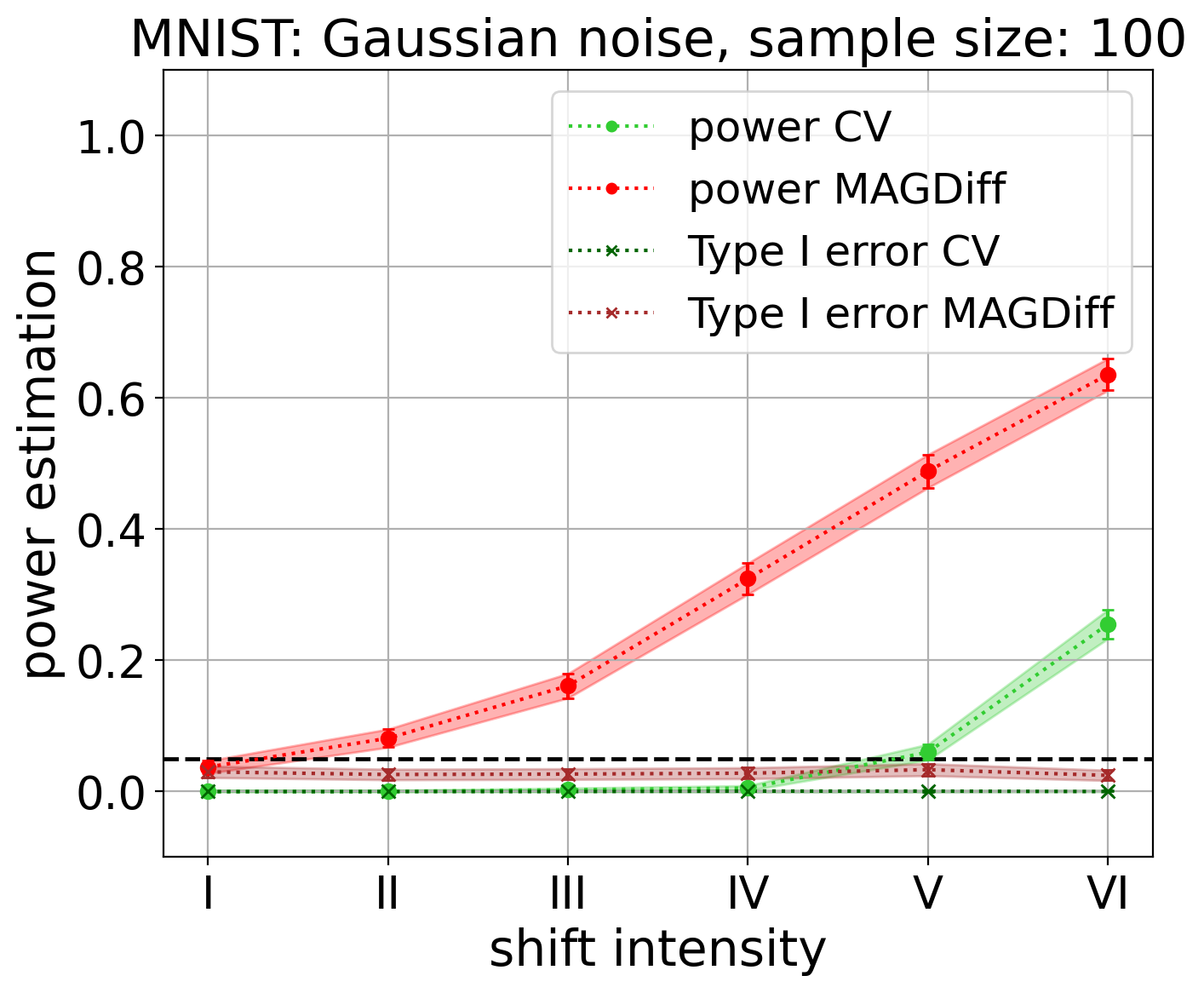}
    \includegraphics[width=0.32\textwidth]{./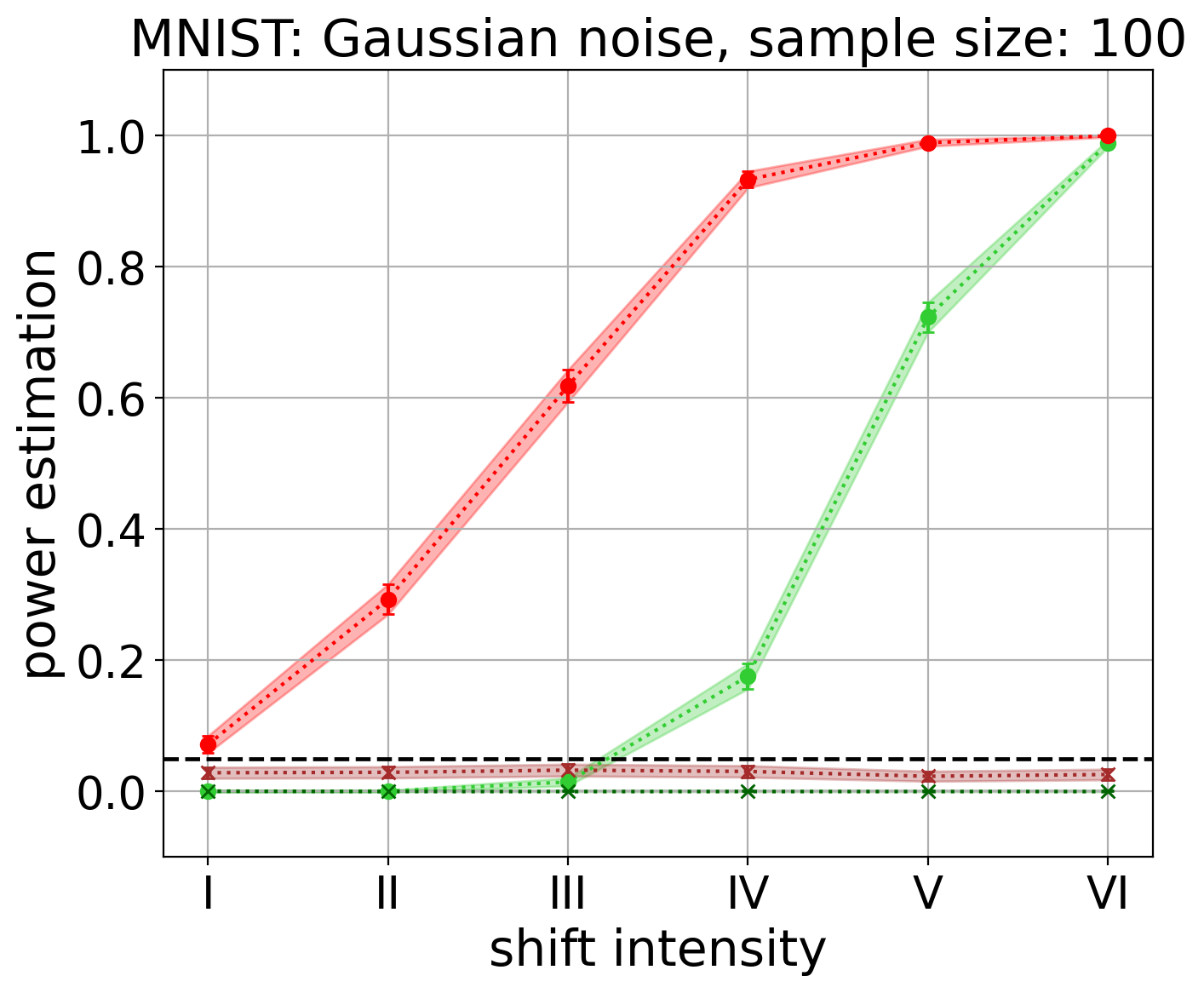}
   \includegraphics[width=0.32\textwidth]{./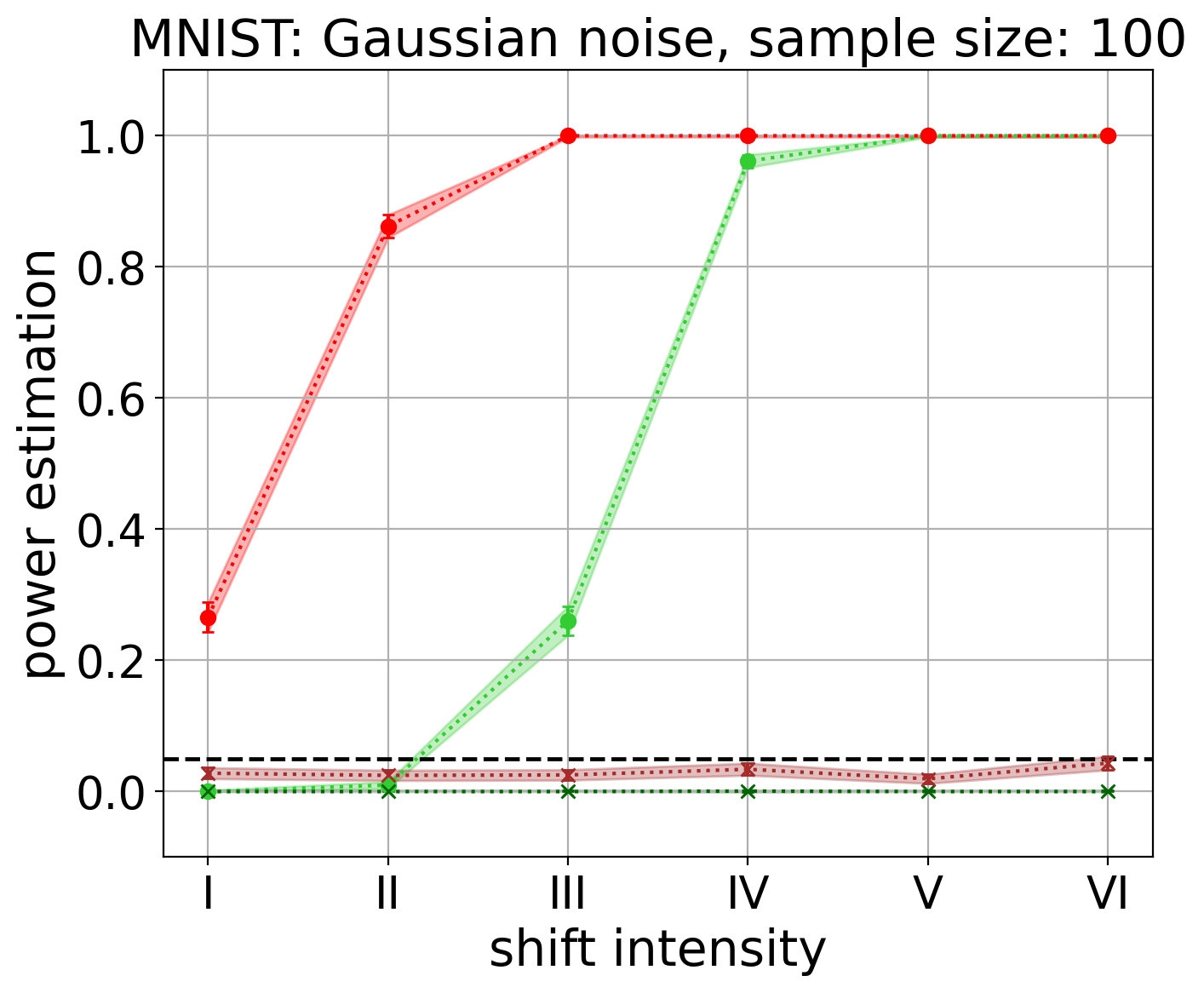}
    \caption{Power and type I error of the test with \statname{}~(red) and CV (green) features w.r.t.~the shift intensity for Gaussian noise on the MNIST dataset with sample size $100$ and $\delta = 0.25$ (left), $\delta = 0.5$ (middle), $\delta = 1.0$ (right), for the last dense layer.
    The estimated $95\%$-confidence intervals are displayed around the curves.}
     \label{fig:shift_intensity}
\end{figure*}

The first experiment suggests that \statname{} representations perform particularly well when the shift is hard to detect. In the second experiment, we further investigate the influence of the shift intensity level and $\delta$ (which is, in a sense, another measure of shift intensity) on the power of the tests. 
We chose a fixed sample size of $100$, which was shown to make for a challenging yet doable task.
The results in Figure \ref{fig:shift_intensity} confirm that our representations were much more sensitive to weak shifts than the CV, with differences in power greater than $80\%$ for some intensities.

\begin{table*}
\begin{center}
\textbf{Impact of Shift Type}\par
\setlength{\tabcolsep}{5pt}
\begin{tabular}{llcccccc}
\hline
\multicolumn{8}{c}{Power of the test (\%)}\\
\hline
 \multirow{2}{*}{Shift} & \multirow{2}{*}{Feat.}  & \multicolumn{6}{c}{Shift intensity}\\
\hhline{~~------}
 &   & I & II & III & IV & V & VI  \\
\hhline{========}
\multirow{2}{*}{GN}  & {\texttt{MD}} & \textcolor{red}{$ 7.2 \pm 1.3 $} & $ 29.3 \pm 2.3 $ & \textcolor{blue}{$ 61.9 \pm 2.5 $ }&\textcolor{blue}{ $ 93.3 \pm 1.3 $ }&\textcolor{blue}{ $ 98.9 \pm 0.5 $ }&\textcolor{blue}{ $ 100.0 - 0.0 $ } \\
 & CV & \textcolor{red}{$ 0.0 + 0.2 $} & \textcolor{red}{$ 0.1 \pm 0.1 $} &\textcolor{red}{ $ 1.5 \pm 0.6 $ }& $ 17.6 \pm 1.9 $ & \textcolor{blue}{$ 72.3 \pm 2.3 $ }& \textcolor{blue}{$ 98.9 \pm 0.5 $}   \\
\hline
\multirow{2}{*}{GB}  & \texttt{MD} & \textcolor{red}{$ 3.7 \pm 1.0 $ } & \textcolor{red}{$ 4.3 \pm 1.0 $ }& $ 27.7 \pm 2.3 $ & \textcolor{blue}{$ 63.1 \pm 2.4 $ }&\textcolor{blue}{ $ 85.0 \pm 1.8 $ }&\textcolor{blue}{ $ 92.4 \pm 1.3 $}  \\
 & CV & \textcolor{red}{$ 0.0 + 0.2 $ }&\textcolor{red}{ $ 0.0 + 0.2 $ }&\textcolor{red}{ $ 0.0 + 0.2 $ }&\textcolor{red}{ $ 0.4 \pm 0.3 $ }& \textcolor{red}{$ 1.3 \pm 0.6 $ }&\textcolor{red}{ $ 5.3 \pm 1.1 $}  \\
\hline
\multirow{2}{*}{IS}  & \texttt{MD} & $ 10.3 \pm 1.5 $ & $ 32.7 \pm 2.4 $ & \textcolor{blue}{$ 53.5 \pm 2.5 $ }&\textcolor{blue}{ $ 78.5 \pm 2.1 $ }& \textcolor{blue}{$ 90.6 \pm 1.5 $} &\textcolor{blue}{ $ 98.9 \pm 0.5 $ }\\
 & CV & \textcolor{red}{$ 0.0 + 0.2 $ }&\textcolor{red}{ $ 0.1 \pm 0.2 $ }& \textcolor{red}{$ 2.1 \pm 0.7 $ }& $ 11.5 \pm 1.6 $ & $ 37.0 \pm 2.4 $ & \textcolor{blue}{$ 86.3 \pm 1.7 $ } \\
\hline
\end{tabular}

\caption{Power of the two methods (our method, denoted as \texttt{MD}, and CV)  as a function of the shift intensity 
for the shift types Gaussian noise (GN), Gaussian blur (GB) and Image shift (IS)
on the MNIST dataset with $\delta = 0.5$, sample size $100$, for the last dense layer. Red indicates that the estimated power is below $10\%$, blue that it is above $50\%$.
The $95\%$-confidence intervals have been estimated as mentioned in Section~\ref{sec:expes}.
}
\vspace{-3mm}
\label{table:shift-intensity-MNIST}
\end{center}
\end{table*}

\begin{wraptable}{r}{7cm}
\center
\vspace{-7mm}
\textbf{Choice of Layer}\par
\begin{tabular}{lcccc}
\multicolumn{5}{c}{Averaged power (\%)}\\
\hline
 \multirow{2}{*}{Dataset}  & \multicolumn{4}{c}{Features}\\
\hhline{~----}
 & CV & $\ell_{-1}$ & $\ell_{-2}$ & $\ell_{-3}$ \\
\hhline{=====}
 MNIST & $25.1$ & $51.9$ & $53.0$ & $56.4$ \\
\hline
 FMNIST & $46.2$ & $44.9$ & $47.6$ & $53.7$ \\
\hline
\end{tabular}
\caption{Averaged performance of the various layers for \statname{}{} over all other parameters (for MNIST and FMNIST), compared to BBSD with CV.
A $95\%$ confidence interval for the averaged powers was estimated 
and is in all cases  contained in a $\pm 0.1\%$ interval.}\label{table:layer-averaged}
\vspace{-10mm}
\end{wraptable} 

\paragraph{Shift type.}

The previous experiments focused on the case of Gaussian noise; in this experiment, we investigate whether the results hold for other shift types.
As detailed in Table~\ref{table:shift-intensity-MNIST}, 
the test with \statname{} representations reacted to the shifts even for low intensities of I, II, and III for all shift types (Gaussian blur being the most difficult case), while the KS test with CV was unable to detect anything.
For medium to high intensities III, IV, V and VI, \statname{} again significantly outperformed the baseline and reaches powers close to $1$ for all shift types.
For the Gaussian blur, the shift remained practically undetectable using CV.

\paragraph{\statname{} with respect to different layers.}

The NN architecture we used with MNIST and FMNIST  had several dense layers before the output.
As a variation of our method, we investigate the effect on the shift detection when computing our \statname{} representations with respect to different layers\footnote{Since ResNet18 only has a single dense layer after its convolutional layers, there is no choice to be made in the case of CIFAR-10, SVHN and Imagenette.}.
More precisely, we consider the last three dense layers denoted by $\ell_{-1}$, $\ell_{-2}$ and $\ell_{-3}$, ordered from the closest to the network output ($\ell_{-1}$) to the third from the end ($\ell_{-3}$).
The averaged results over all parameters and noise types are  in Table~\ref{table:layer-averaged}.

In the case of MNIST we only observe a slight increase in power when considering layer $\ell_{-3}$ further from the output of the NN.
In the case of FMNIST, on the other hand, we clearly see a much more pronounced improvement when switching from $\ell_{-1}$ to $\ell_{-3}$.
This hints at the possibility that features derived from encodings further within the NN can, in some cases, be more pertinent to the task of shift detection than those closer to the output.

\section{Conclusion}

In this article, we derive new representations \statname{} from the activation graphs of a trained NN classifier.
We empirically show that using \statname{}~representations for  data set shift detection via coordinate-wise KS tests (with Bonferroni correction) significantly outperforms the baseline given by using confidence vectors  established in \cite{Lipton2018}, while remaining equally fast and easy to implement, making \statname{}~representations an efficient tool for this critical task.
\par

Our findings open many avenues for future investigations. We focused on classification of image data in this work, but our method is a general one and can 
be applied to other  settings. 
Moreover, adapting our method to regression tasks as well as to settings where shifts occur gradually is feasible and a starting point for future work. Finally, exploring variants of the \statname{} representations---considering several layers of the network at once, extending it to other types of layers, extracting finer topological information from the activation graphs, weighting the edges of the graph by backpropagating their contribution to the output, etc.---could also result in increased performance.

\bibliography{main}

\newcommand{\etalchar}[1]{$^{#1}$}
\begin{thebibliography}{WGS{\etalchar{+}}21b}

\bibitem[AFDP19]{alberge:hal-02172275}
Florence Alberge, Cl{\'e}ment Feutry, Pierre Duhamel, and Pablo Piantanida.
\newblock {Detecting covariate shift with Black Box predictors}.
\newblock In {\em {International Conference on Telecommunications (ICT 2019)}}, Hanoi, Vietnam, April 2019.

\bibitem[BSGEY22]{BarShalom}
Guy Bar-Shalom, Yonatan Geifman, and Ran El-Yaniv.
\newblock Distribution shift detection for deep neural networks, 2022.

\bibitem[CBK09]{AnomalyDetectionSurvey}
Varun Chandola, Arindam Banerjee, and Vipin Kumar.
\newblock Anomaly detection: A survey.
\newblock {\em ACM Comput. Surv.}, 41, 07 2009.

\bibitem[GBR{\etalchar{+}}12]{JMLR:v13:gretton12a}
Arthur Gretton, Karsten~M. Borgwardt, Malte~J. Rasch, Bernhard Sch{{\"o}}lkopf, and Alexander Smola.
\newblock A kernel two-sample test.
\newblock {\em Journal of Machine Learning Research}, 13(25):723--773, 2012.

\bibitem[GSH{\etalchar{+}}09]{gretton2009covariate}
Arthur Gretton, Alex Smola, Jiayuan Huang, Marcel Schmittfull, Karsten Borgwardt, and Bernhard Sch{\"o}lkopf.
\newblock Covariate shift by kernel mean matching.
\newblock {\em Dataset shift in machine learning}, 3(4):5, 2009.

\bibitem[GSH19]{gebhart2019characterizing}
Thomas Gebhart, Paul Schrater, and Alan Hylton.
\newblock Characterizing the shape of activation space in deep neural networks.
\newblock In {\em 2019 18th IEEE International Conference On Machine Learning And Applications (ICMLA)}, pages 1537--1542. IEEE, 2019.

\bibitem[Hec77]{Heckman1977}
James Heckman.
\newblock Sample selection bias as a specification error (with an application to the estimation of labor supply functions).
\newblock {\em National Bureau of Economic Research, Inc, NBER Working Papers}, 01 1977.

\bibitem[HG16]{hendrycks2016baseline}
Dan Hendrycks and Kevin Gimpel.
\newblock A baseline for detecting misclassified and out-of-distribution examples in neural networks.
\newblock {\em arXiv preprint arXiv:1610.02136}, 2016.

\bibitem[HPG{\etalchar{+}}17]{huang2017adversarial}
Sandy Huang, Nicolas Papernot, Ian Goodfellow, Yan Duan, and Pieter Abbeel.
\newblock Adversarial attacks on neural network policies.
\newblock {\em arXiv preprint arXiv:1702.02284}, 2017.

\bibitem[HS23]{heumann2023introduction}
C.~Heumann and M.~Schomaker.
\newblock {\em Introduction to Statistics and Data Analysis: With Exercises, Solutions and Applications in R}.
\newblock Springer International Publishing, 2023.

\bibitem[HTLM21]{HORTA2021109}
Vitor~A.C. Horta, Ilaria Tiddi, Suzanne Little, and Alessandra Mileo.
\newblock Extracting knowledge from deep neural networks through graph analysis.
\newblock {\em Future Generation Computer Systems}, 120:109--118, 2021.

\bibitem[HZRS15]{Resnet}
Kaiming He, X.~Zhang, Shaoqing Ren, and Jian Sun.
\newblock Deep residual learning for image recognition.
\newblock {\em 2016 IEEE Conference on Computer Vision and Pattern Recognition (CVPR)}, pages 770--778, 2015.

\bibitem[Ima23]{Imagenette}
Imagenette dataset.
\newblock \url{ https://github.com/fastai/imagenette}, 2023.
\newblock Accessed: 10/05/2023.

\bibitem[JPLB22]{Jang2022}
Sooyong Jang, Sangdon Park, Insup Lee, and Osbert Bastani.
\newblock Sequential covariate shift detection using classifier two-sample tests.
\newblock In Kamalika Chaudhuri, Stefanie Jegelka, Le~Song, Csaba Szepesvari, Gang Niu, and Sivan Sabato, editors, {\em Proceedings of the 39th International Conference on Machine Learning}, volume 162 of {\em Proceedings of Machine Learning Research}, pages 9845--9880. PMLR, 17--23 Jul 2022.

\bibitem[Kal61]{kallianpur1961topology}
Gopinath Kallianpur.
\newblock The topology of weak convergence of probability measures.
\newblock {\em Journal of Mathematics and Mechanics}, pages 947--969, 1961.

\bibitem[KH09]{CIFAR-10}
Alex Krizhevsky and Geoffrey Hinton.
\newblock Learning multiple layers of features from tiny images.
\newblock Technical Report~0, University of Toronto, Toronto, Ontario, 2009.

\bibitem[KRSW21]{Kim2021}
Ilmun Kim, Aaditya Ramdas, Aarti Singh, and Larry Wasserman.
\newblock {Classification accuracy as a proxy for two-sample testing}.
\newblock {\em The Annals of Statistics}, 49(1):411--434, 2021.

\bibitem[LBBH98]{MNIST}
Yann LeCun, L{\'e}on Bottou, Yoshua Bengio, and Patrick Haffner.
\newblock Gradient-based learning applied to document recognition.
\newblock {\em Proc. IEEE}, 86:2278--2324, 1998.

\bibitem[LIC{\etalchar{+}}21]{Lacombe2021}
Th{\'{e}}o Lacombe, Yuichi Ike, Mathieu Carri{\`{e}}re, Fr{\'{e}}d{\'{e}}ric Chazal, Marc Glisse, and Yuhei Umeda.
\newblock {Topological Uncertainty: monitoring trained neural networks through persistence of activation graphs}.
\newblock In {\em 30th International Joint Conference on Artificial Intelligence (IJCAI 2021)}, pages 2666--2672. International Joint Conferences on Artificial Intelligence Organization, 2021.

\bibitem[LWS18]{Lipton2018}
Zachary Lipton, Yu-Xiang Wang, and Alexander Smola.
\newblock Detecting and correcting for label shift with black box predictors.
\newblock In {\em 35th International Conference on Machine Learning (ICML 2018)}, volume~80, pages 3122--3130. PMLR, 2018.

\bibitem[NWC{\etalchar{+}}11]{SVHN}
Yuval Netzer, Tao Wang, Adam Coates, Alessandro Bissacco, Bo~Wu, and Andrew~Y. Ng.
\newblock Reading digits in natural images with unsupervised feature learning.
\newblock In {\em NIPS Workshop on Deep Learning and Unsupervised Feature Learning 2011}, 2011.

\bibitem[Res18]{ResNet18_pretrained}
Pre-trained weights for resnet18.
\newblock \url{https://pytorch.org/vision/main/models/generated/torchvision.models.resnet18.html}, 2018.
\newblock Accessed: 10/05/2023.

\bibitem[RGL19]{Rabanser2019}
Stephan Rabanser, Stephan G\"{u}nnemann, and Zachary Lipton.
\newblock Failing loudly: An empirical study of methods for detecting dataset shift.
\newblock In {\em Advances in Neural Information Processing Systems 33 (NeurIPS 2019)}, pages 1396--1408. Curran Associates, Inc., 2019.

\bibitem[RRP{\etalchar{+}}14]{Ramdas2014OnTD}
Aaditya Ramdas, Sashank~J. Reddi, Barnab{\'a}s P{\'o}czos, Aarti Singh, and Larry~A. Wasserman.
\newblock On the decreasing power of kernel and distance based nonparametric hypothesis tests in high dimensions.
\newblock In {\em AAAI Conference on Artificial Intelligence}, 2014.

\bibitem[SJP{\etalchar{+}}12]{CausalAnticausal}
Bernhard Schölkopf, Dominik Janzing, Jonas Peters, Eleni Sgouritsa, Kun Zhang, and Joris Mooij.
\newblock On causal and anticausal learning.
\newblock {\em Proceedings of the 29th International Conference on Machine Learning, ICML 2012}, 2, 06 2012.

\bibitem[SLD02]{saerens2002adjusting}
Marco Saerens, Patrice Latinne, and Christine Decaestecker.
\newblock Adjusting the outputs of a classifier to new a priori probabilities: a simple procedure.
\newblock {\em Neural computation}, 14(1):21--41, 2002.

\bibitem[Smi39]{smirnov1939estimation}
Nikolai~V Smirnov.
\newblock On the estimation of the discrepancy between empirical curves of distribution for two independent samples.
\newblock {\em Bull. Math. Univ. Moscou}, 2(2):3--14, 1939.

\bibitem[SSL18]{Shafaei2018DoesYM}
Alireza Shafaei, Mark~W. Schmidt, and J.~Little.
\newblock Does your model know the digit 6 is not a cat? a less biased evaluation of "outlier" detectors.
\newblock {\em ArXiv}, abs/1809.04729, 2018.

\bibitem[Sto09]{Storkey}
Amos Storkey.
\newblock When training and test sets are different: Characterizing learning transfer.
\newblock {\em Dataset Shift in Machine Learning}, pages 3--28, 01 2009.

\bibitem[UKY20]{uehara2020off}
Masatoshi Uehara, Masahiro Kato, and Shota Yasui.
\newblock Off-policy evaluation and learning for external validity under a covariate shift.
\newblock {\em Advances in Neural Information Processing Systems}, 33:49--61, 2020.

\bibitem[VG95]{Voss1073}
Simon Voss and Steve George.
\newblock Multiple significance tests.
\newblock {\em BMJ}, 310(6986):1073, 1995.

\bibitem[WGS{\etalchar{+}}21a]{wiles2021fine}
Olivia Wiles, Sven Gowal, Florian Stimberg, Sylvestre Alvise-Rebuffi, Ira Ktena, Krishnamurthy Dvijotham, and Taylan Cemgil.
\newblock A fine-grained analysis on distribution shift.
\newblock {\em arXiv preprint arXiv:2110.11328}, 2021.

\bibitem[WGS{\etalchar{+}}21b]{ShiftGeneral}
Olivia Wiles, Sven Gowal, Florian Stimberg, Sylvestre Alvise-Rebuffi, Ira Ktena, Krishnamurthy Dvijotham, and Taylan Cemgil.
\newblock A fine-grained analysis on distribution shift, 2021.

\bibitem[XRV17]{xiao2017fashionmnist}
Han Xiao, Kashif Rasul, and Roland Vollgraf.
\newblock Fashion-mnist: a novel image dataset for benchmarking machine learning algorithms, 2017.
\newblock cite arxiv:1708.07747Comment: Dataset is freely available at https://github.com/zalandoresearch/fashion-mnist Benchmark is available at http://fashion-mnist.s3-website.eu-central-1.amazonaws.com/.

\bibitem[ZGB13]{Zaremba2013}
Wojciech Zaremba, Arthur Gretton, and Matthew Blaschko.
\newblock B-test: A non-parametric, low variance kernel two-sample test.
\newblock In C.J. Burges, L.~Bottou, M.~Welling, Z.~Ghahramani, and K.Q. Weinberger, editors, {\em Advances in Neural Information Processing Systems}, volume~26. Curran Associates, Inc., 2013.

\bibitem[ZSMW13]{Zhang2013}
Kun Zhang, Bernhard Sch\"{o}lkopf, Krikamol Muandet, and Zhikun Wang.
\newblock Domain adaptation under target and conditional shift.
\newblock In {\em Proceedings of the 30th International Conference on International Conference on Machine Learning - Volume 28}, ICML'13, page III–819–III–827. JMLR.org, 2013.

\end{thebibliography}

\newpage

\appendix

\section{Appendix}

\subsection{Additional Information on the Experimental Procedures.}
\paragraph{Datasets.}
The number of samples in the clean sets (\textit{i.e.}, the test sets) of the datasets we investigated are as follows:
\begin{itemize}
    \item MNIST, FMNIST and CIFAR-10: $10'000$,
    \item SVHN: $26'032$.
    \item Imagenette: $3'925$.
\end{itemize}
A sample from each dataset can be seen in Figure \ref{fig:dataset_illustration}.

\begin{figure*}[h]
    \centering
    
    \includegraphics[width=0.13\textwidth]{./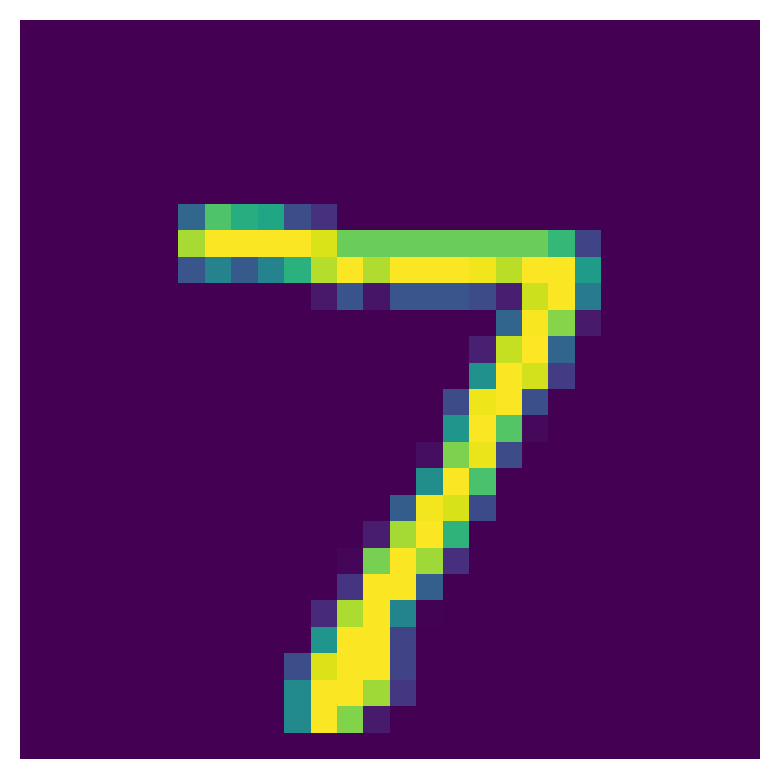}
    \includegraphics[width=0.13\textwidth]{./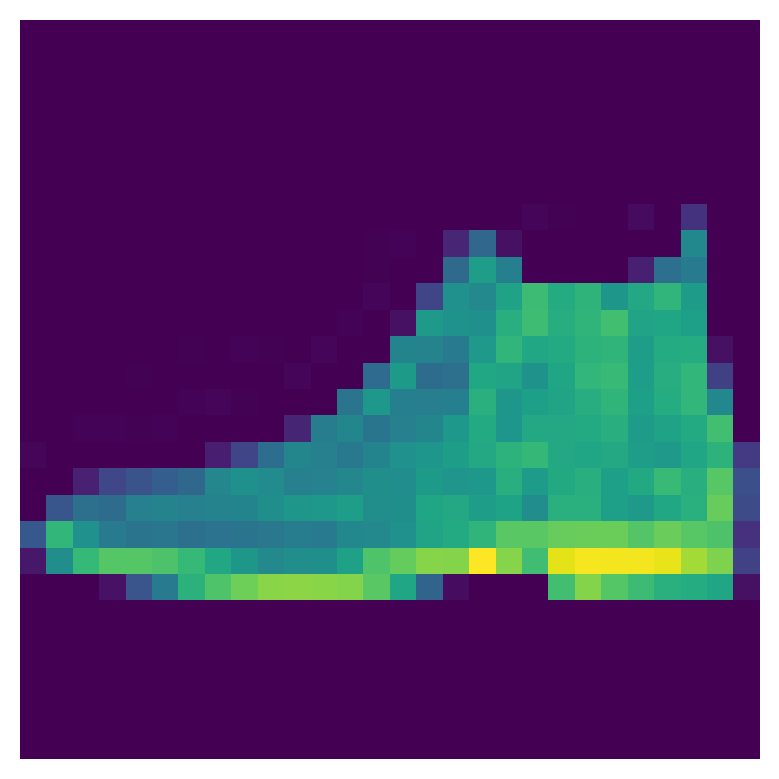}
   \includegraphics[width=0.13\textwidth]{./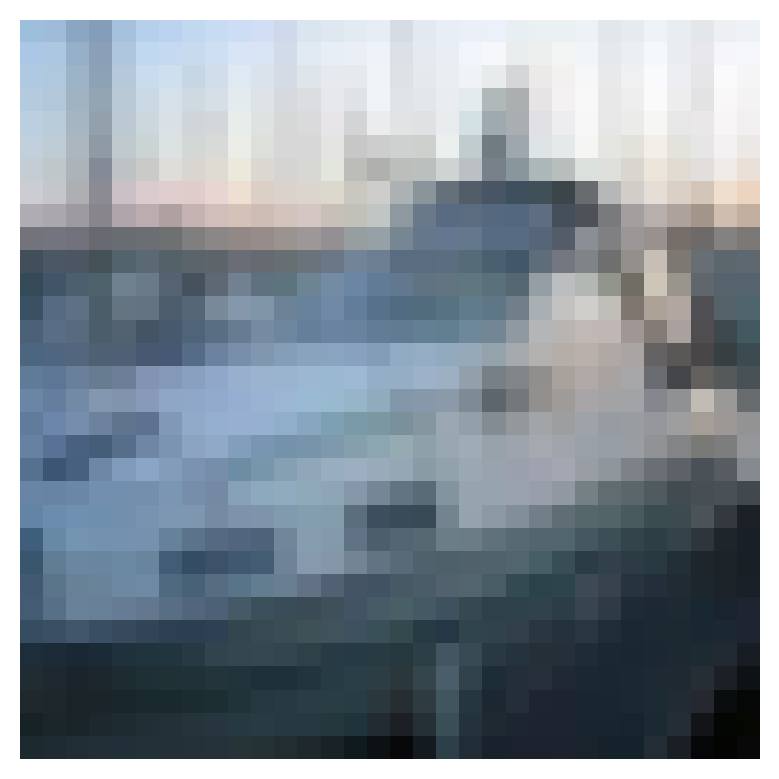}
    \includegraphics[width=0.13\textwidth]{./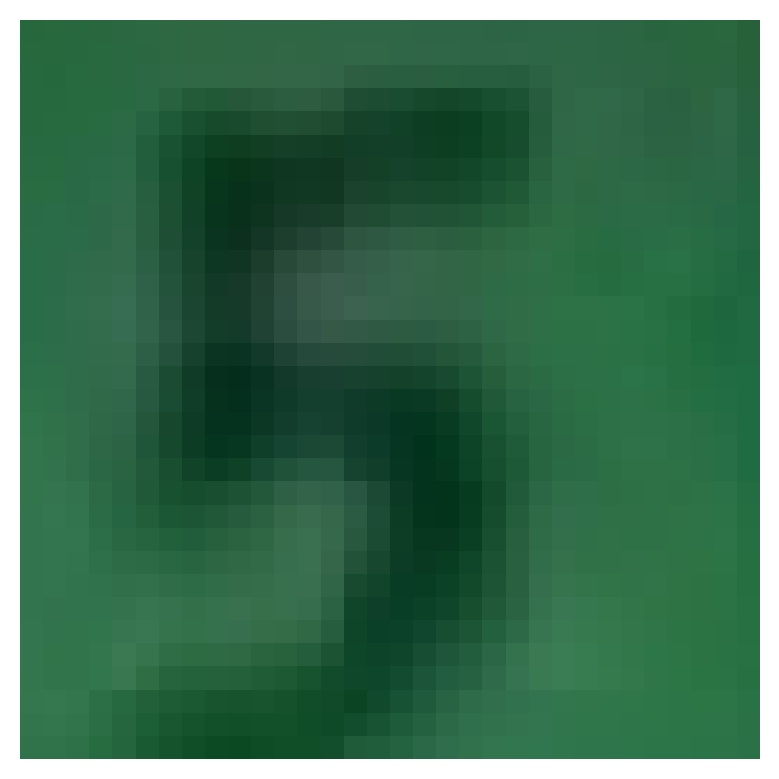}
    \includegraphics[width=0.165\textwidth]{./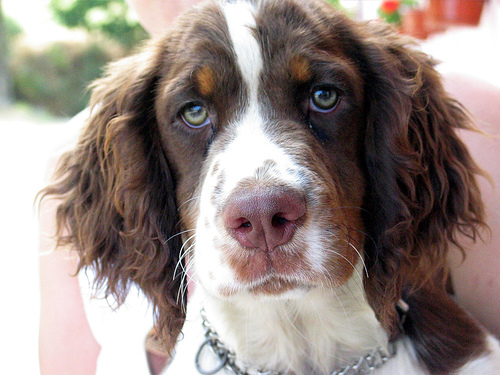}

   \caption{
    Sample images from all datasets used in the paper.
    From left to right: MNIST, FashionMNIST, CIFAR-10, SVHN and Imagenette.
   }
     \label{fig:dataset_illustration}
\end{figure*}

\paragraph{Shifts.}
In order to illustrate the effect of the shift types described in Section~\ref{subsec:expes_settings} of the main article, we show the effects of the shifts and their intensities on the MNIST dataset in Figure~\ref{fig:supplementary_shift_types_MNIST}.
For the detailed parameters of each shift intensity (per dataset) we refer to the associated code.

\begin{figure}[h]
    \centering
    \includegraphics[width= 0.625\columnwidth]{./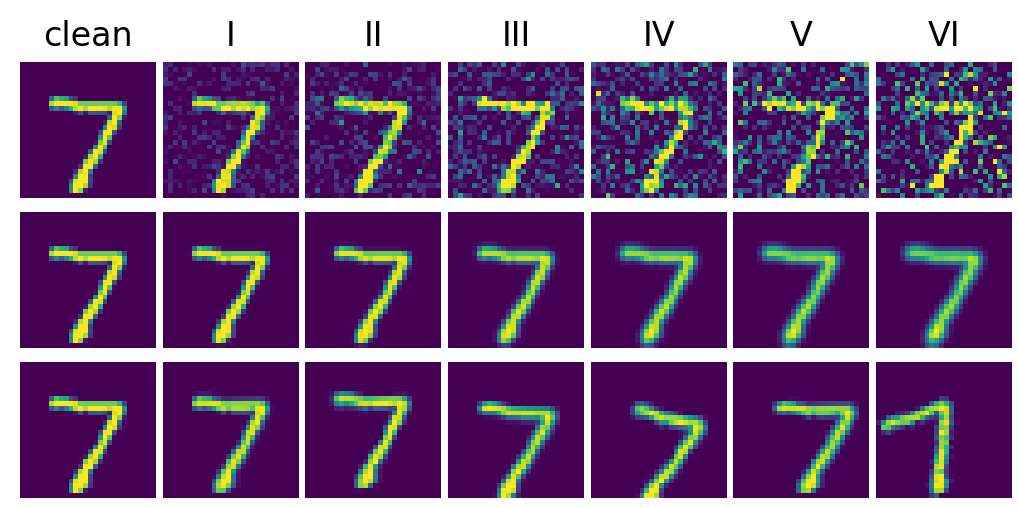}
   \caption{Illustration of intensities of the shift types --- Gaussian noise (top row), Gaussian blur (middle row) and Image shift (bottom row) --- on a sample from the MNIST dataset.}
     \label{fig:supplementary_shift_types_MNIST}
\end{figure}


\paragraph{A remark on the experimental setup}
As explained in Subsection~\ref{subsec:expes_settings} of the main text, we perform each of our statistical tests on random subsamples from a clean set CS and a shifted set SS that has been crafted from CS by applying a controlled shift to its elements.
We could alternatively have crafted SS by applying the shift to another ``clean'' set drawn from the same distribution as CS.
The advantage of our current setup is that it ensures that the only difference between the distributions of CS and SS comes from the shift, which we completely control, as opposed to some (small) pre-shift difference between the two clean sets that could arise from the sampling process in this alternative setup.
In practice, however, both setups should yield extremely similar results; indeed, we randomly subsample from CS and SS to get the sets on which we perform the tests, and those sets are typically of size $\sim 100$, whereas CS and SS are of size $3'925$, $10'000$ or $26'032$ depending on the dataset. Hence the
subsampled sets from the CS and from SS should have almost no element in common
(by which we mean the same picture appearing shifted in one of the sets and unshifted
in the other), as in the other setup.

\paragraph{An example of activation graph computation}
We illustrate the definition of the activation graph from Subsection \ref{subsec:activatin_graphs_def}, using the same notations.
A layer $f_\ell : \R^{n_\ell} \rightarrow \R^{n_{\ell+1}}$ is shown on the left of Figure \ref{fig:activation_graph_computation}. Here $n_\ell =2$ and $n_{\ell+1} = 3$, and the weight matrix associated to $f_\ell$ is
$W_\ell = \begin{bmatrix}
-1 & 0 \\
2 & 4 \\
3 & -2 
\end{bmatrix}$. Given an input $x$ to the network that corresponds to an input $x_\ell = (1,-2) $ to the layer $f_\ell$, the associated activation graph $G_\ell(x)$ is shown on the right of Figure \ref{fig:activation_graph_computation}.

\begin{figure}[h]
    \centering
    \includegraphics[width= 0.35\columnwidth]{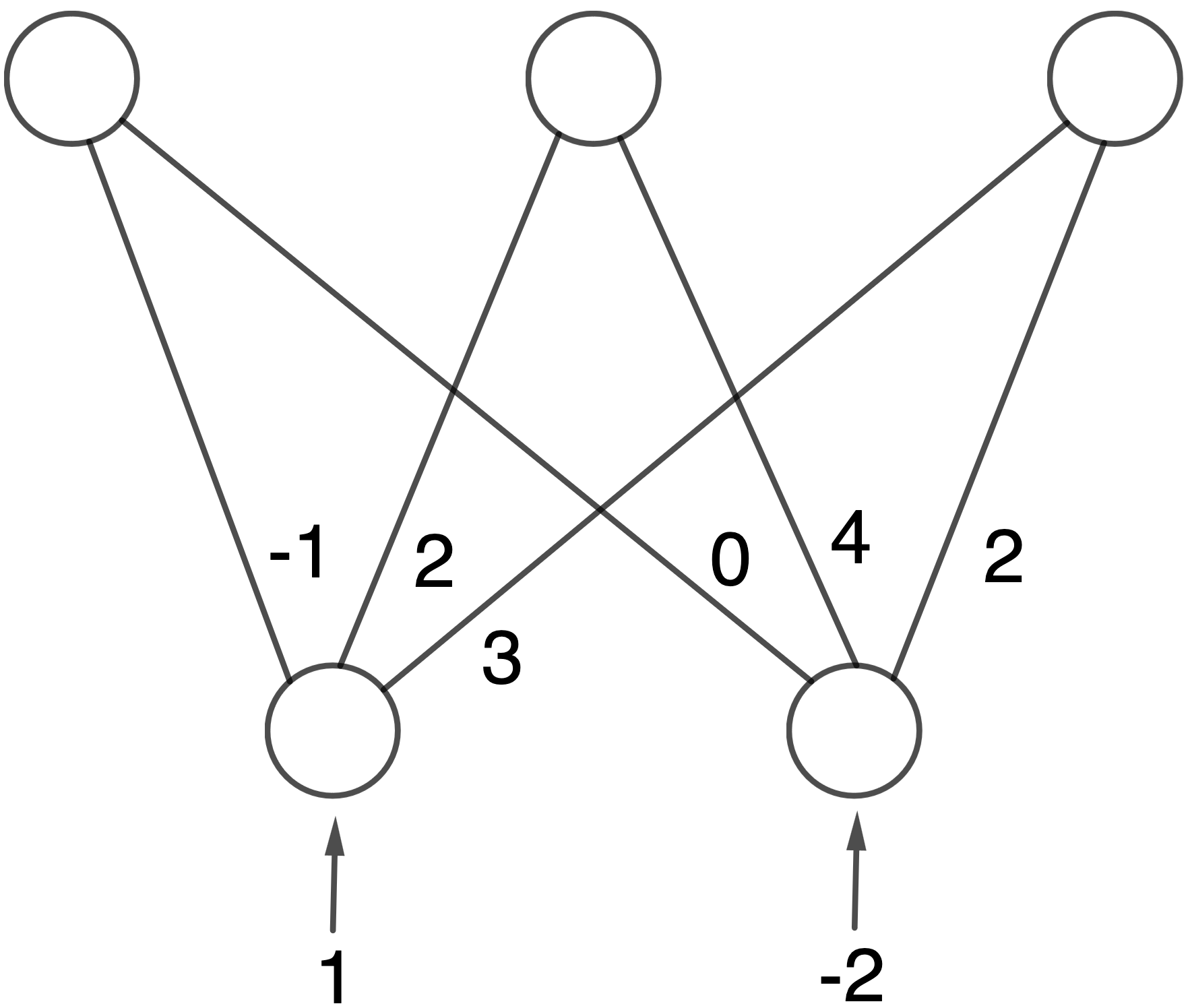}
    \includegraphics[width= 0.35\columnwidth]{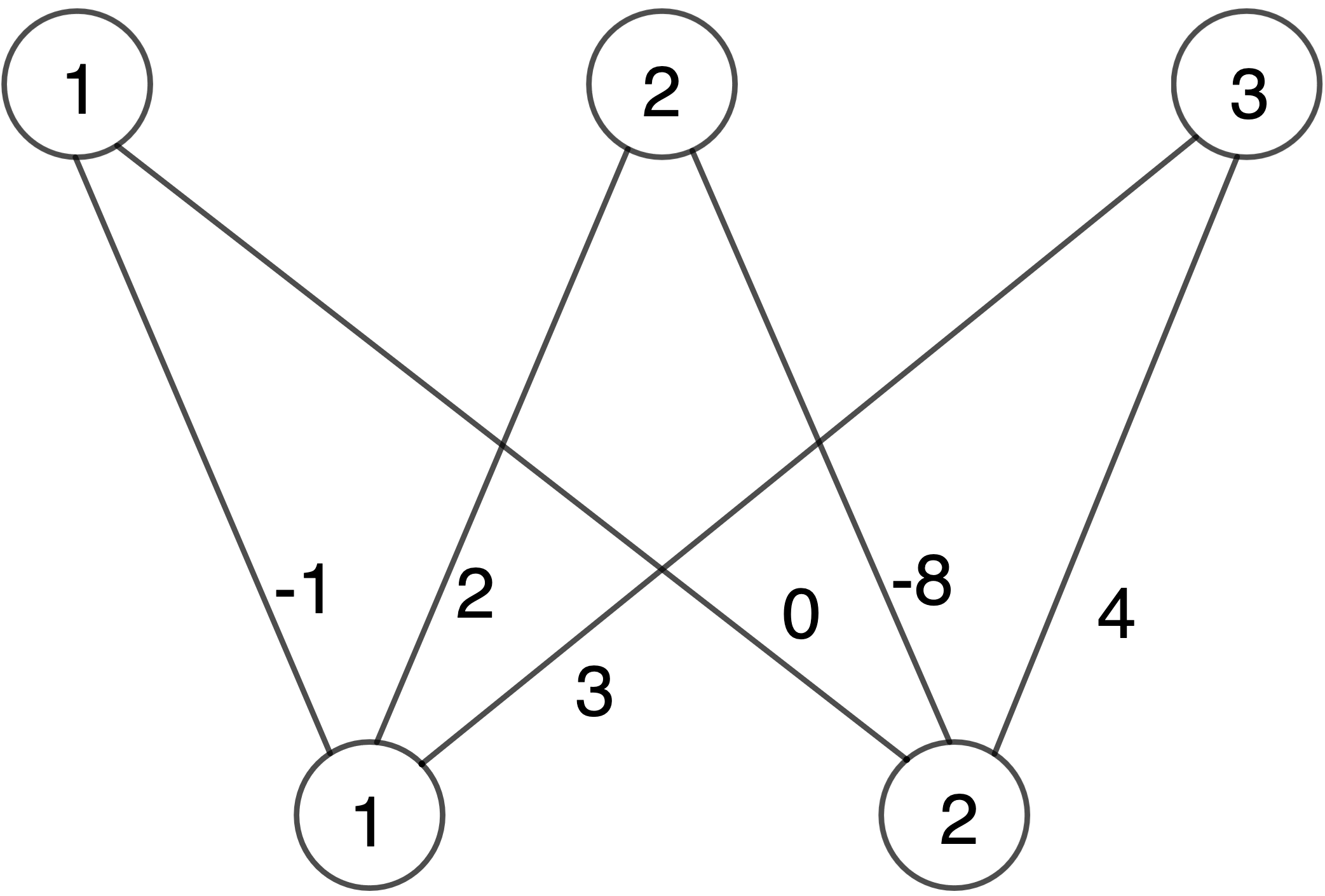}
   \caption{On the left, a neural network layer $f_\ell$ with the associated weights and an input vector $x_\ell$. On the right, the associated activation graph.}
     \label{fig:activation_graph_computation}
\end{figure}

\newpage

\subsection{Additional Experimental Results}



\paragraph{Sample size.}

To further support our claims, we include comprehensive results of the power with respect to the sample size for the MNIST,  Imagenette and CIFAR-10  datasets in Tables~\ref{table:MNIST-wrt-sample-size}, \ref{table:Imagenette-wrt-sample-size} and \ref{table:CIFAR10-wrt-sample-size}.
We provide all results for the shift intensities II, IV and VI, for all shift types, and fixed $\delta =0.5$ for MNIST, CIFAR-10, respectively $\delta = 1.0$ for Imagenette (the $\delta$ were chosen so that the task is comparatively easy at high shift intensity and hard at low shift intensity for both methods).

\paragraph{Shift intensity.}
In Figures~\ref{fig:shift_intensity_MNIST_supplementary}, \ref{fig:shift_intensity_FMNIST_supplementary} and \ref{fig:shift_intensity_CIFAR10_SVHN_Imagenette_supplementary}, we collect the plots of the estimated powers of the test for multiple cases, in addition to the one presented in the main article.
Note that the only situation in which \statname{} is very slightly outperformed by the baseline CV, is the case of FMNIST, when we consider \statname{} representations of layer $l_{-1}$.
In all other cases, shift detection using \statname{} representations clearly outperforms the baseline of CV by a large margin.

\paragraph{Model accuracy.}

In Figure~\ref{fig:model_accuracy_supplementary} we show the impact of the shift type and intensity on the model accuracy.
It is interesting to note that, even in cases where the model accuracy is only minimally impacted (\textit{e.g.},~for Gaussian blur on the MNIST and FMNIST datasets), our method can still reliably detect the presence of the shift.

\begin{table*}
\small{
\begin{center}
\setlength{\tabcolsep}{0.4em}
\begin{tabular}{lcr c ccccccc}
 & & & & \multicolumn{7}{c}{MNIST}\\
\hhline{~~~~-------}
 \multirow{2}{*}{Shift} &  \multirow{2}{*}{Int.} & \multirow{2}{*}{Feat.}  & &\multicolumn{7}{c}{Sample size}\\
\hhline{~~~~-------}
 &  & &  & $10$ & $20$ & $50$ & $100$ & $200$ & $500$ & $1000$ \\
\hhline{===~=======}
\multirow{6}{*}{GN}  & \multirow{2}{*}{II} & \texttt{MD} &  &  $ 2.7 \pm 0.8 $ & $ 7.7 \pm 1.3 $ & $ 11.2 \pm 1.6 $ & $ 29.3 \pm 2.3 $ & $ 55.9 \pm 2.5 $ & $ 94.7 \pm 1.1 $ & $ 99.9 \pm 0.1 $ \\
 & & CV &  &  $ 0.0 + 0.2 $ & $ 0.1 \pm 0.1 $ & $ 0.0 + 0.2 $ & $ 0.1 \pm 0.1 $ & $ 0.3 \pm 0.3 $ & $ 1.5 \pm 0.6 $ & $ 6.1 \pm 1.2 $\\
  & \multirow{2}{*}{IV} & \texttt{MD} &  &  $ 8.6 \pm 1.4 $ & $ 26.1 \pm 2.2 $ & $ 60.9 \pm 2.5 $ & $ 93.3 \pm 1.3 $ & $ 100.0 - 0.0 $ & $ 100.0 - 0.0 $ & $ 100.0 - 0.0 $ \\
 & & CV &  &  $ 0.1 \pm 0.2 $ & $ 1.0 \pm 0.5 $ & $ 3.8 \pm 1.0 $ & $ 17.6 \pm 1.9 $ & $ 58.4 \pm 2.5 $ & $ 99.3 \pm 0.4 $ & $ 100.0 - 0.0 $\\
   & \multirow{2}{*}{VI} & \texttt{MD} &  &  $ 16.5 \pm 1.9 $ & $ 54.2 \pm 2.5 $ & $ 93.5 \pm 1.3 $ & $ 100.0 - 0.0 $ & $ 100.0 - 0.0 $ & $ 100.0 - 0.0 $ & $ 100.0 - 0.0 $ \\
 & & CV &  &  $ 3.7 \pm 1.0 $ & $ 20.4 \pm 2.0 $ & $ 65.6 \pm 2.4 $ & $ 98.9 \pm 0.5 $ & $ 100.0 - 0.0 $ & $ 100.0 - 0.0 $ & $ 100.0 - 0.0 $  \\
 
\hhline{---~-------}
\multirow{6}{*}{GB}  & \multirow{2}{*}{II} & \texttt{MD} &  &  $ 1.9 \pm 0.7 $ & $ 2.7 \pm 0.8 $ & $ 3.3 \pm 0.9 $ & $ 4.3 \pm 1.0 $ & $ 9.9 \pm 1.5 $ & $ 22.2 \pm 2.1 $ & $ 40.7 \pm 2.5 $ \\
 & & CV &  &  $ 0.0 + 0.2 $ & $ 0.1 \pm 0.1 $ & $ 0.0 + 0.2 $ & $ 0.0 + 0.2 $ & $ 0.0 + 0.2 $ & $ 0.1 \pm 0.1 $ & $ 0.1 \pm 0.1 $\\
  & \multirow{2}{*}{IV} & \texttt{MD} &  &  $ 4.8 \pm 1.1 $ & $ 13.1 \pm 1.7 $ & $ 30.3 \pm 2.3 $ & $ 63.1 \pm 2.4 $ & $ 93.9 \pm 1.2 $ & $ 100.0 - 0.0 $ & $ 100.0 - 0.0 $ \\
 & & CV &  &  $ 0.0 + 0.2 $ & $ 0.0 + 0.2 $ & $ 0.1 \pm 0.2 $ & $ 0.3 \pm 0.3 $ & $ 1.5 \pm 0.6 $ & $ 11.3 \pm 1.6 $ & $ 44.9 \pm 2.5 $\\
   & \multirow{2}{*}{VI} & \texttt{MD} &  &  $ 9.2 \pm 1.5 $ & $ 25.1 \pm 2.2 $ & $ 57.5 \pm 2.5 $ & $ 92.4 \pm 1.3 $ & $ 100.0 - 0.0 $ & $ 100.0 - 0.0 $ & $ 100.0 - 0.0 $ \\
 & & CV &  &  $ 0.1 \pm 0.1 $ & $ 0.5 \pm 0.4 $ & $ 1.4 \pm 0.6 $ & $ 5.1 \pm 1.1 $ & $ 22.1 \pm 2.1 $ & $ 88.5 \pm 1.6 $ & $ 100.0 - 0.0 $  \\
\hhline{---~-------}

\multirow{6}{*}{IS}  & \multirow{2}{*}{II} & \texttt{MD} &  &  $ 3.5 \pm 0.9 $ & $ 8.6 \pm 1.4 $ & $ 15.1 \pm 1.8 $ & $ 32.7 \pm 2.4 $ & $ 66.3 \pm 2.4 $ & $ 98.0 \pm 0.7 $ & $ 100.0 - 0.0 $ \\
 & & CV &  &  $ 0.0 + 0.2 $ & $ 0.0 + 0.2 $ & $ 0.0 + 0.2 $ & $ 0.1 \pm 0.2 $ & $ 0.7 \pm 0.4 $ & $ 6.9 \pm 1.3 $ & $ 28.4 \pm 2.3 $  \\
  & \multirow{2}{*}{IV} & \texttt{MD} &  &  $ 5.6 \pm 1.2 $ & $ 18.5 \pm 2.0 $ & $ 42.1 \pm 2.5 $ & $ 78.5 \pm 2.1 $ & $ 98.0 \pm 0.7 $ & $ 100.0 - 0.0 $ & $ 100.0 - 0.0 $ \\
 & & CV &  &  $ 0.1 \pm 0.2 $ & $ 0.9 \pm 0.5 $ & $ 2.4 \pm 0.8 $ & $ 15.5 \pm 1.8 $ & $ 50.0 \pm 2.5 $ & $ 99.5 \pm 0.3 $ & $ 100.0 - 0.0 $ \\
   & \multirow{2}{*}{VI} & \texttt{MD} &  &  $ 10.4 \pm 1.5 $ & $ 34.0 \pm 2.4 $ & $ 72.6 \pm 2.3 $ & $ 98.9 \pm 0.5 $ & $ 100.0 - 0.0 $ & $ 100.0 - 0.0 $ & $ 100.0 - 0.0 $ \\
 & & CV &  &  $ 1.3 \pm 0.6 $ & $ 7.3 \pm 1.3 $ & $ 31.7 \pm 2.4 $ & $ 83.3 \pm 1.9 $ & $ 100.0 - 0.0 $ & $ 100.0 - 0.0 $ & $ 100.0 - 0.0 $ \\
\hline
\end{tabular}
\caption{Power of the statistical test with \statname{} (abbreviated as \texttt{MD}) and CV representations for the shift types Gaussian noise (GN), Gaussian blur (GB) and Image shift (IS), three different shift intensities (II, IV, VI) and fixed $\delta = 0.5$ for the MNIST dataset. The estimated $95\%$-confidence intervals are indicated.}
\label{table:MNIST-wrt-sample-size}
\end{center}
}
\end{table*}

\begin{table*}
\small{
\begin{center}
\setlength{\tabcolsep}{0.4em}
\begin{tabular}{lcr c ccccccc}
 & & & & \multicolumn{7}{c}{Imagenette}\\
\hhline{~~~~-------}
 \multirow{2}{*}{Shift} &  \multirow{2}{*}{Int.} & \multirow{2}{*}{Feat.}  & &\multicolumn{7}{c}{Sample size}\\
\hhline{~~~~-------}
 &  & &  & $10$ & $20$ & $50$ & $100$ & $200$ & $500$ & $1000$ \\
\hhline{===~=======}
\multirow{6}{*}{GN}  & \multirow{2}{*}{II} & \texttt{MD} &  &  $ 0.7 \pm 0.4 $ & $ 2.2 \pm 0.7 $ & $ 6.3 \pm 1.2 $ & $ 16.7 \pm 1.9 $ & $ 46.2 \pm 2.5 $ & $ 93.3 \pm 1.3 $ & $ 99.9 \pm 0.1 $\\
 & & CV &  &  $ 2.4 \pm 0.8 $ & $ 4.5 \pm 1.1 $ & $ 2.7 \pm 0.8 $ & $ 4.1 \pm 1.0 $ & $ 4.2 \pm 1.0 $ & $ 7.3 \pm 1.3 $ & $ 10.3 \pm 1.5 $\\
  & \multirow{2}{*}{IV} & \texttt{MD} &  &  $ 0.9 \pm 0.5 $ & $ 3.6 \pm 0.9 $ & $ 7.3 \pm 1.3 $ & $ 22.1 \pm 2.1 $ & $ 60.7 \pm 2.5 $ & $ 98.5 \pm 0.6 $ & $ 100.0 - 0.0 $\\
 & & CV &  &  $ 1.6 \pm 0.6 $ & $ 4.1 \pm 1.0 $ & $ 3.2 \pm 0.9 $ & $ 3.9 \pm 1.0 $ & $ 4.7 \pm 1.1 $ & $ 7.5 \pm 1.3 $ & $ 10.1 \pm 1.5 $\\
   & \multirow{2}{*}{VI} & \texttt{MD} &  &  $ 0.9 \pm 0.5 $ & $ 3.6 \pm 0.9 $ & $ 7.3 \pm 1.3 $ & $ 22.1 \pm 2.1 $ & $ 60.7 \pm 2.5 $ & $ 98.5 \pm 0.6 $ & $ 100.0 - 0.0 $ \\
 & & CV &  &  $ 2.2 \pm 0.7 $ & $ 4.9 \pm 1.1 $ & $ 3.5 \pm 0.9 $ & $ 5.3 \pm 1.1 $ & $ 6.8 \pm 1.3 $ & $ 15.5 \pm 1.8 $ & $ 33.5 \pm 2.4 $  \\
\hhline{---~-------}

\multirow{6}{*}{GB}  & \multirow{2}{*}{II} & \texttt{MD} &  &  $ 0.5 \pm 0.3 $ & $ 2.5 \pm 0.8 $ & $ 4.1 \pm 1.0 $ & $ 15.3 \pm 1.8 $ & $ 40.6 \pm 2.5 $ & $ 91.7 \pm 1.4 $ & $ 99.9 \pm 0.2 $ \\
 & & CV &  &  $ 2.1 \pm 0.7 $ & $ 3.7 \pm 1.0 $ & $ 3.2 \pm 0.9 $ & $ 3.7 \pm 1.0 $ & $ 7.0 \pm 1.3 $ & $ 11.3 \pm 1.6 $ & $ 18.2 \pm 2.0 $\\
  & \multirow{2}{*}{IV} & \texttt{MD} &  &  $ 1.0 \pm 0.5 $ & $ 3.5 \pm 0.9 $ & $ 9.1 \pm 1.5 $ & $ 29.3 \pm 2.3 $ & $ 67.9 \pm 2.4 $ & $ 99.1 \pm 0.5 $ & $ 100.0 - 0.0 $ \\
 & & CV &  &  $ 2.3 \pm 0.8 $ & $ 4.5 \pm 1.0 $ & $ 3.7 \pm 1.0 $ & $ 4.5 \pm 1.1 $ & $ 6.3 \pm 1.2 $ & $ 13.1 \pm 1.7 $ & $ 26.9 \pm 2.2 $\\
   & \multirow{2}{*}{VI} & \texttt{MD} &  &  $ 1.3 \pm 0.6 $ & $ 5.0 \pm 1.1 $ & $ 17.2 \pm 1.9 $ & $ 50.5 \pm 2.5 $ & $ 89.9 \pm 1.5 $ & $ 100.0 - 0.0 $ & $ 100.0 - 0.0 $ \\
 & & CV &  &  $ 2.1 \pm 0.7 $ & $ 3.5 \pm 0.9 $ & $ 3.1 \pm 0.9 $ & $ 5.0 \pm 1.1 $ & $ 6.9 \pm 1.3 $ & $ 18.5 \pm 2.0 $ & $ 46.8 \pm 2.5 $  \\
\hhline{---~-------}

\multirow{6}{*}{IS}  & \multirow{2}{*}{II} & \texttt{MD} &  &  $ 0.5 \pm 0.4 $ & $ 1.1 \pm 0.5 $ & $ 1.9 \pm 0.7 $ & $ 4.7 \pm 1.1 $ & $ 13.6 \pm 1.7 $ & $ 44.5 \pm 2.5 $ & $ 83.1 \pm 1.9 $ \\
 & & CV &  & $ 2.5 \pm 0.8 $ & $ 3.7 \pm 1.0 $ & $ 3.9 \pm 1.0 $ & $ 3.1 \pm 0.9 $ & $ 4.3 \pm 1.0 $ & $ 6.3 \pm 1.2 $ & $ 9.9 \pm 1.5 $\\
  & \multirow{2}{*}{IV} & \texttt{MD} &  &  $ 0.3 \pm 0.3 $ & $ 1.9 \pm 0.7 $ & $ 2.9 \pm 0.9 $ & $ 9.1 \pm 1.5 $ & $ 28.1 \pm 2.3 $ & $ 75.1 \pm 2.2 $ & $ 98.3 \pm 0.7 $ \\
 & & CV &  &  $ 1.7 \pm 0.7 $ & $ 3.0 \pm 0.9 $ & $ 3.6 \pm 0.9 $ & $ 3.9 \pm 1.0 $ & $ 4.8 \pm 1.1 $ & $ 8.3 \pm 1.4 $ & $ 12.8 \pm 1.7 $\\
   & \multirow{2}{*}{VI} & \texttt{MD} &  &  $ 0.6 \pm 0.4 $ & $ 2.5 \pm 0.8 $ & $ 5.0 \pm 1.1 $ & $ 14.9 \pm 1.8 $ & $ 44.3 \pm 2.5 $ & $ 93.5 \pm 1.2 $ & $ 99.9 \pm 0.1 $\\
 & & CV &  &  $ 2.0 \pm 0.7 $ & $ 3.9 \pm 1.0 $ & $ 1.9 \pm 0.7 $ & $ 4.6 \pm 1.1 $ & $ 6.1 \pm 1.2 $ & $ 8.3 \pm 1.4 $ & $ 15.5 \pm 1.8 $ \\
\hline
\end{tabular}
\caption{Power of the statistical test with \statname{} (abbreviated as \texttt{MD}) and CV representations for the shift types Gaussian noise (GN), Gaussian blur (GB) and Image shift (IS), three different shift intensities (II, IV, VI) and fixed $\delta = 1$ for the Imagenette dataset. The estimated $95\%$-confidence intervals are indicated.}
\label{table:Imagenette-wrt-sample-size}
\end{center}
}
\end{table*}

\begin{table*}
\small{
\begin{center}
\setlength{\tabcolsep}{0.4em}
\begin{tabular}{lcr c ccccccc}
 & & & & \multicolumn{7}{c}{CIFAR-10}\\
\hhline{~~~~-------}
 \multirow{2}{*}{Shift} &  \multirow{2}{*}{Int.} & \multirow{2}{*}{Feat.}  & &\multicolumn{7}{c}{Sample size}\\
\hhline{~~~~-------}
 &  & &  & $10$ & $20$ & $50$ & $100$ & $200$ & $500$ & $1000$ \\
\hhline{===~=======}
\multirow{6}{*}{GN}  & \multirow{2}{*}{II} & \texttt{MD} &  &  $ 1.8 \pm 0.7 $ & $ 5.3 \pm 1.1 $ & $ 16.7 \pm 1.9 $ & $ 47.0 \pm 2.5 $ & $ 86.9 \pm 1.7 $ & $ 100.0 - 0.0 $ & $ 100.0 - 0.0 $\\
 & & CV &  &  $ 2.3 \pm 0.8 $ & $ 4.5 \pm 1.1 $ & $ 6.9 \pm 1.3 $ & $ 19.1 \pm 2.0 $ & $ 38.3 \pm 2.5 $ & $ 88.3 \pm 1.6 $ & $ 99.9 \pm 0.1 $\\
  & \multirow{2}{*}{IV} & \texttt{MD} &  &  $ 2.5 \pm 0.8 $ & $ 11.1 \pm 1.6 $ & $ 36.7 \pm 2.4 $ & $ 81.1 \pm 2.0 $ & $ 99.3 \pm 0.4 $ & $ 100.0 - 0.0 $ & $ 100.0 - 0.0 $\\
 & & CV &  &  $ 2.5 \pm 0.8 $ & $ 6.7 \pm 1.3 $ & $ 11.7 \pm 1.6 $ & $ 29.9 \pm 2.3 $ & $ 63.4 \pm 2.4 $ & $ 99.3 \pm 0.4 $ & $ 100.0 - 0.0 $\\
   & \multirow{2}{*}{VI} & \texttt{MD} &  &  $ 2.7 \pm 0.8 $ & $ 14.7 \pm 1.8 $ & $ 49.2 \pm 2.5 $ & $ 91.2 \pm 1.4 $ & $ 99.9 \pm 0.1 $ & $ 100.0 - 0.0 $ & $ 100.0 - 0.0 $ \\
 & & CV &  &  $ 2.7 \pm 0.8 $ & $ 7.3 \pm 1.3 $ & $ 14.2 \pm 1.8 $ & $ 37.5 \pm 2.5 $ & $ 77.3 \pm 2.1 $ & $ 99.9 \pm 0.2 $ & $ 100.0 - 0.0 $  \\
\hhline{---~-------}

\multirow{6}{*}{GB}  & \multirow{2}{*}{II} & \texttt{MD} &  &  $ 0.8 \pm 0.5 $ & $ 2.8 \pm 0.8 $ & $ 6.6 \pm 1.3 $ & $ 18.9 \pm 2.0 $ & $ 49.5 \pm 2.5 $ & $ 93.2 \pm 1.3 $ & $ 100.0 - 0.0 $ \\
 & & CV &  &  $ 2.6 \pm 0.8 $ & $ 4.0 \pm 1.0 $ & $ 3.4 \pm 0.9 $ & $ 6.8 \pm 1.3 $ & $ 11.1 \pm 1.6 $ & $ 30.4 \pm 2.3 $ & $ 58.9 \pm 2.5 $\\
  & \multirow{2}{*}{IV} & \texttt{MD} &  &  $ 1.8 \pm 0.7 $ & $ 5.7 \pm 1.2 $ & $ 19.5 \pm 2.0 $ & $ 49.7 \pm 2.5 $ & $ 89.6 \pm 1.5 $ & $ 99.9 \pm 0.1 $ & $ 100.0 - 0.0 $ \\
 & & CV &  &  $ 2.5 \pm 0.8 $ & $ 6.4 \pm 1.2 $ & $ 7.3 \pm 1.3 $ & $ 13.9 \pm 1.7 $ & $ 35.9 \pm 2.4 $ & $ 84.0 \pm 1.9 $ & $ 99.8 \pm 0.2 $\\
   & \multirow{2}{*}{VI} & \texttt{MD} &  &  $ 2.1 \pm 0.7 $ & $ 6.3 \pm 1.2 $ & $ 23.4 \pm 2.1 $ & $ 62.5 \pm 2.4 $ & $ 96.1 \pm 1.0 $ & $ 100.0 - 0.0 $ & $ 100.0 - 0.0 $ \\
 & & CV &  &  $ 3.0 \pm 0.9 $ & $ 8.9 \pm 1.4 $ & $ 14.7 \pm 1.8 $ & $ 44.2 \pm 2.5 $ & $ 85.5 \pm 1.8 $ & $ 100.0 - 0.0 $ & $ 100.0 - 0.0 $  \\
\hhline{---~-------}

\multirow{6}{*}{IS}  & \multirow{2}{*}{II} & \texttt{MD} &  &  $ 0.3 \pm 0.3 $ & $ 1.3 \pm 0.6 $ & $ 3.6 \pm 0.9 $ & $ 6.7 \pm 1.3 $ & $ 19.6 \pm 2.0 $ & $ 60.1 \pm 2.5 $ & $ 92.6 \pm 1.3 $ \\
 & & CV &  & $ 2.5 \pm 0.8 $ & $ 3.3 \pm 0.9 $ & $ 2.5 \pm 0.8 $ & $ 4.4 \pm 1.0 $ & $ 7.9 \pm 1.4 $ & $ 16.5 \pm 1.9 $ & $ 31.5 \pm 2.4 $\\
  & \multirow{2}{*}{IV} & \texttt{MD} &  &  $ 0.6 \pm 0.4 $ & $ 2.4 \pm 0.8 $ & $ 3.9 \pm 1.0 $ & $ 16.1 \pm 1.9 $ & $ 39.9 \pm 2.5 $ & $ 88.2 \pm 1.6 $ & $ 99.9 \pm 0.1 $ \\
 & & CV &  &  $ 2.0 \pm 0.7 $ & $ 3.9 \pm 1.0 $ & $ 2.9 \pm 0.9 $ & $ 6.7 \pm 1.3 $ & $ 10.8 \pm 1.6 $ & $ 25.4 \pm 2.2 $ & $ 53.1 \pm 2.5 $ \\
   & \multirow{2}{*}{VI} & \texttt{MD} &  &  $ 1.3 \pm 0.6 $ & $ 3.9 \pm 1.0 $ & $ 8.9 \pm 1.4 $ & $ 22.8 \pm 2.1 $ & $ 57.4 \pm 2.5 $ & $ 97.7 \pm 0.8 $ & $ 100.0 - 0.0 $\\
 & & CV &  &  $ 2.0 \pm 0.7 $ & $ 4.6 \pm 1.1 $ & $ 4.4 \pm 1.0 $ & $ 9.5 \pm 1.5 $ & $ 15.5 \pm 1.8 $ & $ 44.5 \pm 2.5 $ & $ 83.2 \pm 1.9 $ \\
\hline
\end{tabular}
\caption{Power of the statistical test with \statname{} (abbreviated as \texttt{MD}) and CV representations for the shift types Gaussian noise (GN), Gaussian blur (GB) and Image shift (IS), three different shift intensities (II, IV, VI) and fixed $\delta = 0.5$ for the CIFAR-10 dataset. The estimated $95\%$-confidence intervals are indicated.}
\label{table:CIFAR10-wrt-sample-size}
\end{center}
}
\end{table*}

\begin{figure*}[h]
    \centering
    \includegraphics[width=0.32\textwidth]{./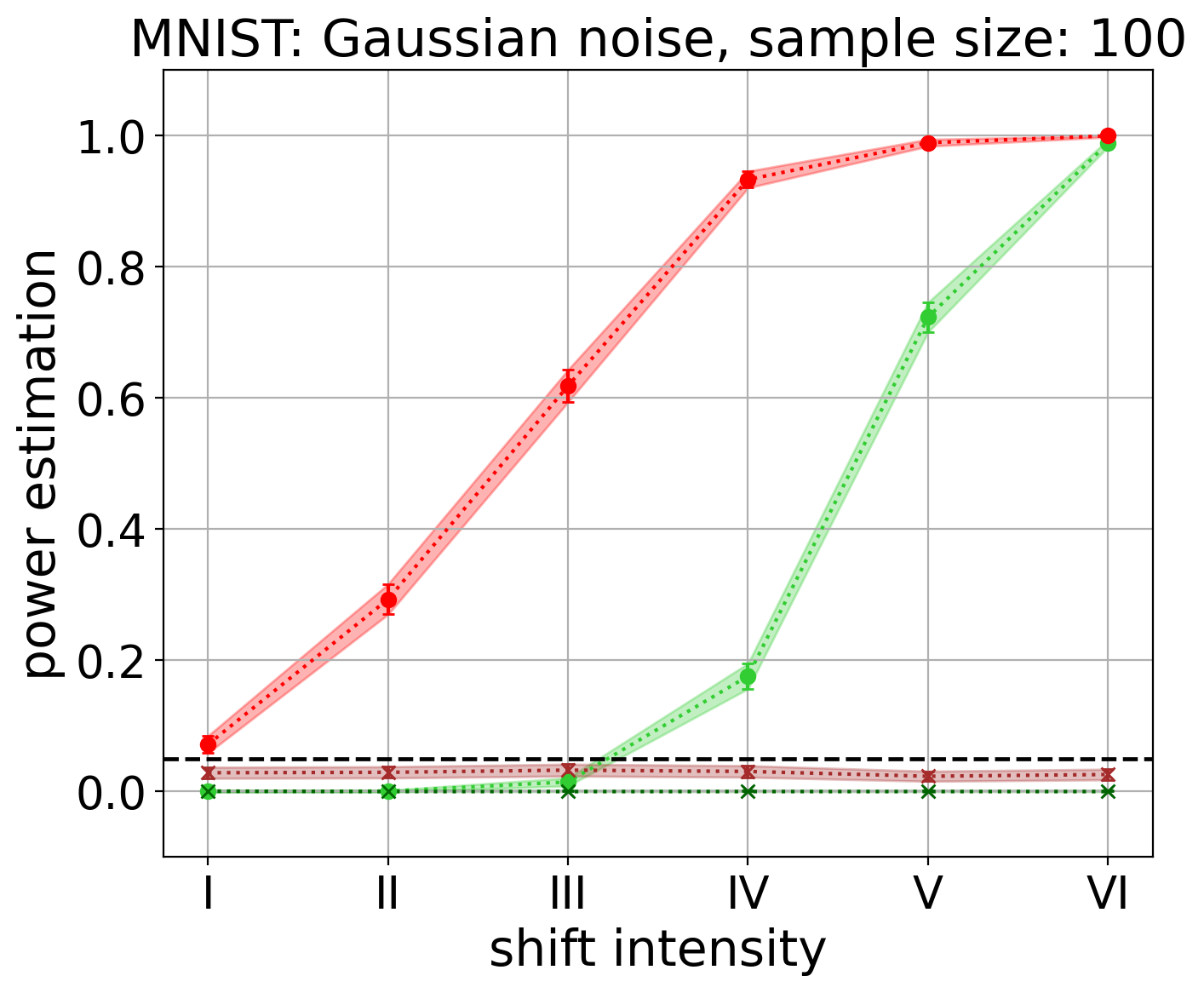}
    \includegraphics[width=0.32\textwidth]{./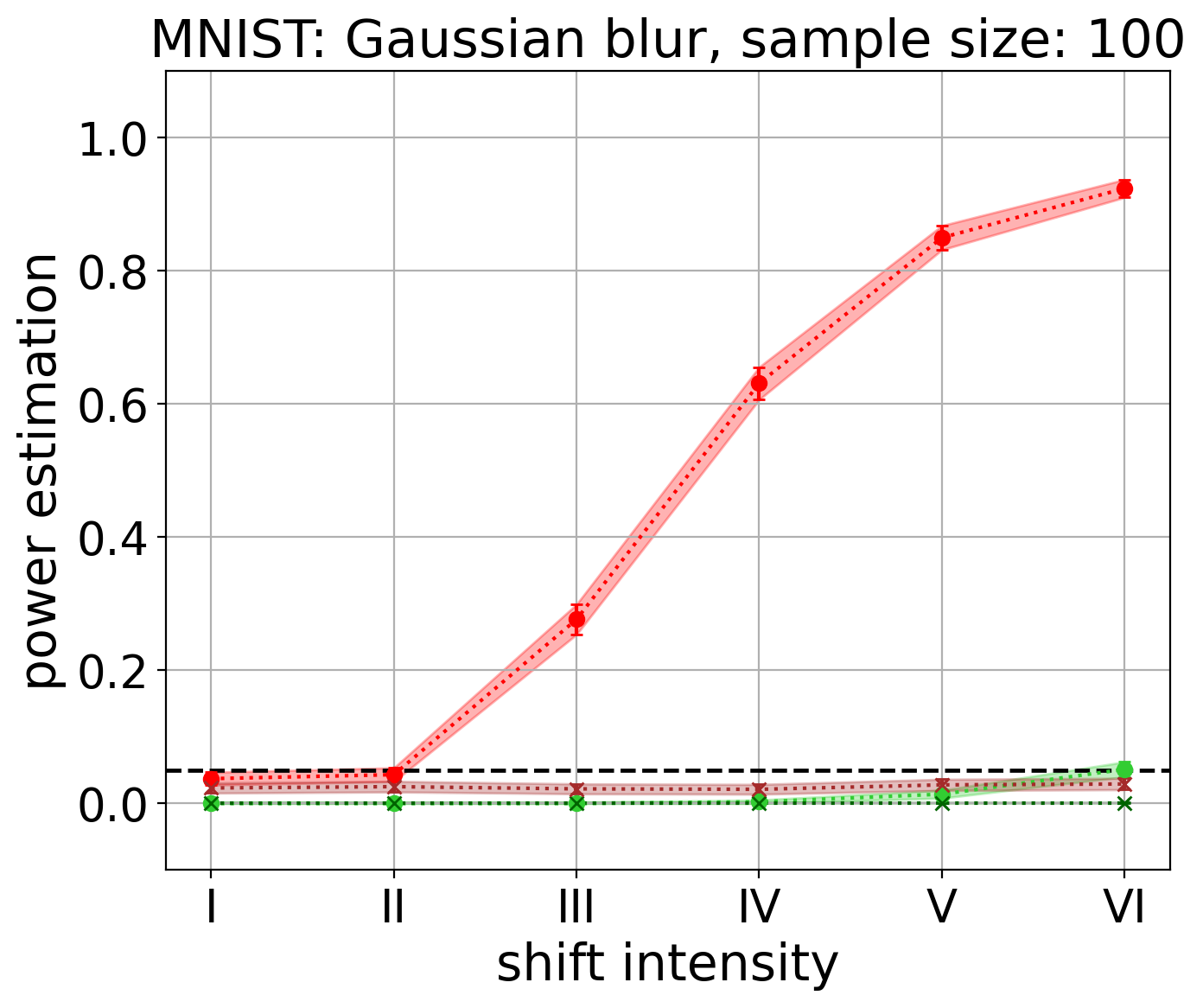}
   \includegraphics[width=0.32\textwidth]{./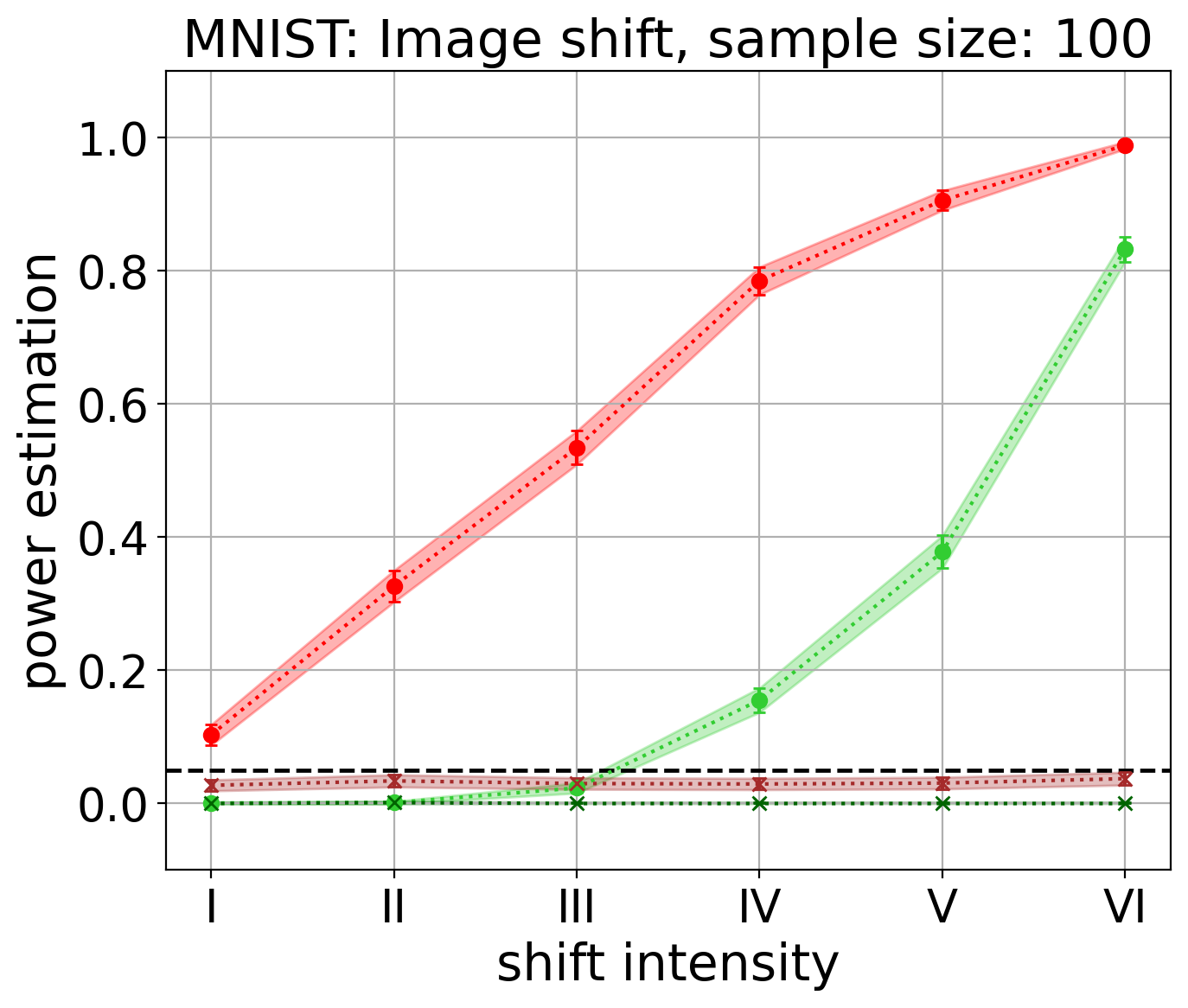}
    \includegraphics[width=0.32\textwidth]{./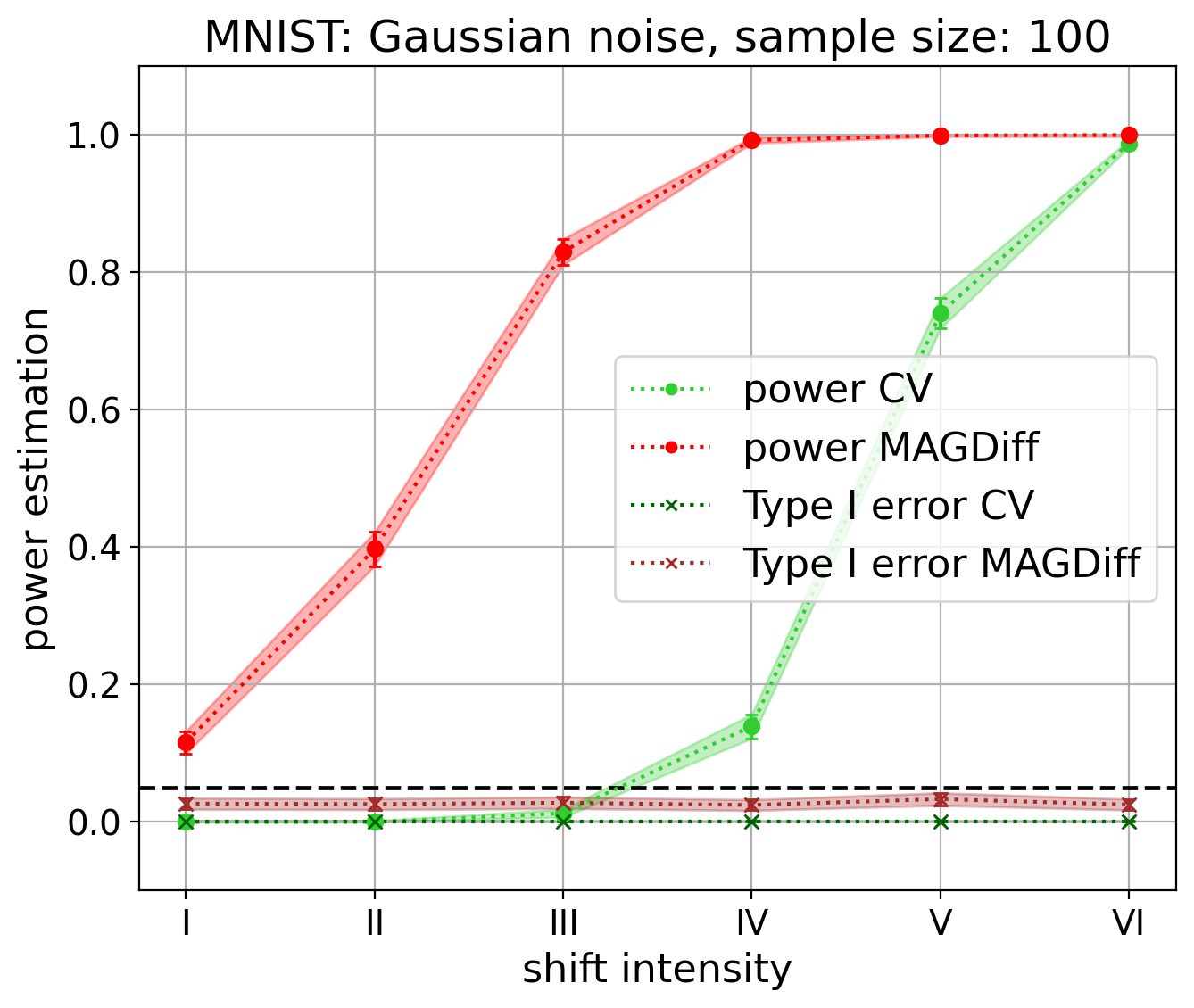}
    \includegraphics[width=0.32\textwidth]{./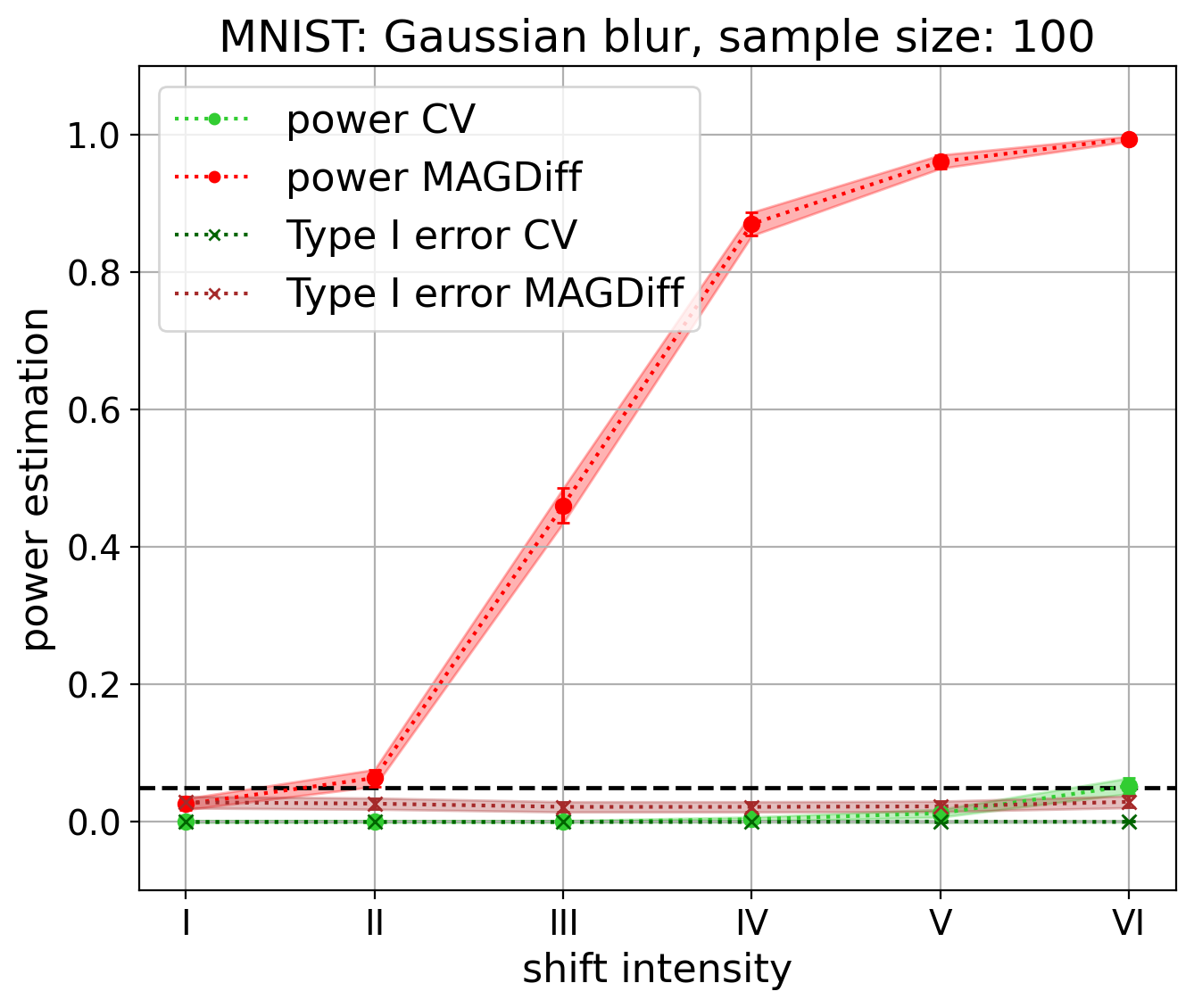}
   \includegraphics[width=0.32\textwidth]{./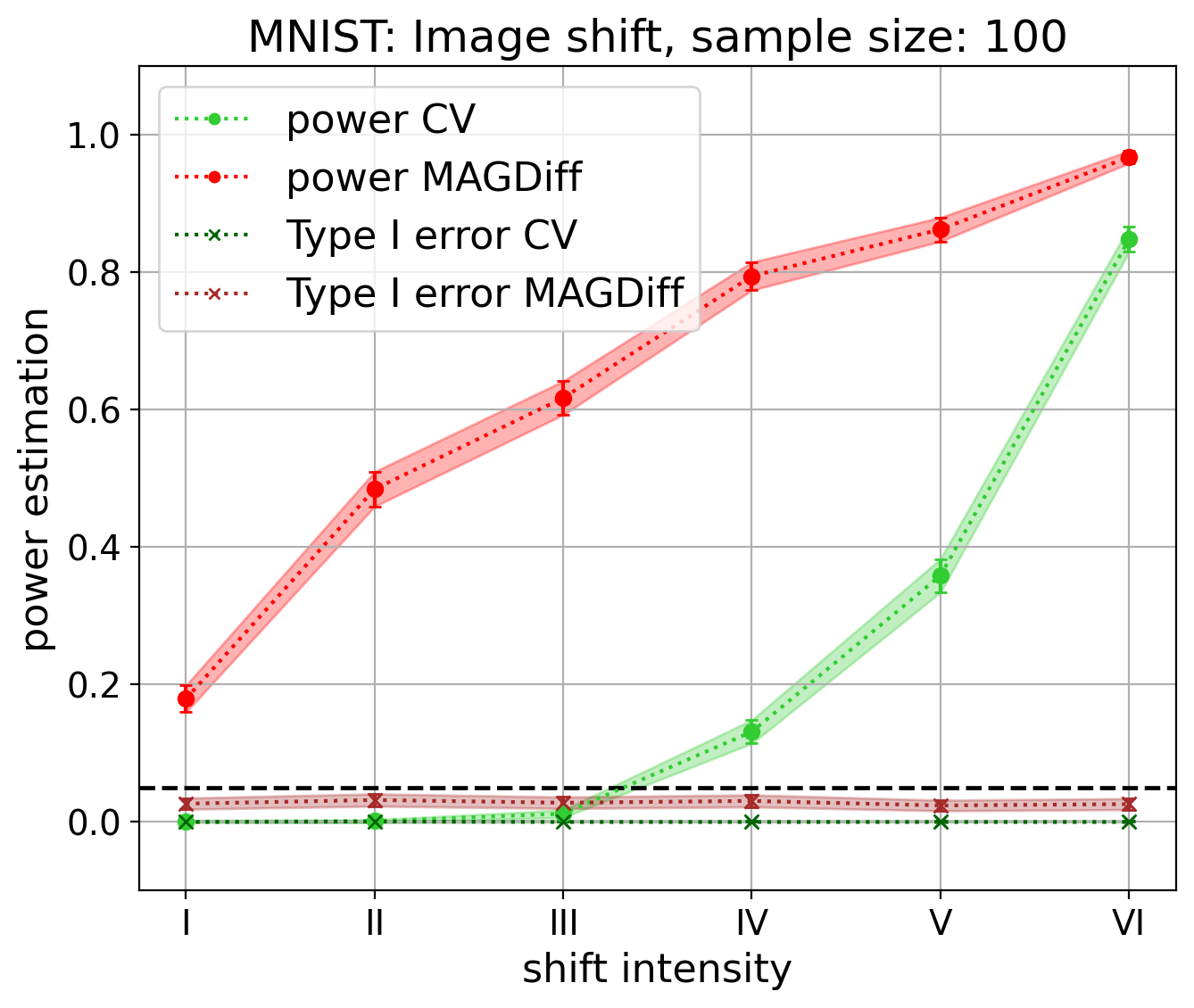}
   \caption{Power and type I error of the test with \statname{} (red) and CV (green) representations w.r.t.~the shift intensity for various shift types on the MNIST dataset with $\delta = 0.5$, sample size $100$, for layers $\ell_{-1}$ (top row) and  $\ell_{-3}$ (bottom row).}
     \label{fig:shift_intensity_MNIST_supplementary}
\end{figure*}

\begin{figure*}[h]
    \centering
    \includegraphics[width=0.32\textwidth]{./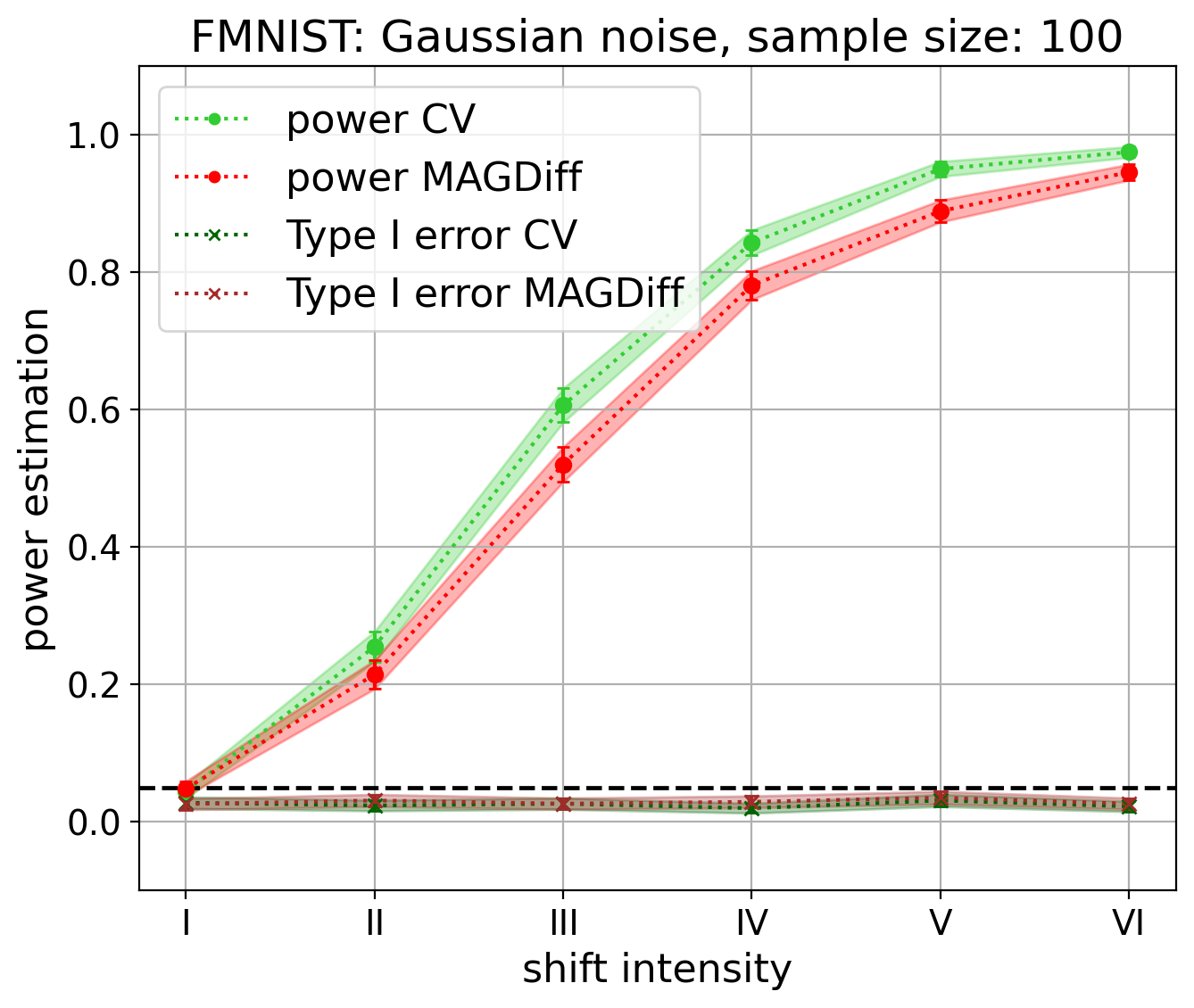}
    \includegraphics[width=0.32\textwidth]{./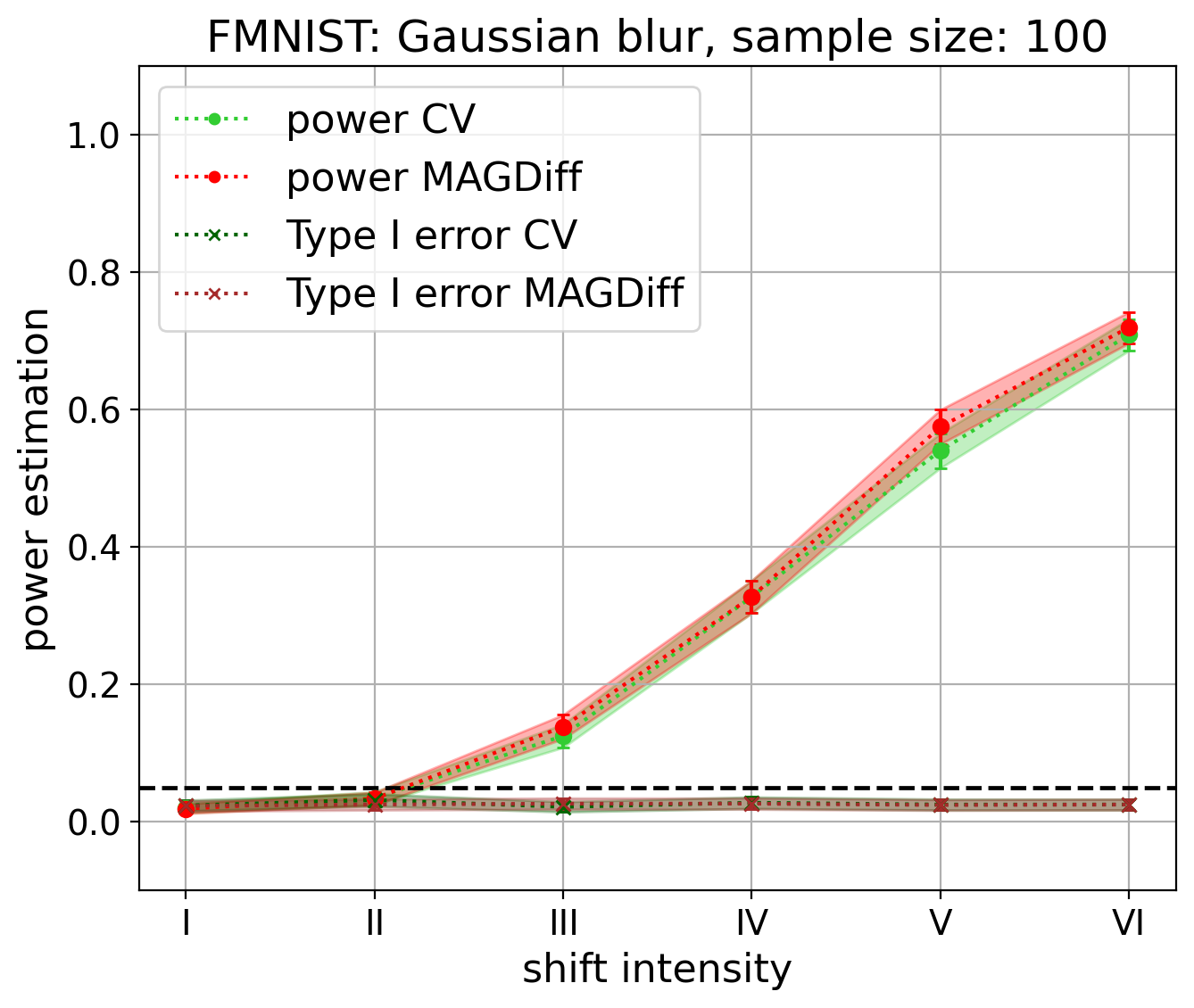}
   \includegraphics[width=0.32\textwidth]{./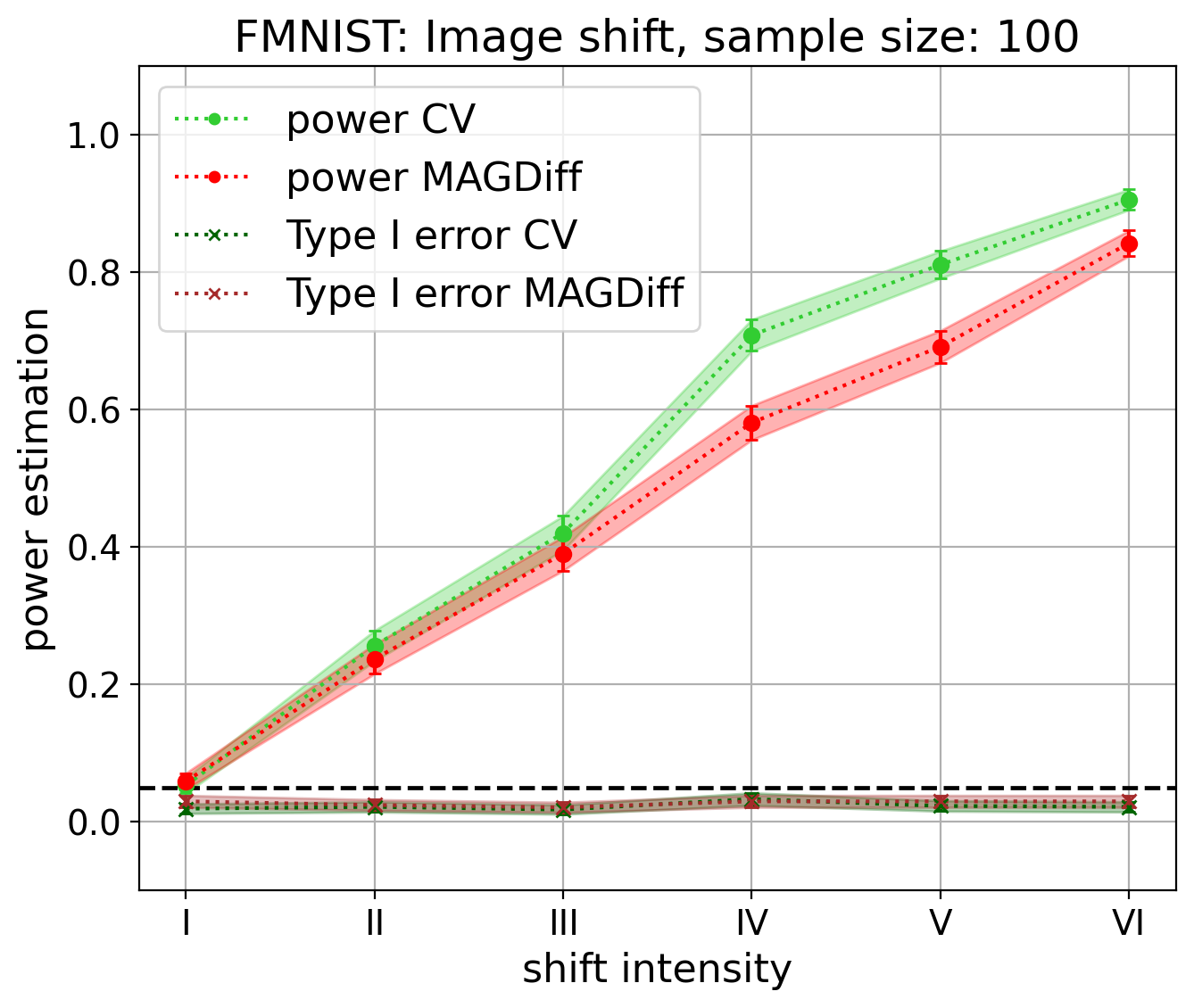}
    \includegraphics[width=0.32\textwidth]{./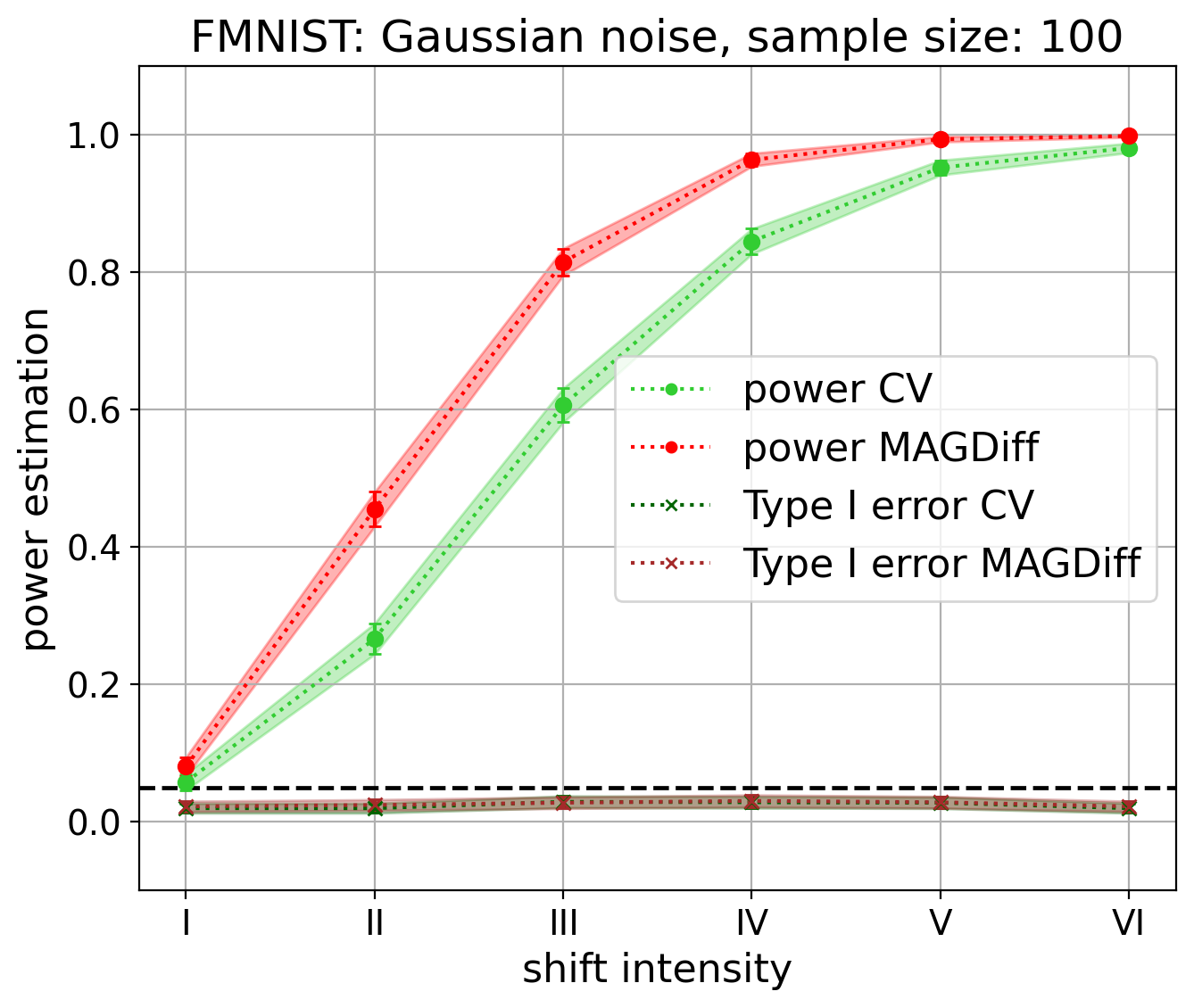}
    \includegraphics[width=0.32\textwidth]{./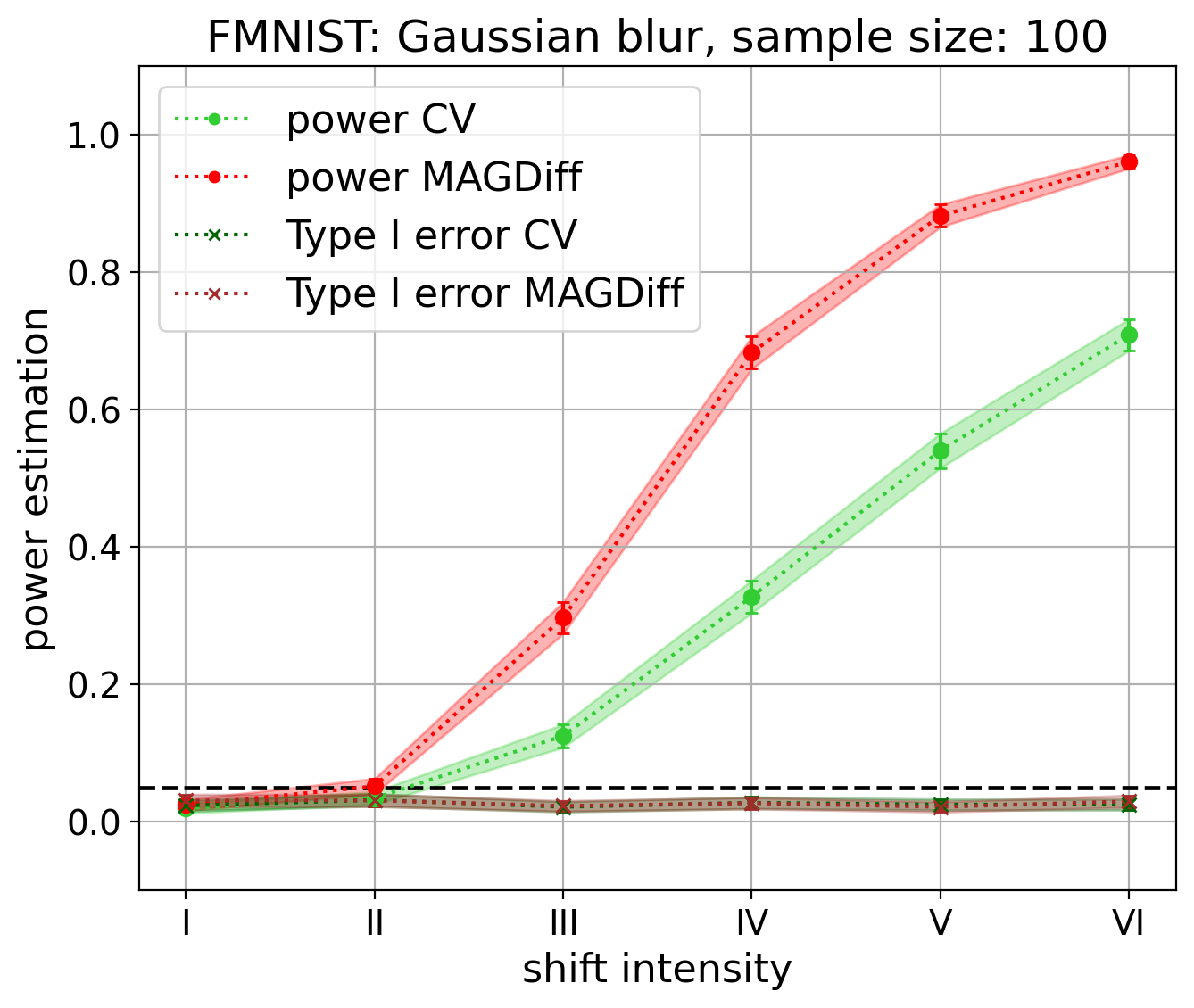}
   \includegraphics[width=0.32\textwidth]{./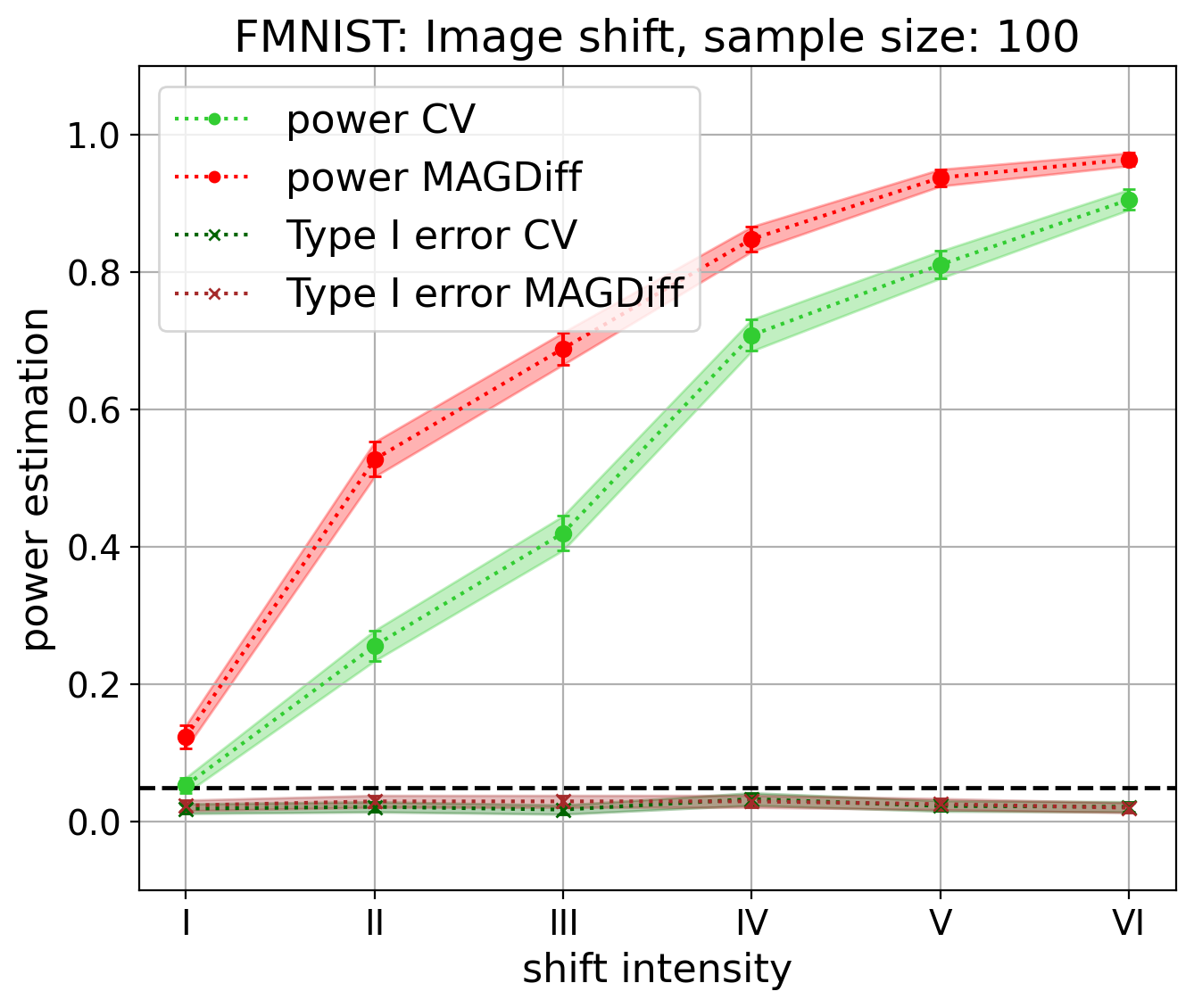}
   \caption{Power and type I error of the test with \statname~(red) and CV (green) representations w.r.t.~the shift intensity for various shift types on the FMNIST dataset with $\delta = 0.5$, sample size $100$, for layers $\ell_{-1}$ (top row) and $\ell_{-3}$ (bottom row).}
     \label{fig:shift_intensity_FMNIST_supplementary}
\end{figure*}

\begin{figure*}[h]
\centering
    \includegraphics[width=0.32\textwidth]{./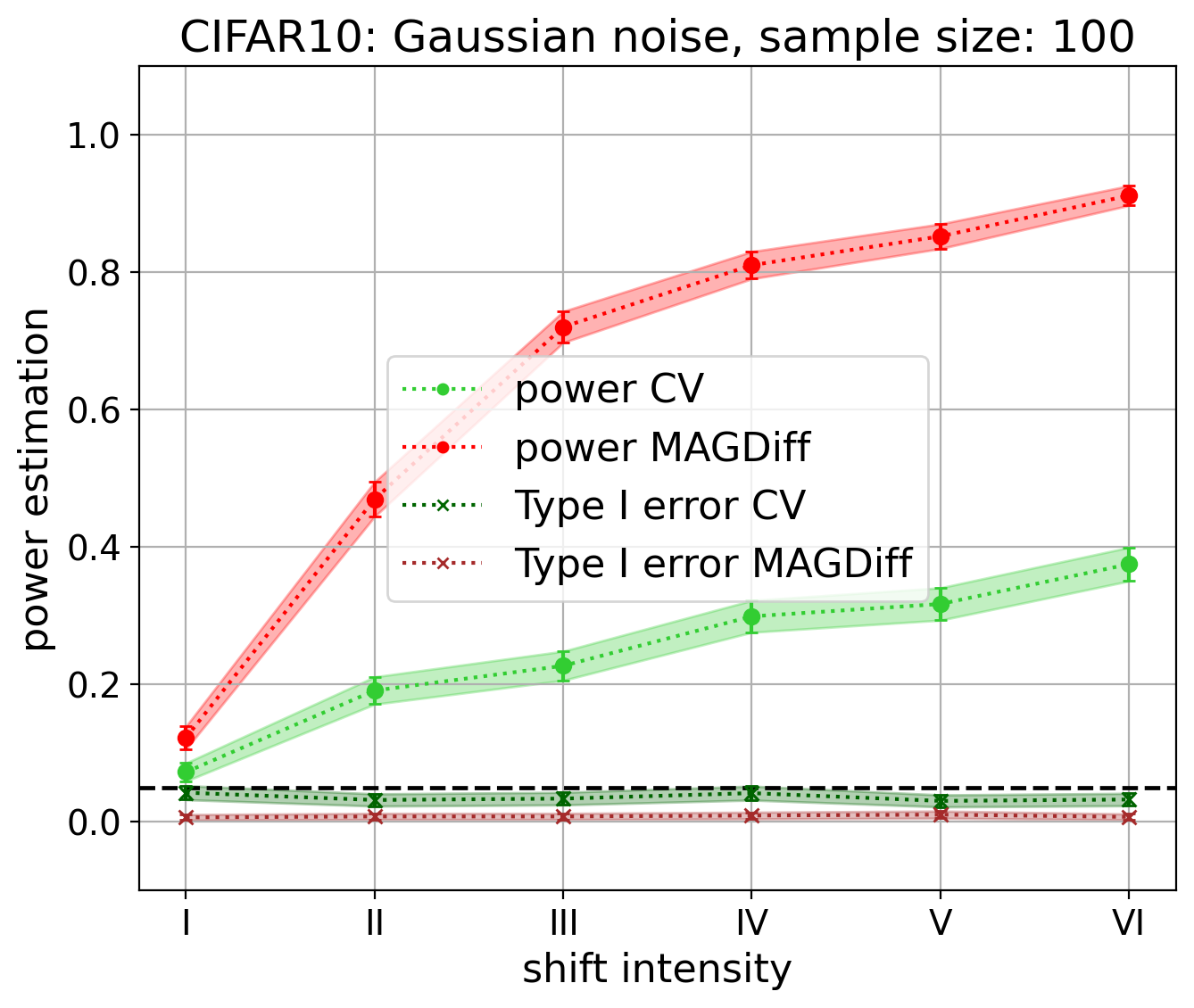}
    \includegraphics[width=0.32\textwidth]{./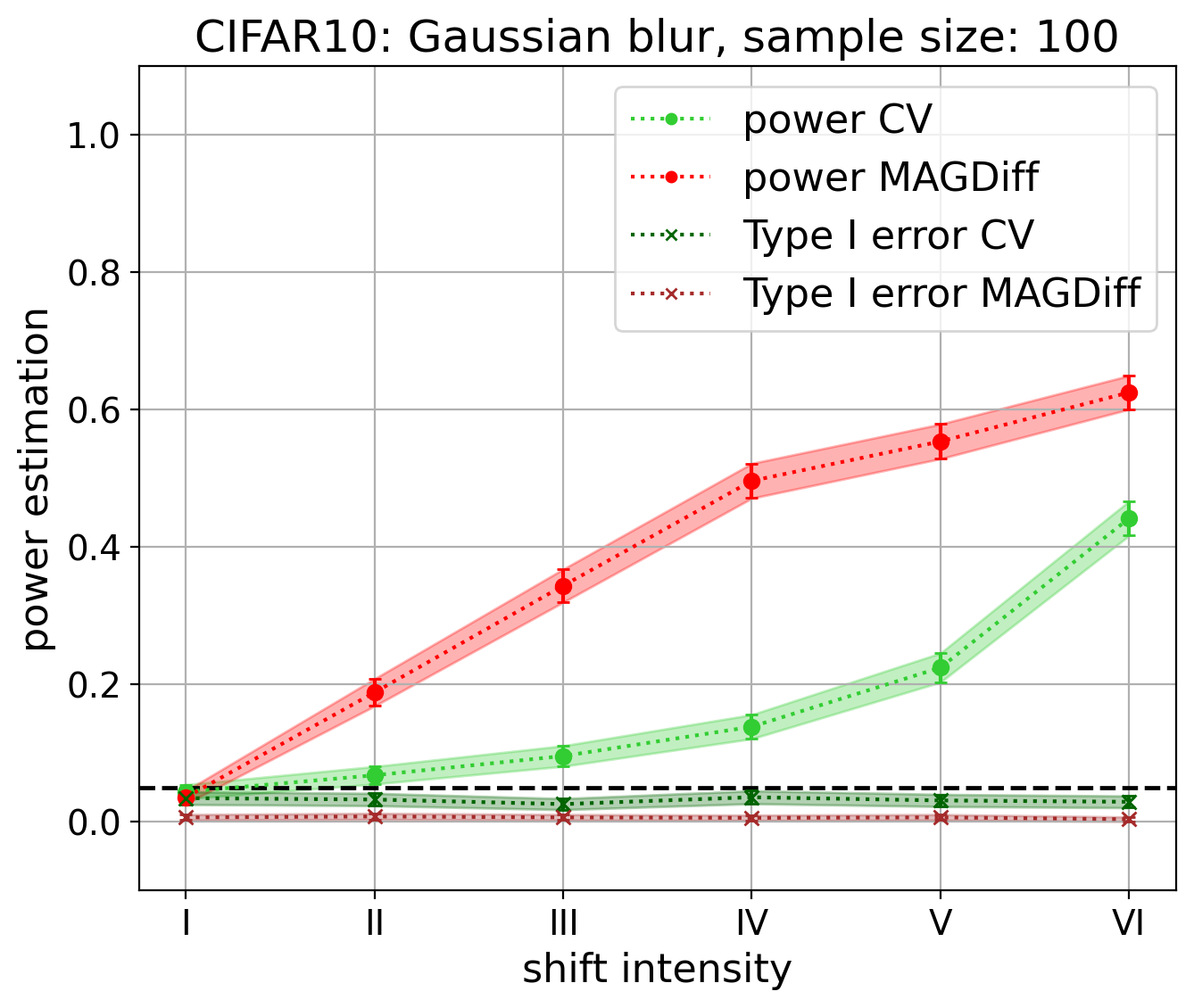}
   \includegraphics[width=0.32\textwidth]{./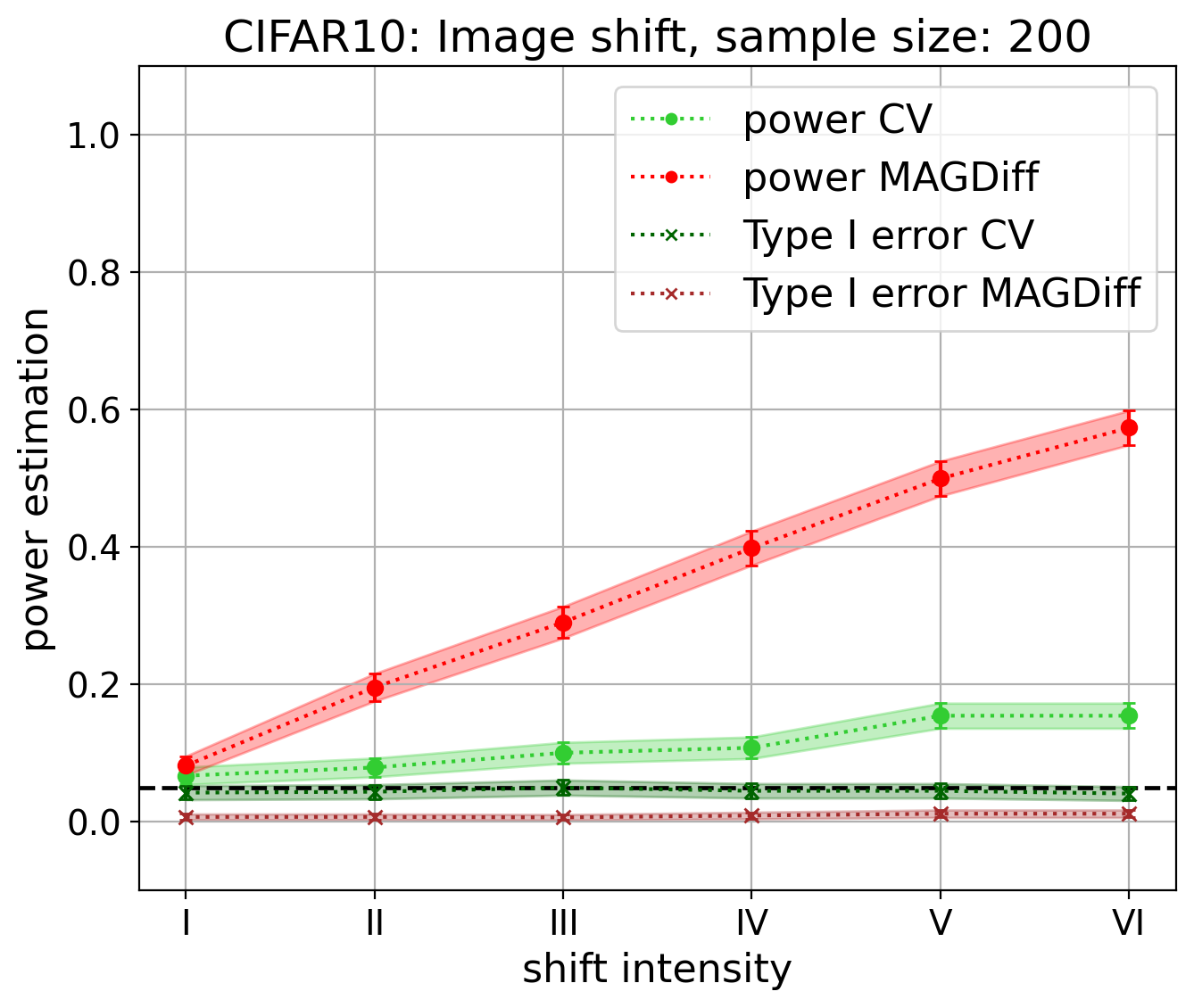}
   \includegraphics[width=0.32\textwidth]{./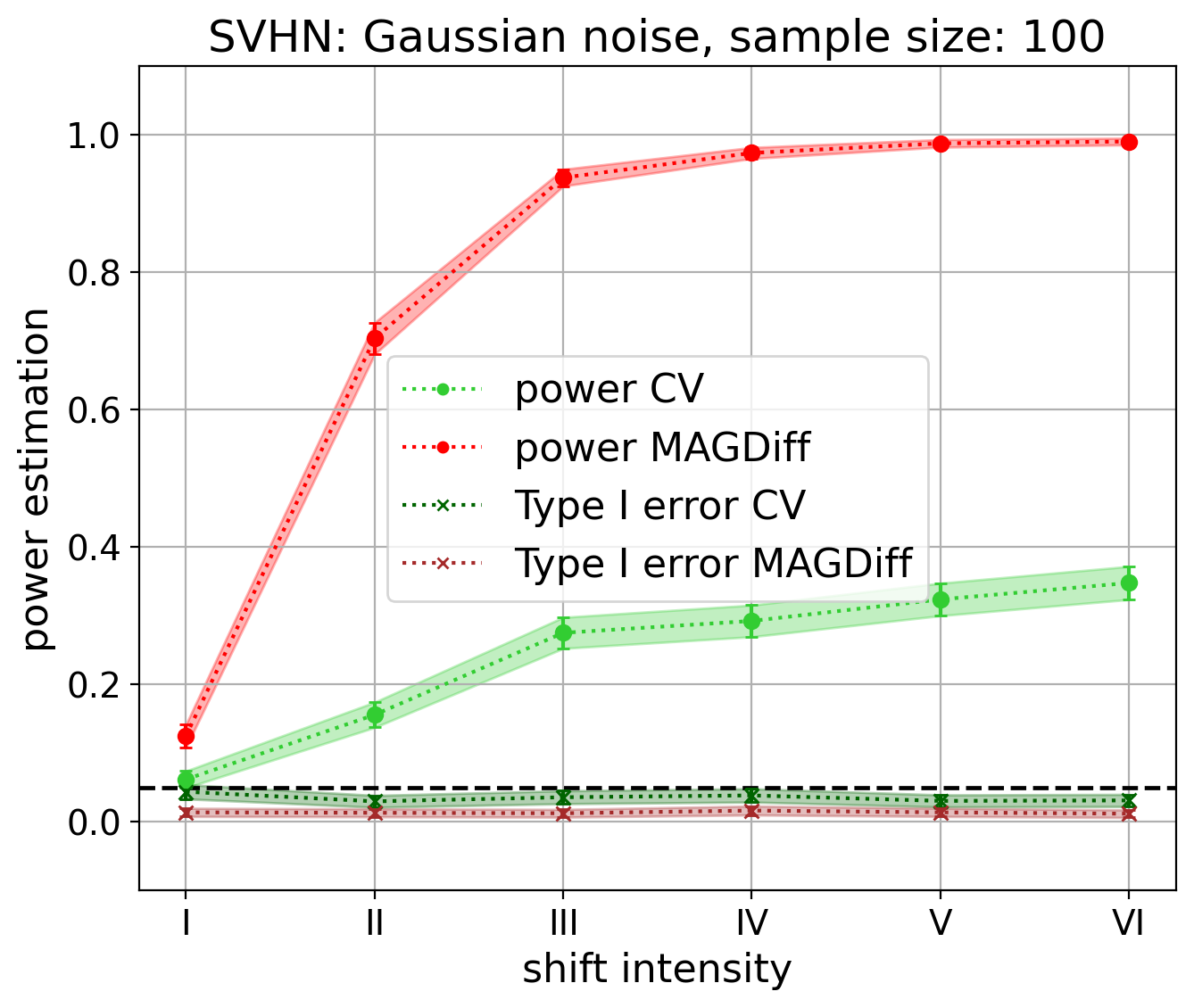}
    \includegraphics[width=0.32\textwidth]{./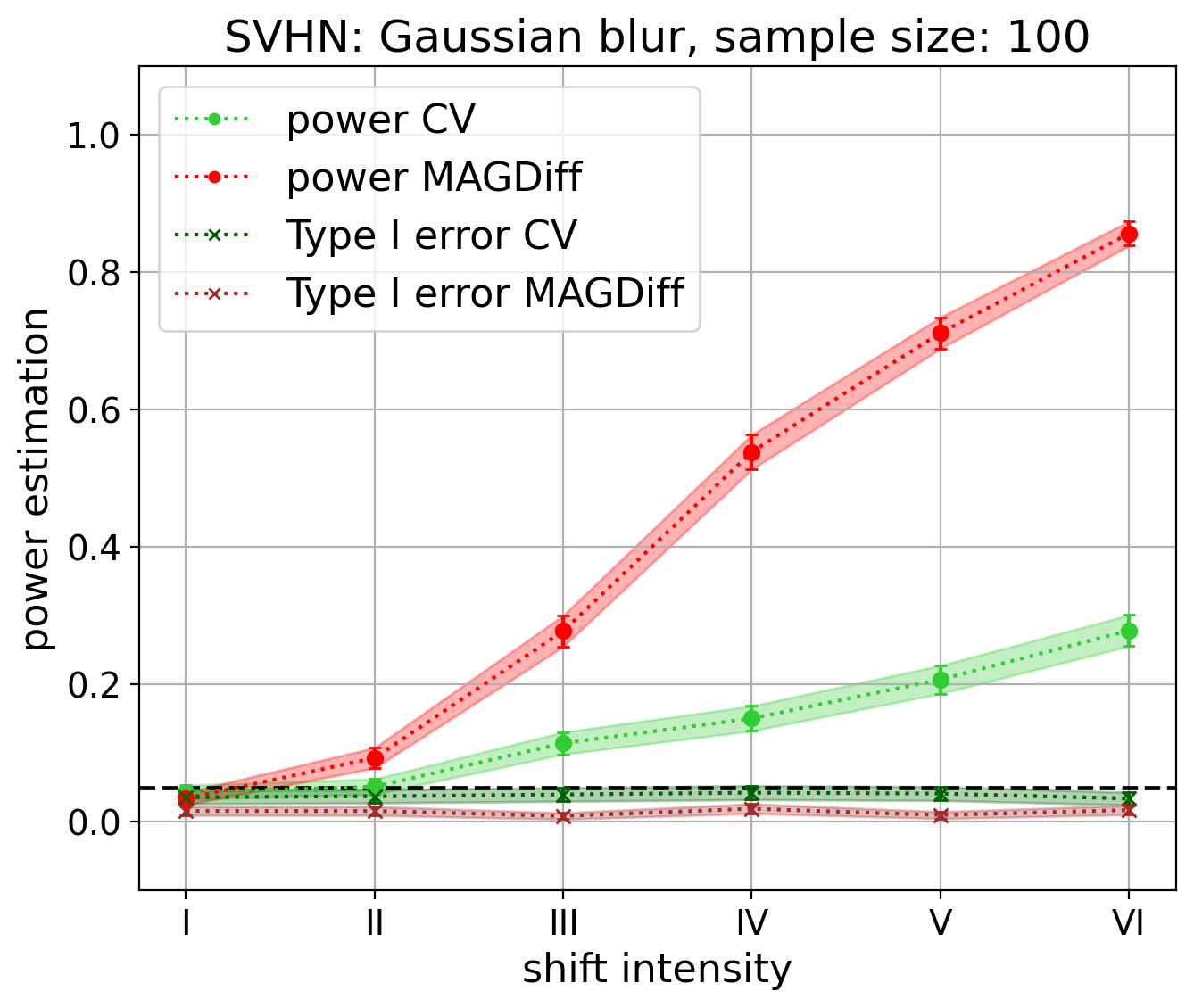}
   \includegraphics[width=0.32\textwidth]{./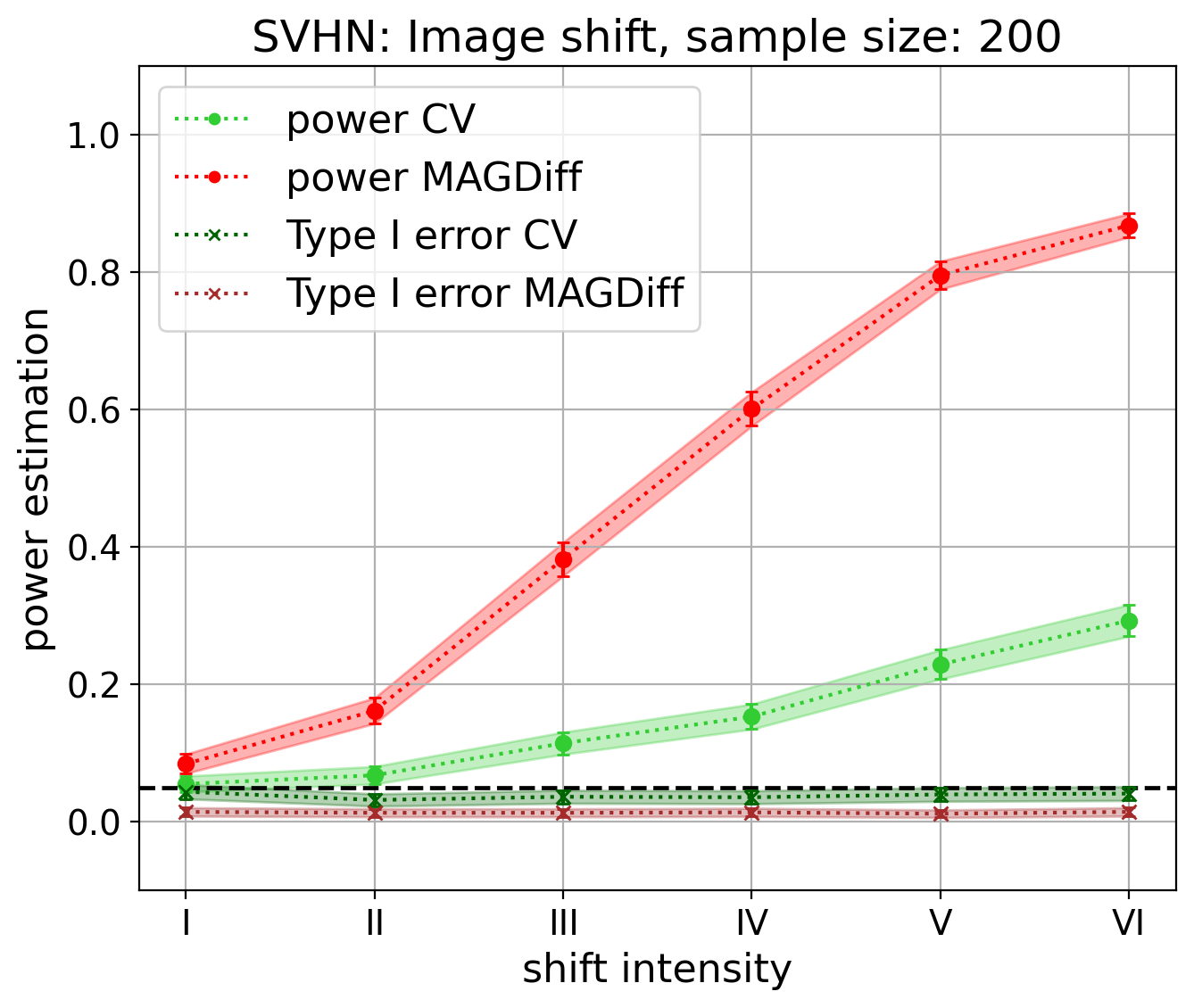}
    \includegraphics[width=0.325\textwidth]{./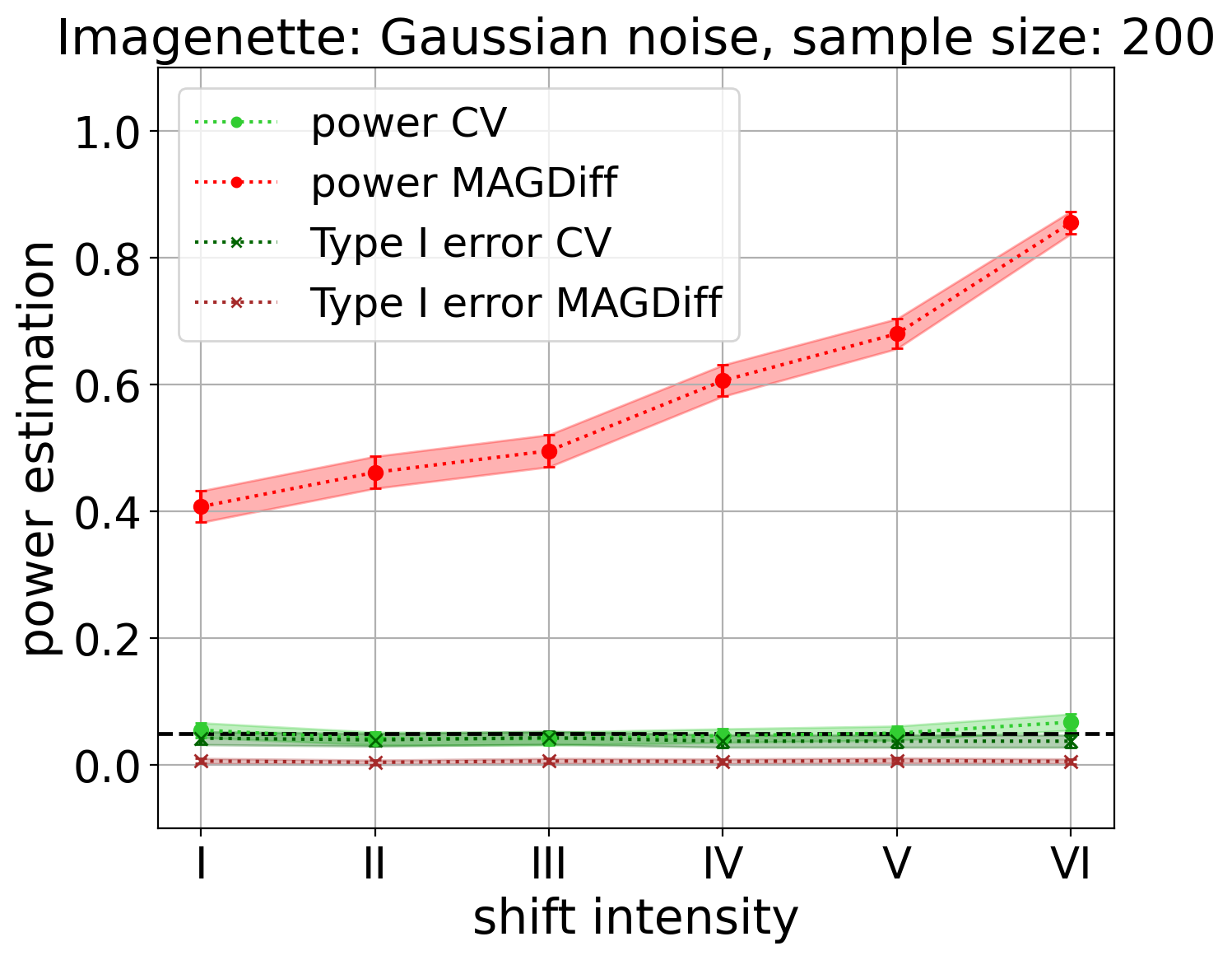}
    \includegraphics[width=0.325\textwidth]{./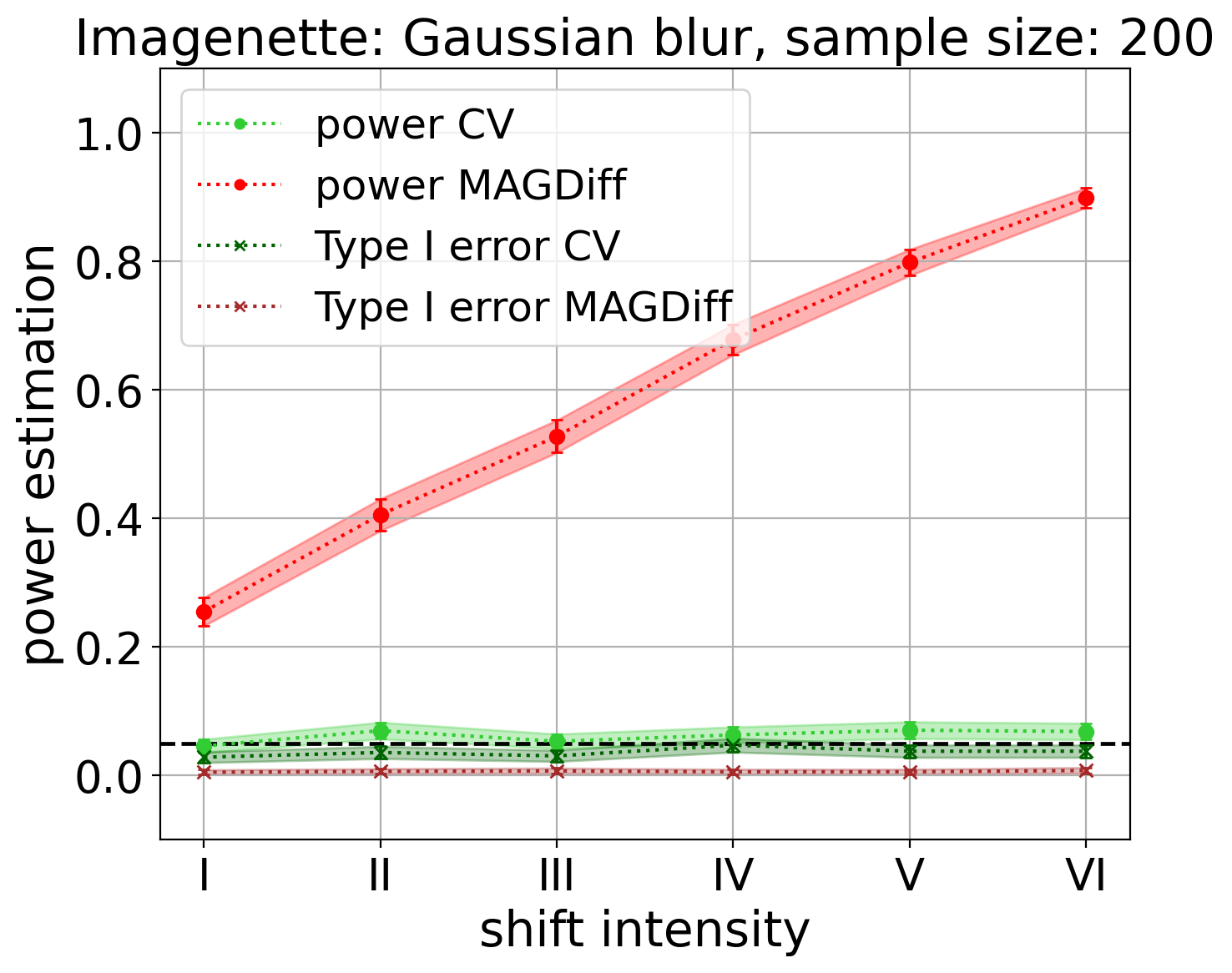}
   \includegraphics[width=0.325\textwidth]{./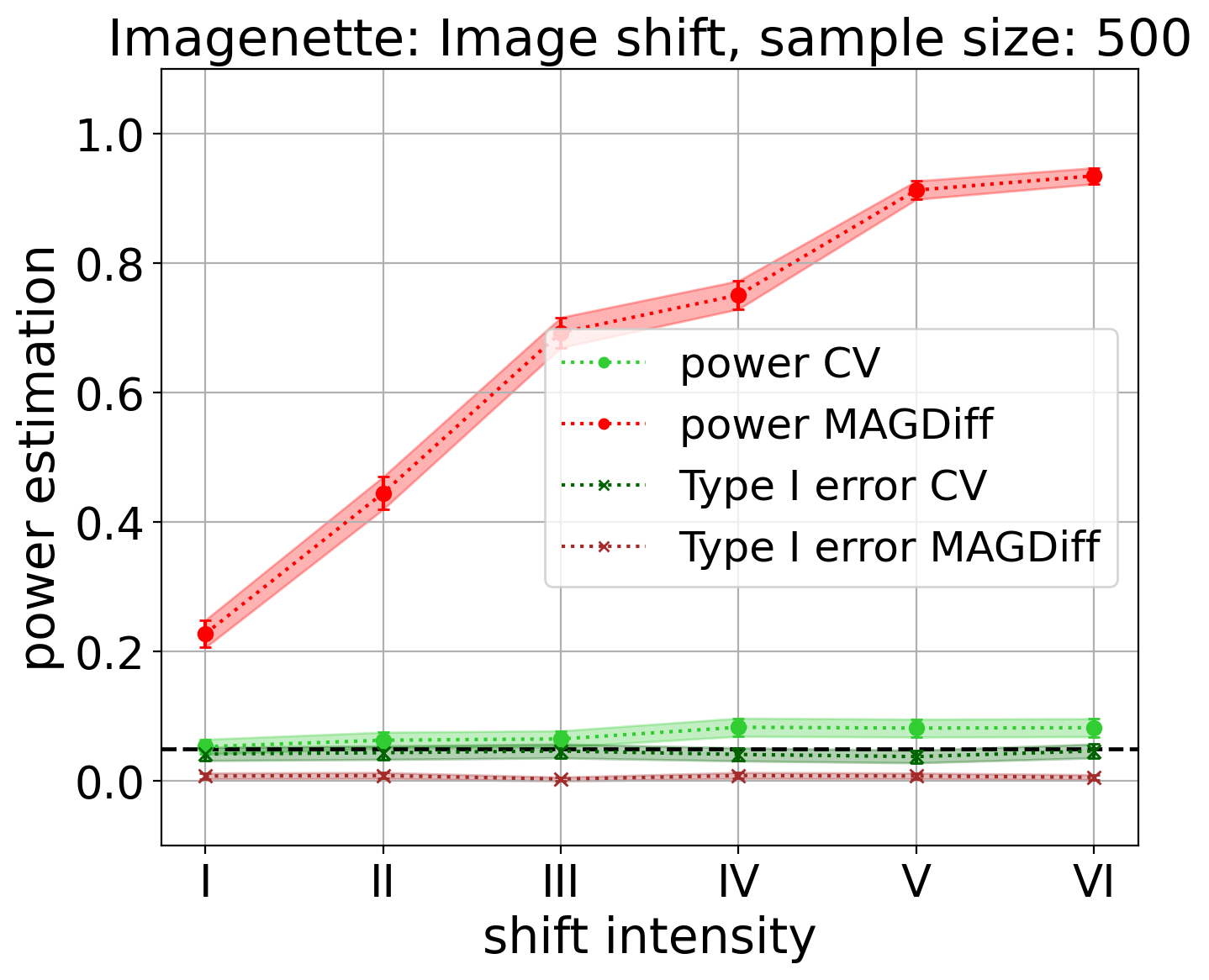}
    \caption{Power and type I error of the test with \statname~(red) and CV (green) representations w.r.t.~the shift intensity for various shift types on the CIFAR-10, SVHN (with $\delta = 0.5$) and Imagenette (with $\delta = 1$) datasets. Sample sizes and values of $\delta$ were chosen to make the plots as expressive as possible (low power for low shift intensity, high power for high shift intensity), as the difficulty of the task varies depending on the shift type and dataset.}
     \label{fig:shift_intensity_CIFAR10_SVHN_Imagenette_supplementary}
\end{figure*}

   \begin{figure*}[h]
    \centering
    \includegraphics[width=0.32\textwidth]{./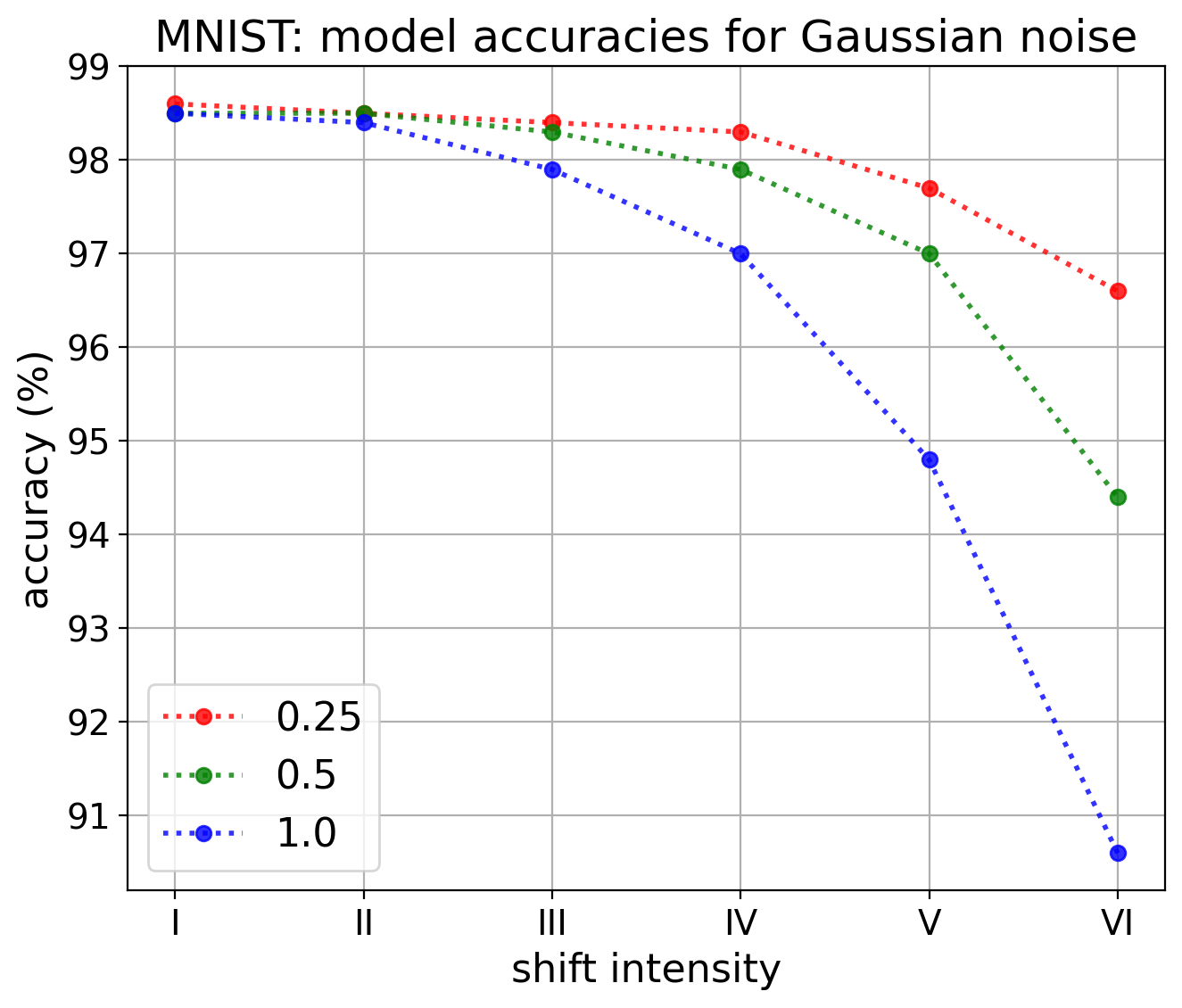}
    \includegraphics[width=0.335\textwidth]{./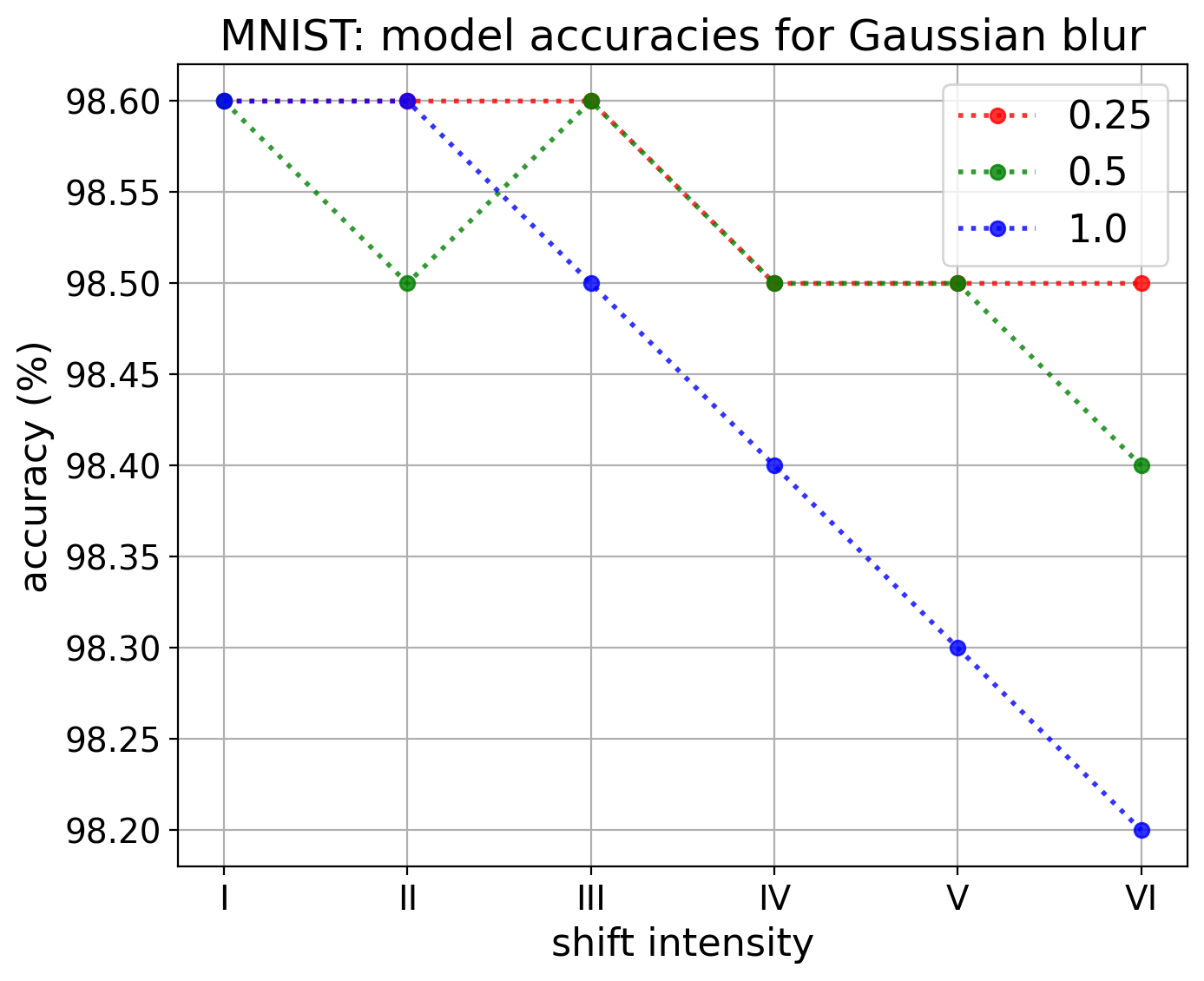}
   \includegraphics[width=0.32\textwidth]{./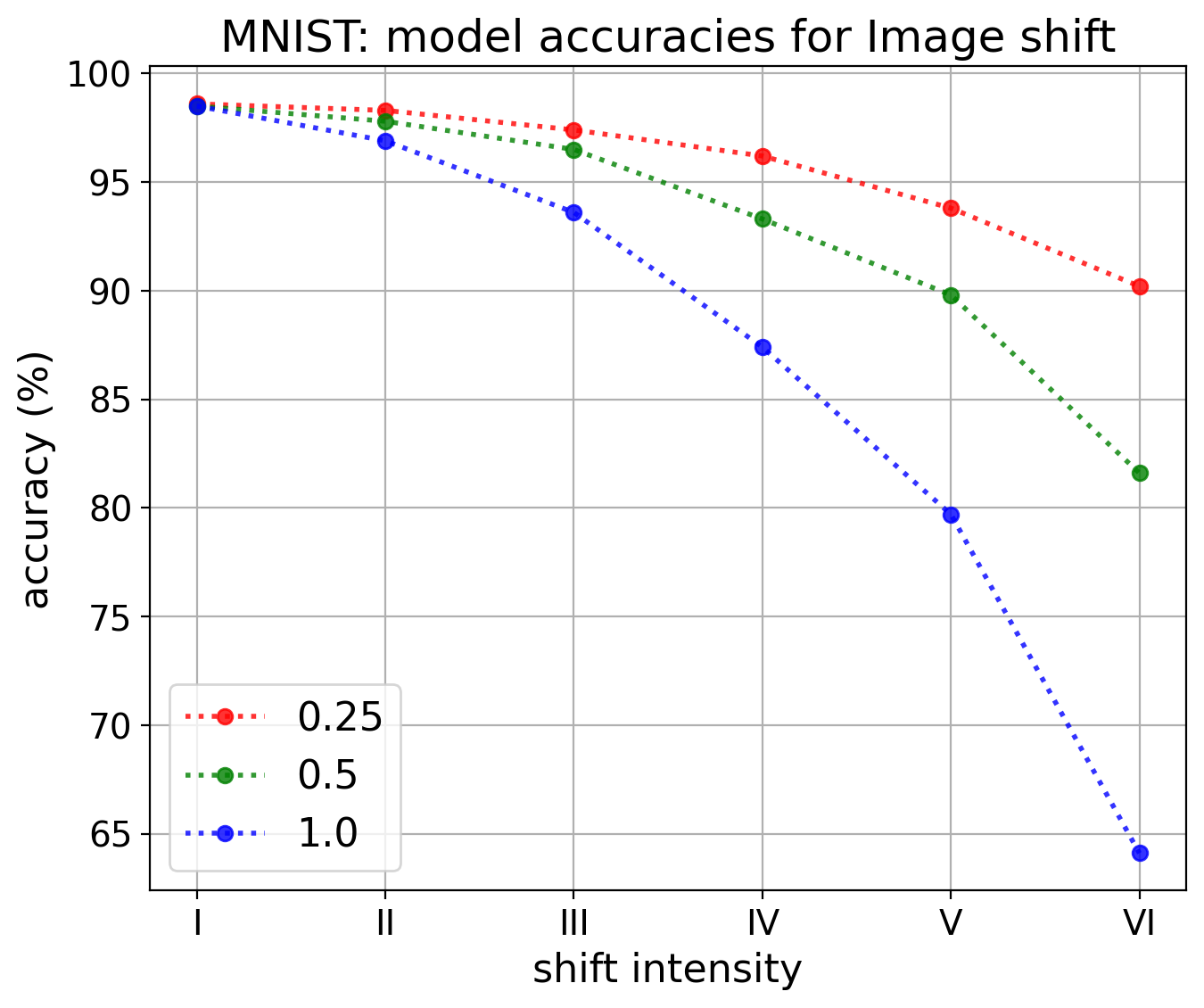}
   \includegraphics[width=0.32\textwidth]{./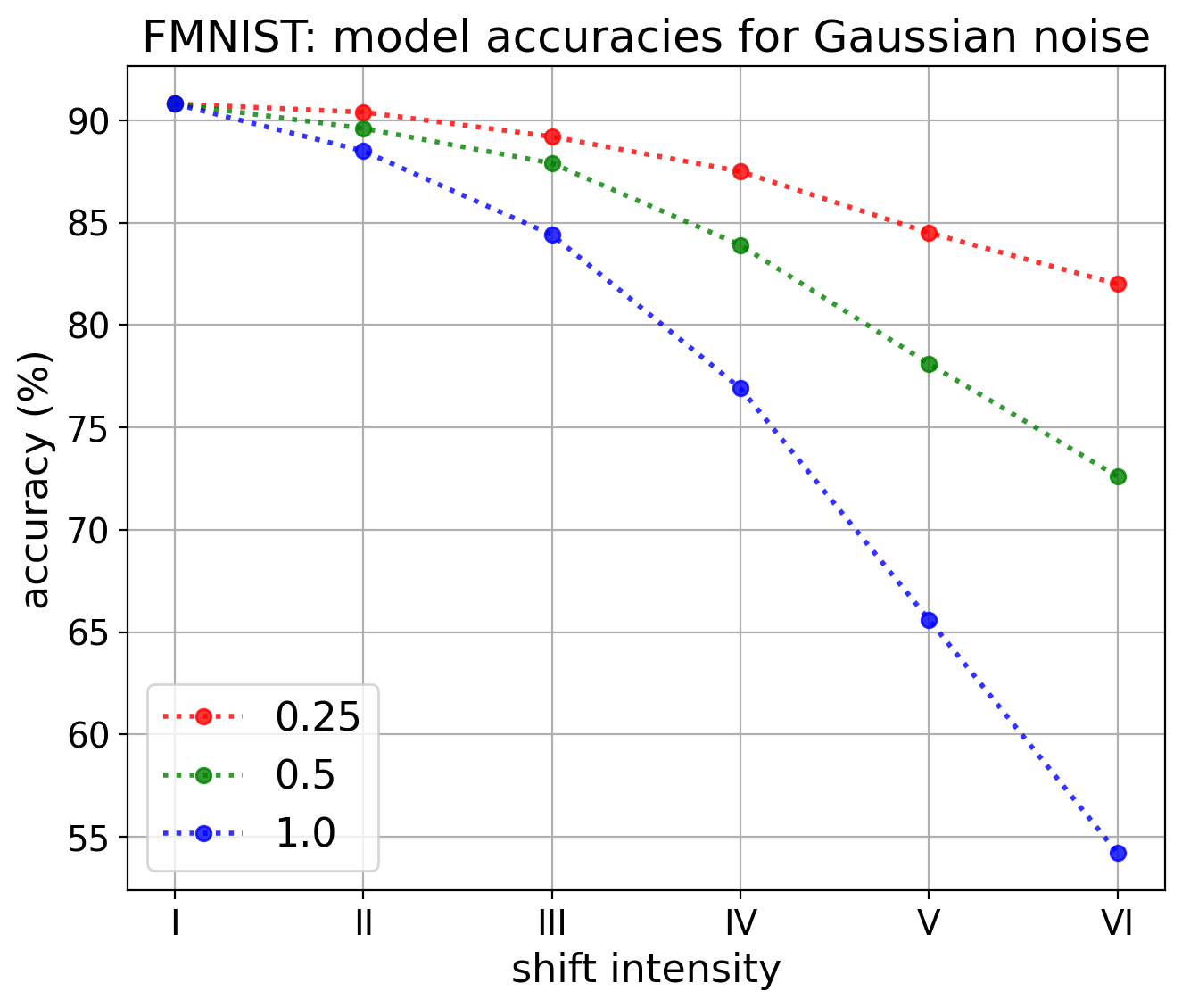}
    \includegraphics[width=0.33\textwidth]{./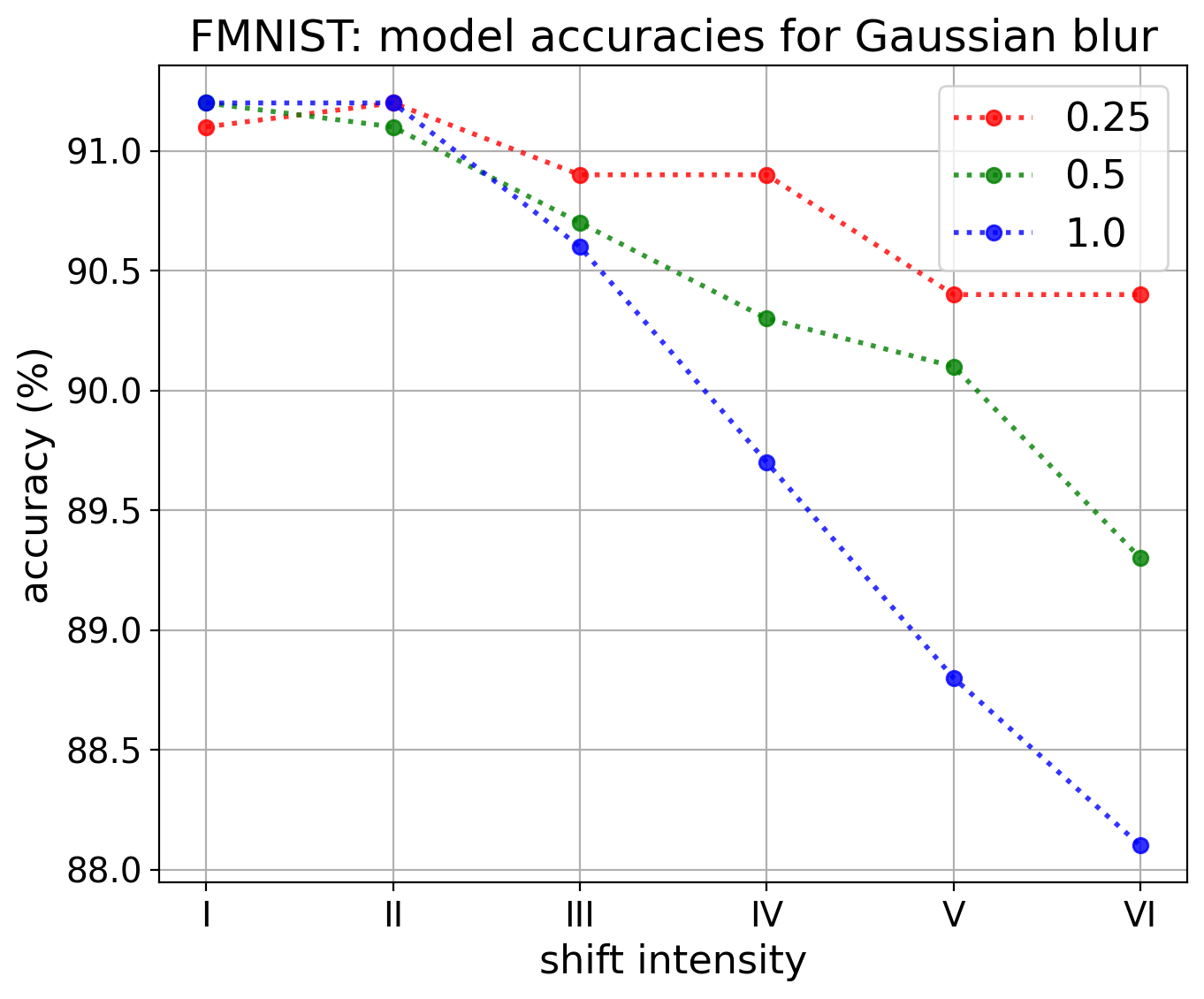}
   \includegraphics[width=0.32\textwidth]{./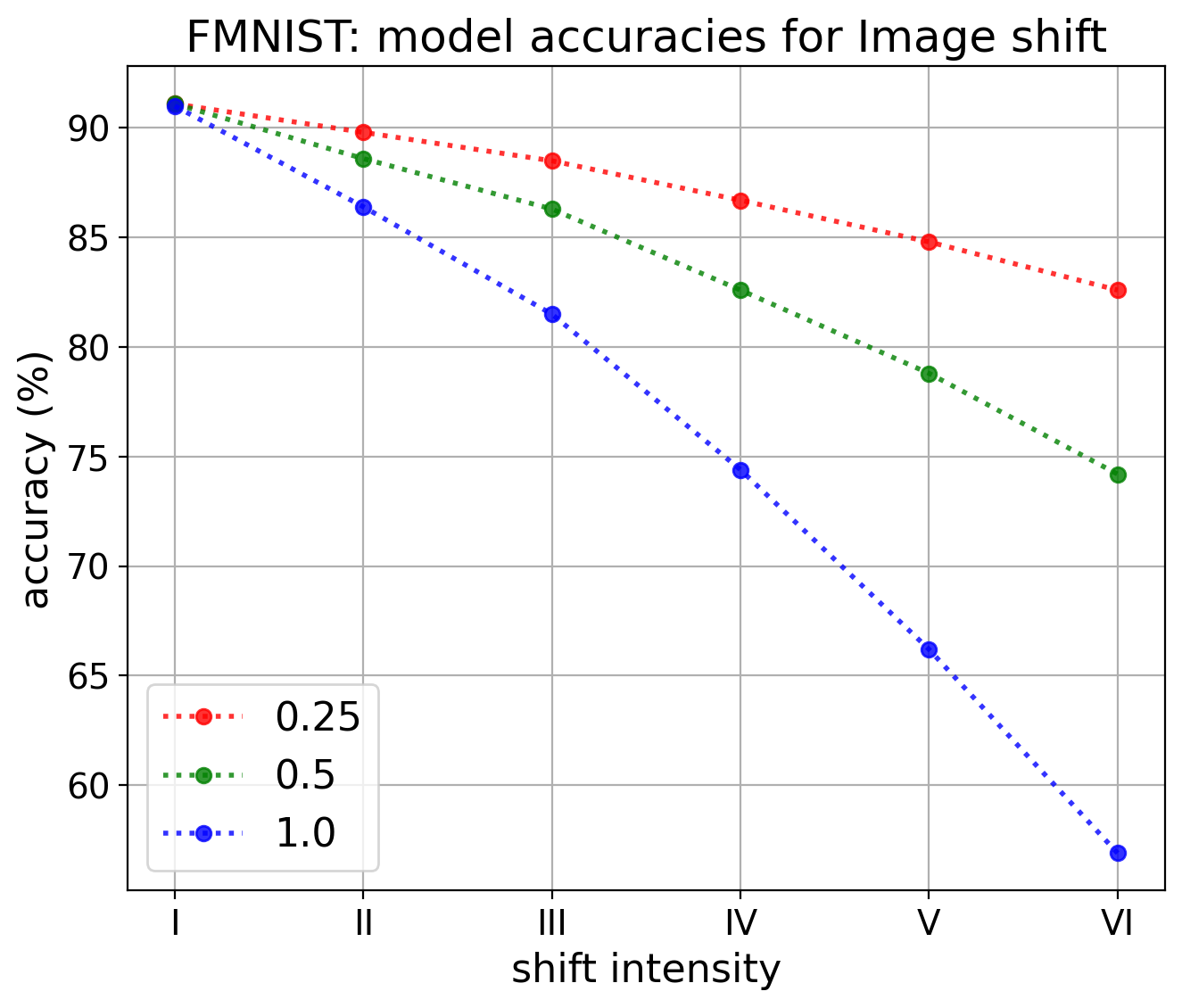}
   \includegraphics[width=0.32\textwidth]{./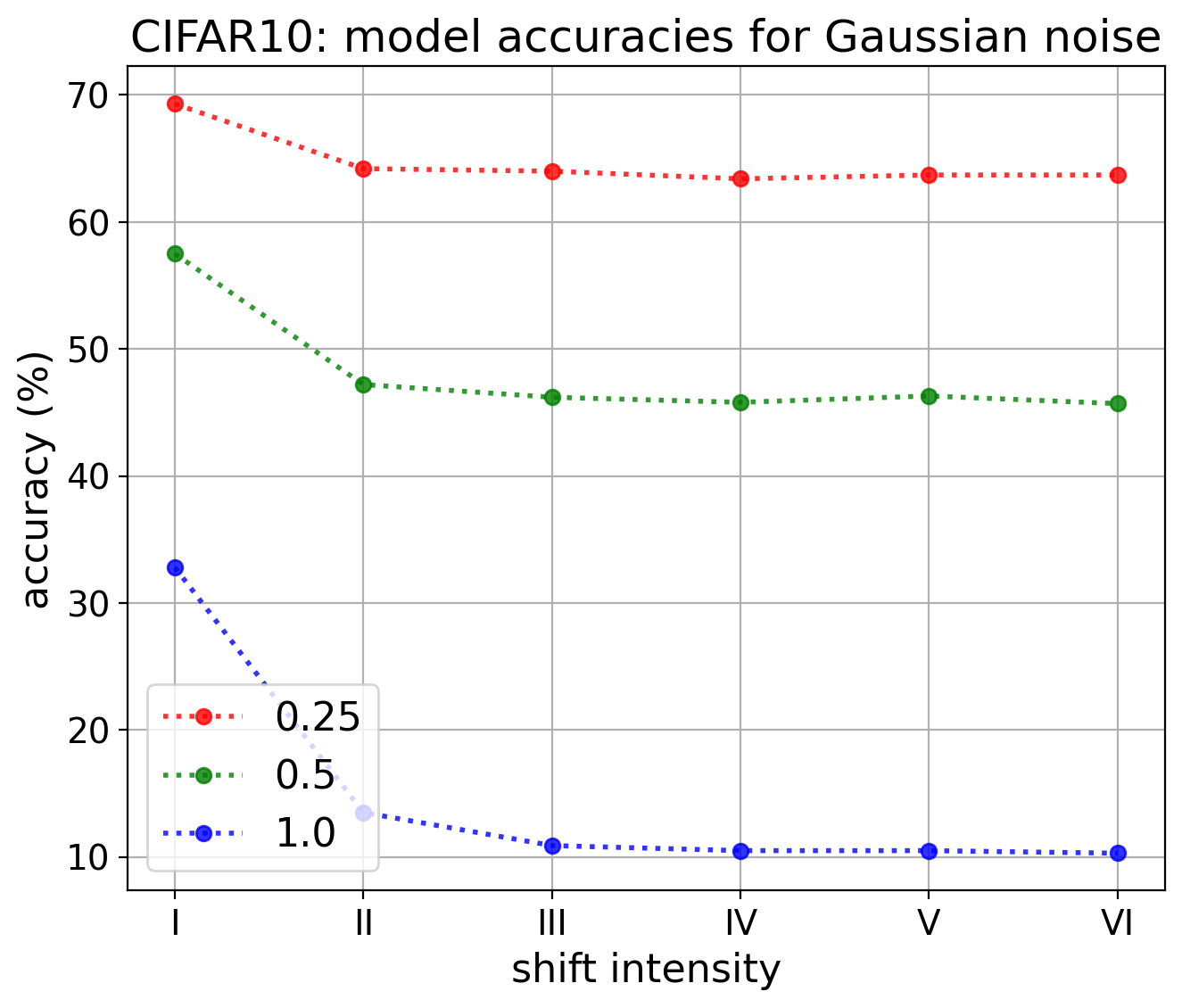}
    \includegraphics[width=0.32\textwidth]{./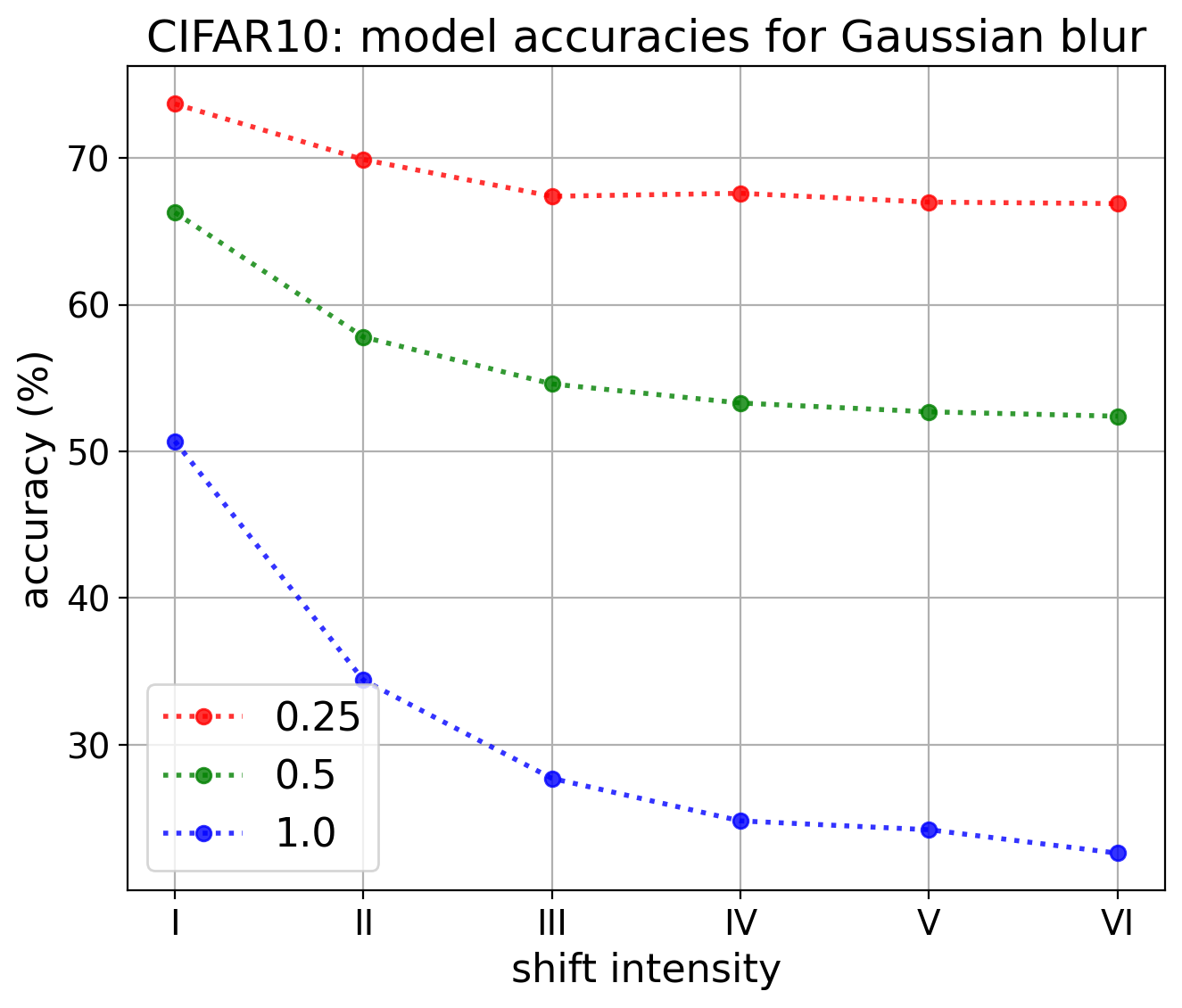}
   \includegraphics[width=0.32\textwidth]{./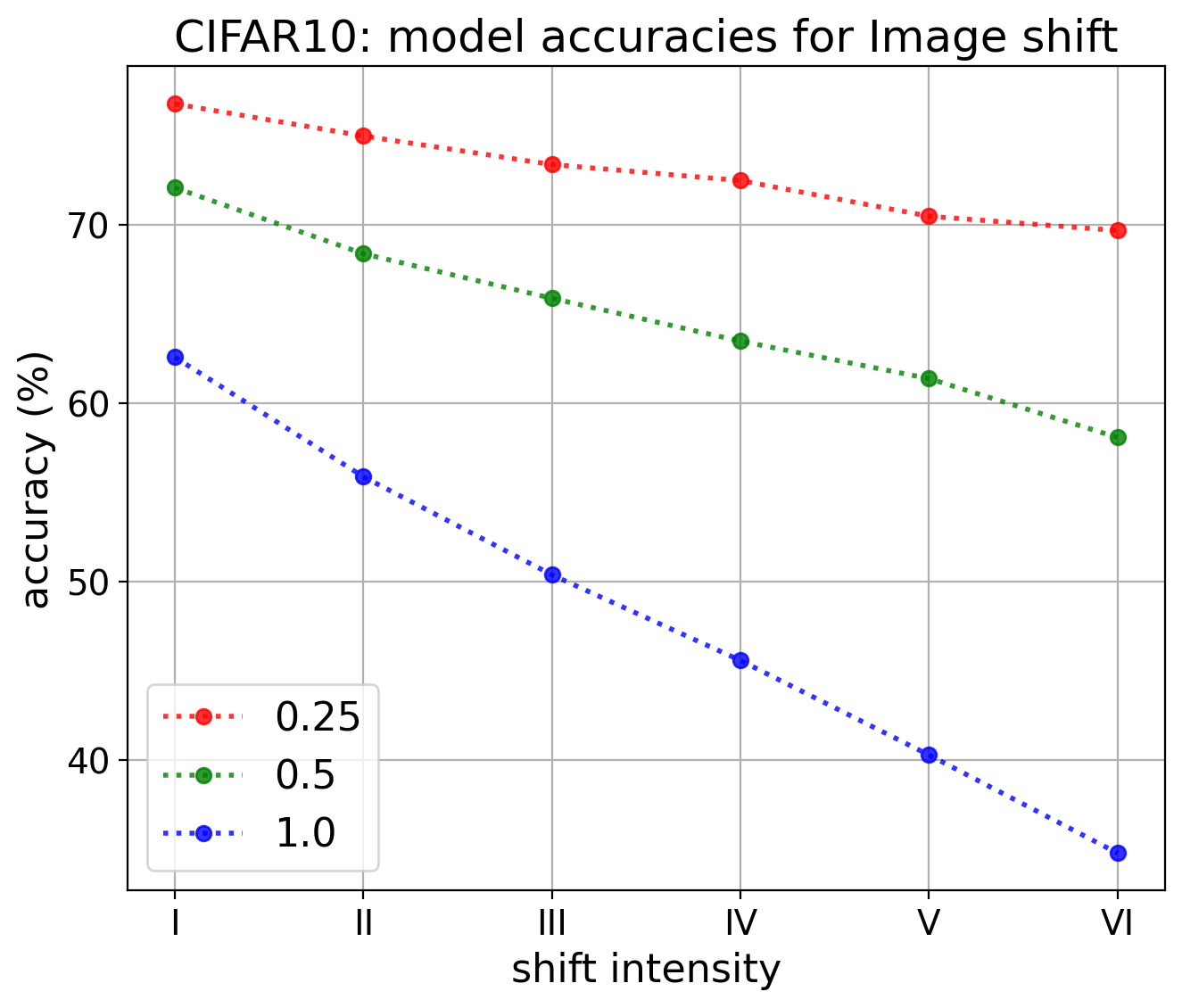}
    \includegraphics[width=0.32\textwidth]{./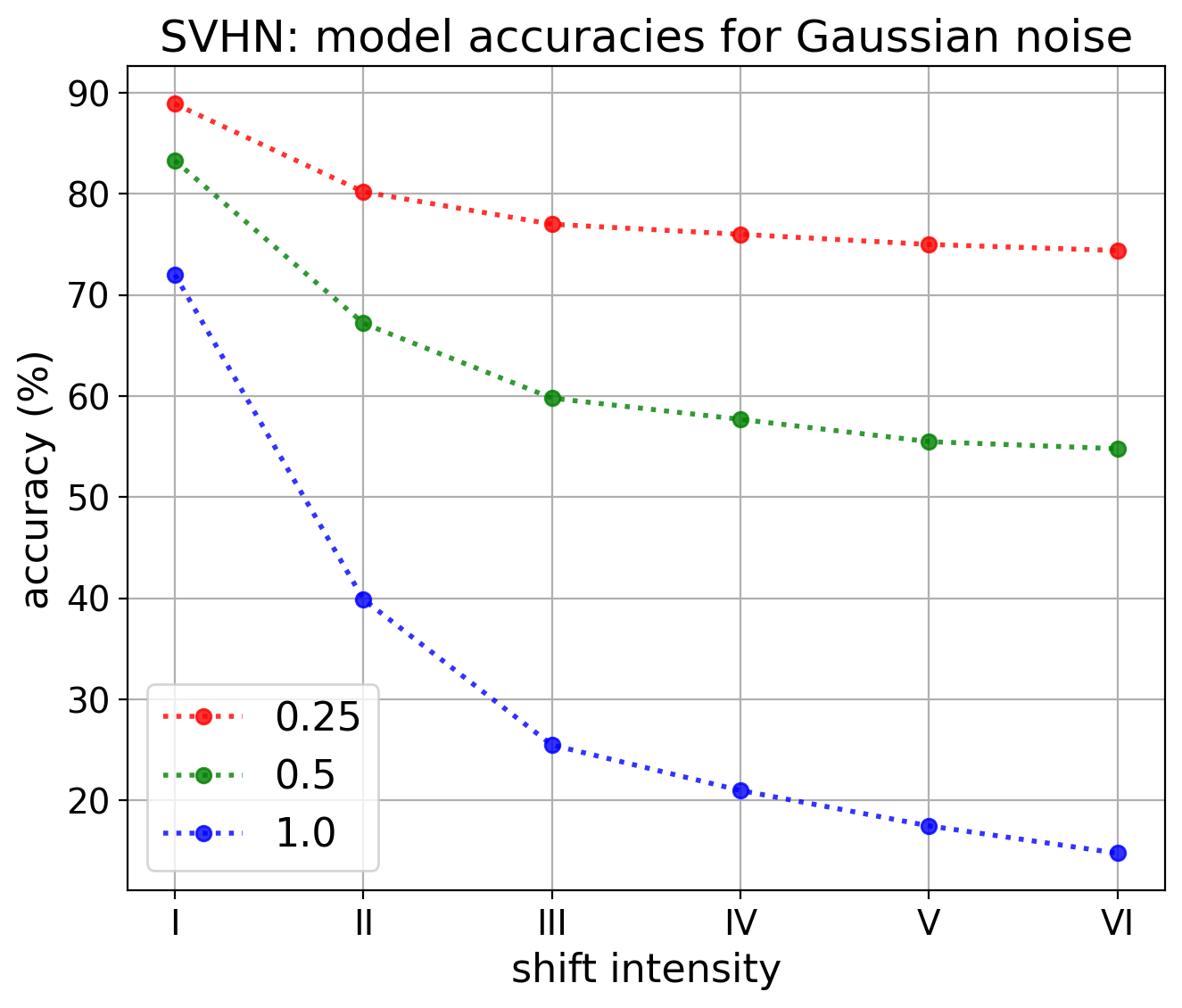}
    \includegraphics[width=0.32\textwidth]{./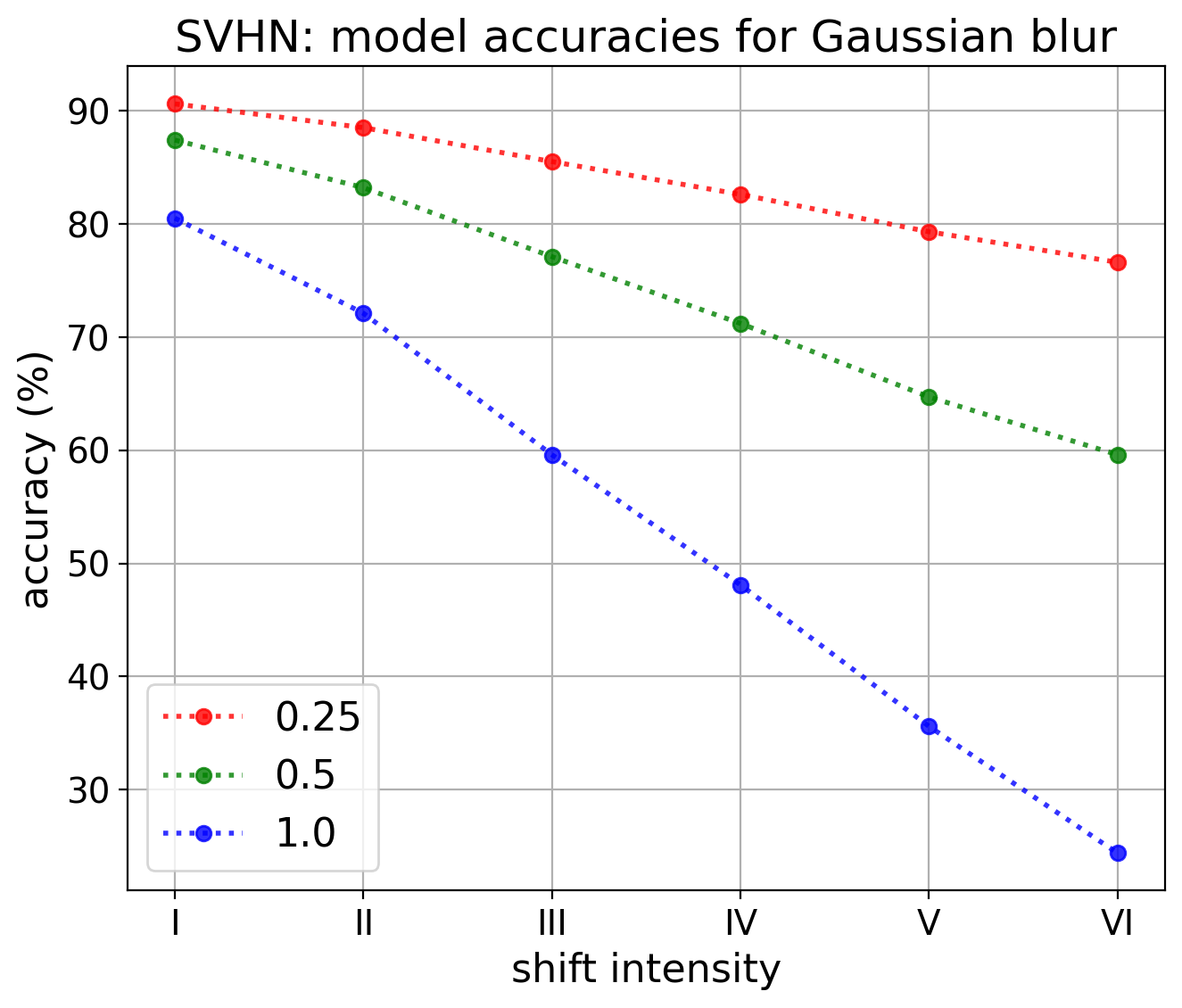}
   \includegraphics[width=0.32\textwidth]{./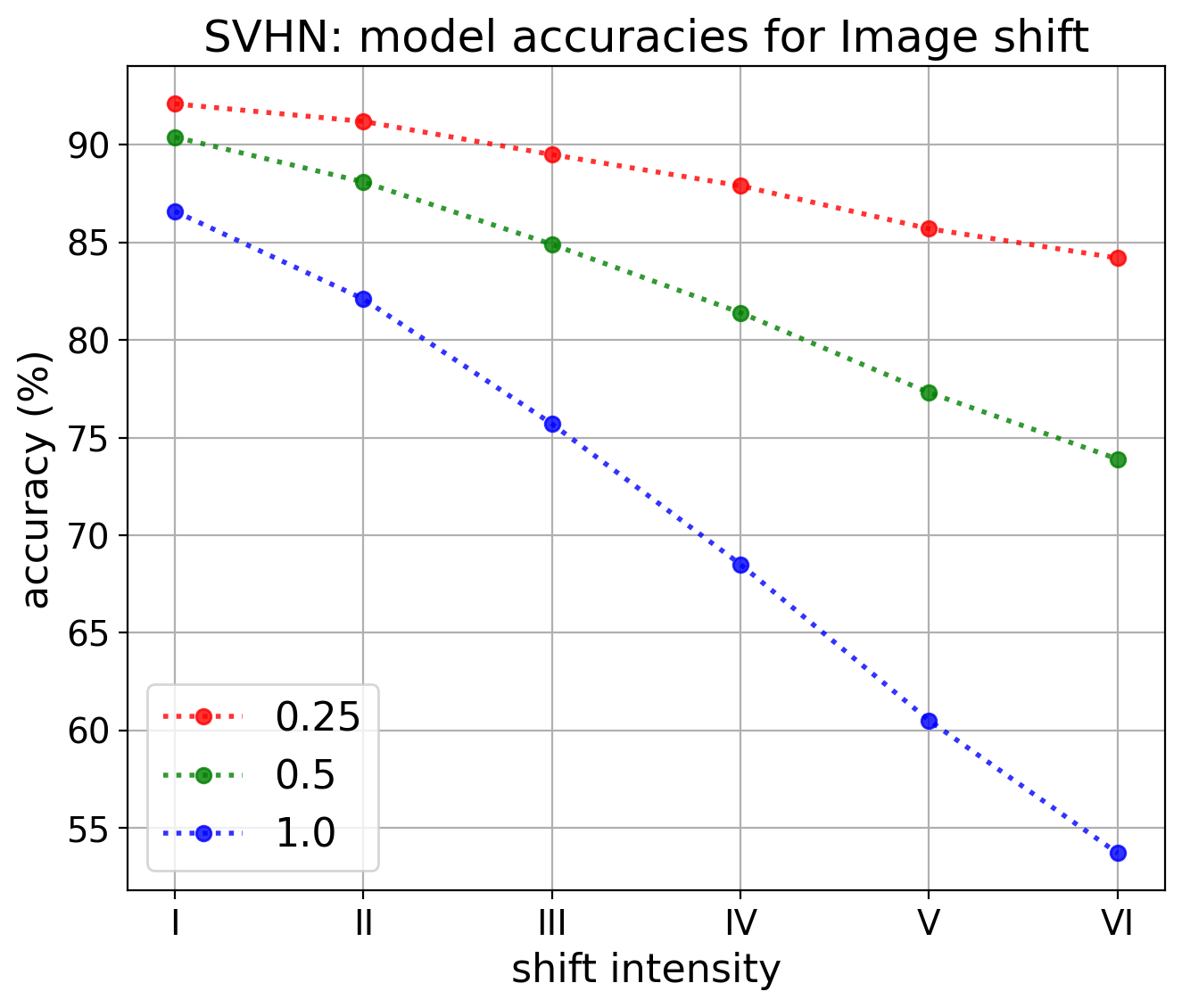}
    \includegraphics[width=0.327\textwidth]{./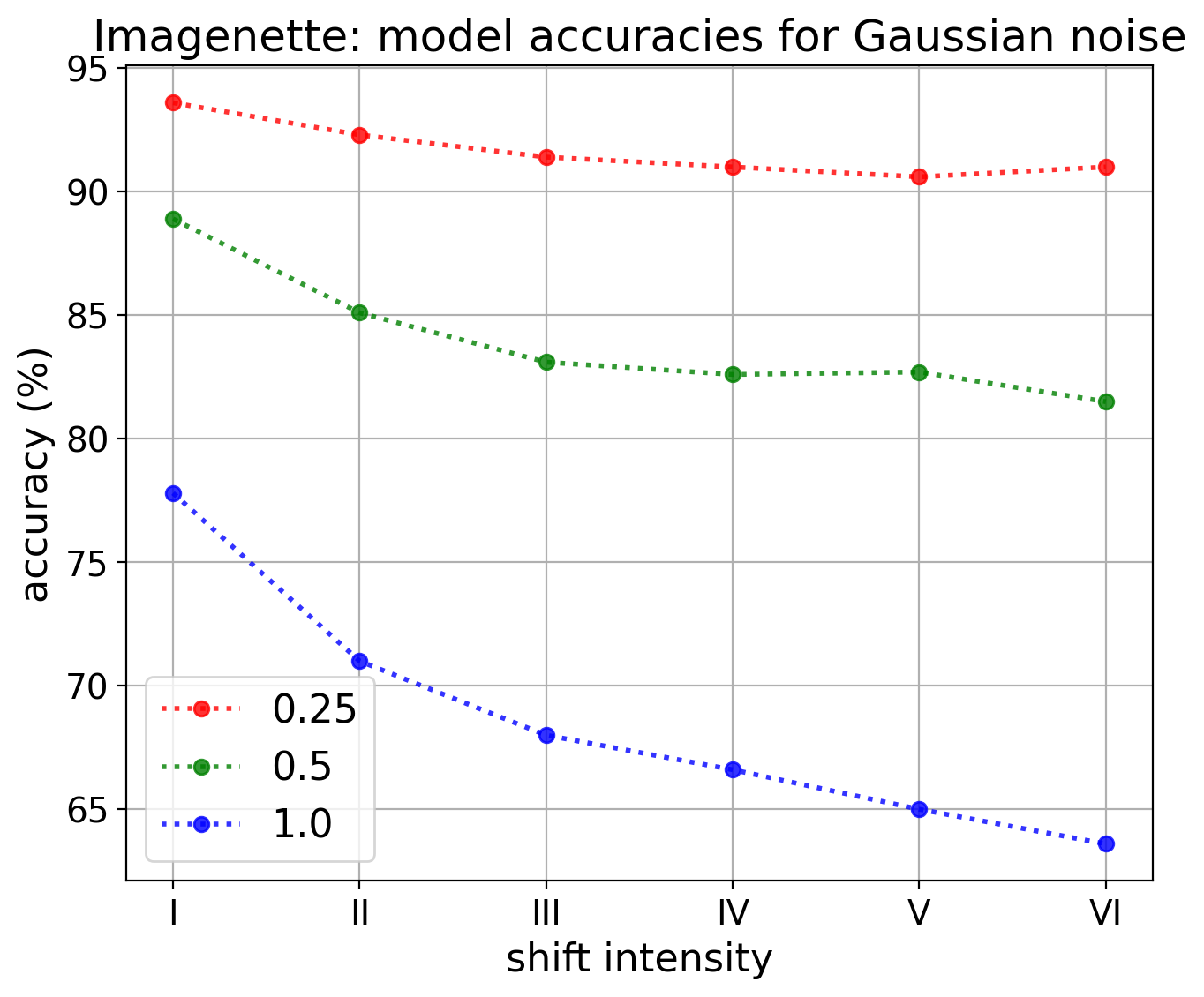}
    \includegraphics[width=0.32\textwidth]{./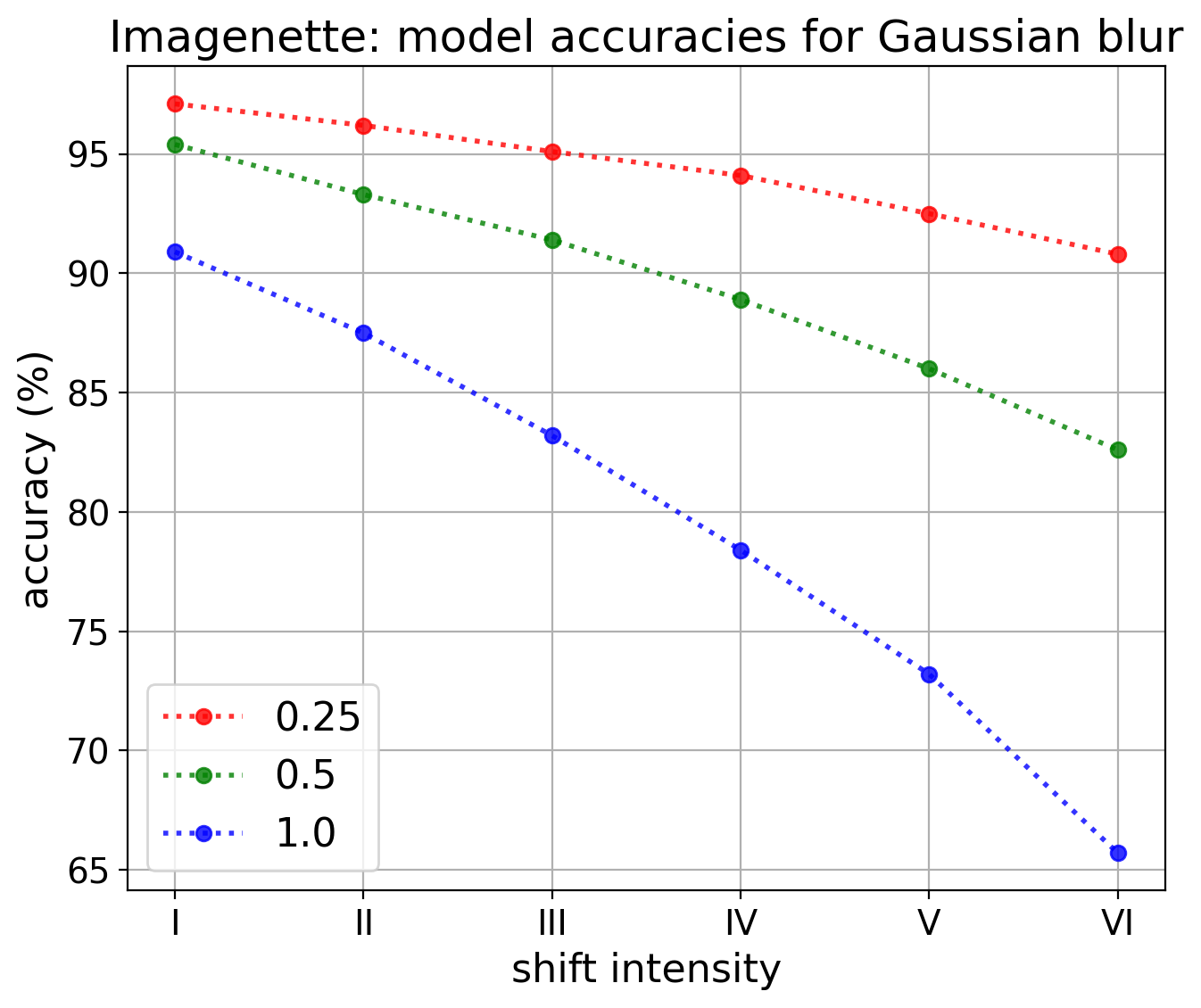}
   \includegraphics[width=0.32\textwidth]{./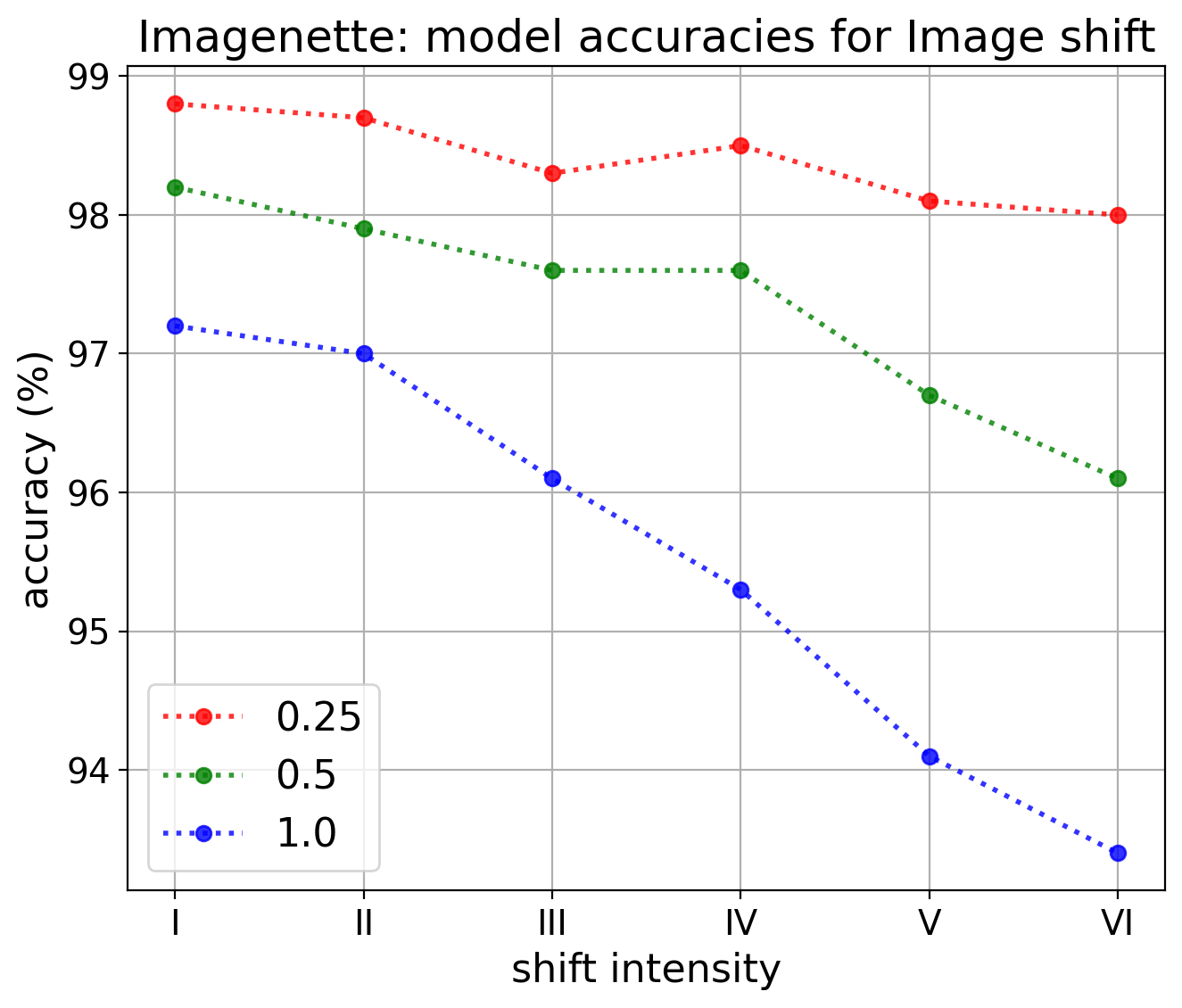}
    \caption{The impact of the shift type and intensity on the model accuracy for $\delta = 1.0$ (blue), $\delta = 0.5$ (green) and $\delta = 0.25$ (red).
    }
     \label{fig:model_accuracy_supplementary}
\end{figure*}

\paragraph{Norm variations.}
As mentioned in the main paper, many variations of \statname{} are conceivable.
Here, we present some experimental results for variations on the type of norm that is used to construct the \statname{} representations.
In Figure~\ref{fig:shift_intensity_norm_variations} we show the results where, instead of the Frobenius-norm, we consider the \emph{spectral norm} as well as the operator norm $\|\cdot\|_\infty$ induced by the sup-norm on vectors.
The spectral norm is equal to the largest singular value and $\|\cdot\|_\infty$ is defined by:
$$
\| M \|_\infty \coloneqq \sup_{x\neq 0} \frac{\| M x\|_\infty}{\|x\|_\infty} = \max_{1\leq i\leq m} \sum_{j=1}^n |m_{ij}|
$$
for $M\in \R^{m\times n}$.
Comparing to Figure~\ref{fig:shift_intensity_MNIST_supplementary}, we observe that the results for the Frobenius-norm and the spectral norm are almost identical.
However, while the results for the $\|\cdot\|_\infty$ are still better (in almost all cases) than those of the baseline CV, they are less powerful than those of the Frobenius norm.

\newpage
\subsection{Theoretical observations regarding the preservation of shift distributions by continuous functions}
In the main article, we mentioned the fact that under generic conditions, two distinct distributions remain distinct under the application of a non-constant continuous function (though this does not necessarily translate to good quantitative guarantees).
In this section, we make this assertion more formal and provide an elementary proof.

Let $X$ be a separable metric space, and denote by  $\mathcal{P}(X)$  the set of probability measures on $X$ equipped with its Borel $\sigma$-algebra.
Let $C_b(X)$ be the real bounded continuous functions on $X$. We consider the weak convergence topology on $\mathcal{P}(X)$; remember that a subbase for this topology is given by the sets 
$$U_{f,a,b} := \left\{\mu \in  \mathcal{P}(X) | \int_X f d \mu \in ]a,b[\right\},$$
for $f\in C_b(X)$ and $a<b\in\R$ (see for example \cite{kallianpur1961topology}).

Now let $X,Y$ be two such separable metric spaces with their Borel $\sigma$-algebra. Any measurable map $F:X\rightarrow Y$ induces a map 
\begin{align*}
    F_* :& \mathcal{P}(X)\rightarrow \mathcal{P}(Y)\\
     & \mu \mapsto F_*(\mu),
\end{align*}
 where $F_*(\mu)$ is the pushforward of $\mu$ by $F$, that is the measure on $\mathcal{P}(Y)$ characterized by $F_*(\mu)(A) = \mu(F^{-1}(A))$ for any Borel set $A \subset Y$.

\begin{fact}\label{Fact:pushforward_continuous}
    If $F:X\rightarrow Y $ is continuous, then $ F_* : \mathcal{P}(X)\rightarrow \mathcal{P}(Y)$ is continuous for the weak convergence topology.
\end{fact}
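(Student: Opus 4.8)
The plan is to prove continuity directly from the definition via subbasic open sets. Since a subbase for the weak convergence topology on $\mathcal{P}(Y)$ is given by the sets $U_{g,a,b} = \{\nu \in \mathcal{P}(Y) \mid \int_Y g\, d\nu \in \,]a,b[\,\}$ for $g \in C_b(Y)$ and $a < b \in \R$, it suffices to check that $F_*^{-1}(U_{g,a,b})$ is open in $\mathcal{P}(X)$ for every such $g$, $a$, $b$.

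The key step is the change-of-variables (pushforward) formula: for any bounded measurable $g \colon Y \to \R$ and any $\mu \in \mathcal{P}(X)$, one has $\int_Y g\, d(F_*\mu) = \int_X (g\circ F)\, d\mu$. This is the standard fact that the pushforward measure integrates functions by precomposition; I would recall it (it follows by the usual machinery: verify it for indicator functions of Borel sets from the definition $F_*(\mu)(A) = \mu(F^{-1}(A))$, extend to simple functions by linearity, then to bounded measurable $g$ by approximation), but treat it as known measure theory rather than grinding through it.

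With this in hand, the argument closes immediately. If $g \in C_b(Y)$, then $g \circ F$ is bounded and, because $F$ is continuous, $g \circ F \in C_b(X)$. Hence
\[
F_*^{-1}(U_{g,a,b}) = \left\{ \mu \in \mathcal{P}(X) \;\Big|\; \int_Y g\, d(F_*\mu) \in \,]a,b[\, \right\} = \left\{ \mu \in \mathcal{P}(X) \;\Big|\; \int_X (g\circ F)\, d\mu \in \,]a,b[\, \right\} = U_{g\circ F,\, a,\, b},
\]
which is a subbasic (hence open) set of $\mathcal{P}(X)$. A map whose inverse sends every subbasic open set to an open set is continuous, so $F_*$ is continuous for the weak convergence topology.

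\textbf{Main obstacle.} There is no real obstacle here: the only substantive input is the pushforward integration formula, and the continuity of $F$ enters solely to guarantee $g \circ F \in C_b(X)$ so that $U_{g \circ F, a, b}$ is genuinely a member of the chosen subbase. The one point to be slightly careful about is that one works with the subbase rather than a general open set — but continuity against a subbase is equivalent to continuity, so this is harmless.
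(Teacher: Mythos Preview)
Your proof is correct and follows exactly the same approach as the paper: you show that $F_*^{-1}(U_{g,a,b}) = U_{g\circ F,\,a,\,b}$ via the change-of-variables formula, and conclude by the subbase criterion for continuity. The paper's own proof is the one-line version of this, omitting the justification of the pushforward integration identity that you spell out.
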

\begin{proof}
    Given $f\in C_b(X)$ and $a<b\in\R$, we see that $F_*^{-1}(U_{f,a,b}) = U_{f\circ F,a,b}$, which is enough to conclude by the definition of subbases.
\end{proof}

The following result follows from  standard arguments; we give an elementary proof for the convenience of the reader.

\begin{proposition}\label{prop:change_in_distribution}
Let $F:X\rightarrow \R $ be continuous and non-constant for $X$ a separable metric space, and let $\nu \in F_*(\mathcal{P}(X))\subset \mathcal{P}(\R)$. Then the complement $F_*^{-1}(\{\nu\})^c = \mathcal{P}(X)\backslash F_*^{-1}(\{\nu\})$ of the set $F_*^{-1}(\{\nu\})$ is a dense open set of $\mathcal{P}(X) $ for the weak topology.
\end{proposition}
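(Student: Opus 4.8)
The plan is to establish the two properties---openness and density---separately, using Fact~\ref{Fact:pushforward_continuous} for the first and the non-constancy of $F$ for the second. First I would show that $F_*^{-1}(\{\nu\})$ is closed: since singletons are closed in $\mathcal{P}(\R)$ for the weak topology (which is Hausdorff, being generated by a separating family of functionals $\mu \mapsto \int f\,d\mu$), and $F_*$ is continuous by Fact~\ref{Fact:pushforward_continuous}, the preimage $F_*^{-1}(\{\nu\})$ is closed, hence its complement is open.

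For density, the key observation is that $F$ non-constant means there exist points $x_0, x_1 \in X$ with $F(x_0) \neq F(x_1)$. Given any $\mu \in \mathcal{P}(X)$ and any basic open neighborhood $\bigcap_{k=1}^n U_{f_k, a_k, b_k}$ of $\mu$, I want to produce a measure $\mu'$ in this neighborhood that does \emph{not} push forward to $\nu$. The idea is to perturb $\mu$ slightly toward one of the Dirac masses $\delta_{x_0}$ or $\delta_{x_1}$: consider $\mu_t = (1-t)\mu + t\,\delta_{x_j}$ for small $t > 0$. As $t \to 0$, $\int f_k \, d\mu_t \to \int f_k\,d\mu$ for each $k$, so $\mu_t$ lies in the chosen neighborhood for $t$ small enough. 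The remaining task is to choose $j \in \{0,1\}$ and $t$ so that $F_*(\mu_t) \neq \nu$. Here one uses that $F_*(\mu_t) = (1-t)F_*(\mu) + t\,\delta_{F(x_j)}$, so moving mass to $x_0$ versus $x_1$ shifts the pushforward by $t(\delta_{F(x_0)} - \delta_{F(x_1)})$ relative to each other; since $F(x_0) \neq F(x_1)$, the two candidate perturbed pushforwards $(1-t)F_*(\mu) + t\delta_{F(x_0)}$ and $(1-t)F_*(\mu) + t\delta_{F(x_1)}$ are distinct, so for each fixed small $t$ at most one of them can equal $\nu$; pick the other one.

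The main obstacle I anticipate is handling the edge case where $\mu$ itself already satisfies $F_*(\mu) \neq \nu$ versus $F_*(\mu) = \nu$ cleanly, and making the ``at most one of two can be $\nu$'' argument airtight. Actually the two-Dirac trick resolves this uniformly: for any fixed $t \in (0,1)$, the map $j \mapsto (1-t)F_*(\mu) + t\delta_{F(x_j)}$ takes two distinct values in $\mathcal{P}(\R)$ (because $\delta_{F(x_0)} \neq \delta_{F(x_1)}$ and the map $\eta \mapsto (1-t)F_*(\mu) + t\eta$ is injective on measures), so at most one equals $\nu$; choosing the index $j$ giving the other value and $t$ small enough to stay in the neighborhood finishes the argument. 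One should also note $x_0, x_1$ lie in $X$ so $\delta_{x_0}, \delta_{x_1} \in \mathcal{P}(X)$ and the convex combinations are genuine probability measures, and that $\int f_k\,d\mu_t = (1-t)\int f_k\,d\mu + t f_k(x_j) \to \int f_k\,d\mu$ uses only boundedness of $f_k$, so no integrability subtlety arises. This shows every basic open set meets the complement, i.e. the complement is dense.
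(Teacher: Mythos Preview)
Your proof is correct, and the openness argument matches the paper's exactly. The density argument, however, takes a genuinely different and somewhat cleaner route than the paper's.

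The paper fixes $\mu\in F_*^{-1}(\{\nu\})$, picks a point $x_1$ in the support of $\mu$, finds $x_2$ with $F(x_2)\neq F(x_1)$, and then \emph{transfers mass} from a small ball $B(x_1,\epsilon)$ (on which $F>F(x_2)$ by continuity) to the Dirac $\delta_{x_2}$, defining $\mu_t(A)=\mu(A\setminus B)+(1-t)\mu(A\cap B)+tm\,1_{x_2\in A}$ with $m=\mu(B)>0$. It then checks directly that $F_*(\mu_t)(]F(x_2),+\infty[)$ strictly decreases, forcing $F_*(\mu_t)\neq\nu$, and that $|\int f_i\,d\mu_t-\int f_i\,d\mu|\leq 2tm\|f_i\|_\infty$ to stay in the basic neighborhood.

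Your argument instead perturbs by a global convex combination $\mu_t=(1-t)\mu+t\delta_{x_j}$ and exploits the injectivity of $\eta\mapsto(1-t)F_*(\mu)+t\eta$ to guarantee that the two candidate pushforwards (for $j=0,1$) are distinct, so at most one can equal $\nu$. This sidesteps the need to locate a support point, to use continuity of $F$ on a ball, or to compare half-line measures; it also handles the cases $F_*(\mu)=\nu$ and $F_*(\mu)\neq\nu$ uniformly. The paper's construction, on the other hand, makes the mechanism of ``shifting mass to change the pushforward'' more explicit and quantitative, which aligns with the paper's motivation of detecting distribution shifts. Both approaches are elementary; yours is shorter.
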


\begin{proof}
As $\R$ is separable and metric, it is easy to show that the singleton $\{\nu\}\in \mathcal{P}(\R)$ is closed (see for example \cite[Thm 4.1]{kallianpur1961topology}). As we know from Fact \ref{Fact:pushforward_continuous} that $F_*$ is continuous, we conclude that $F_*^{-1}(\{\nu\})$ is closed and $F_*^{-1}(\{\nu\})^c$ is open.

It remains to show that it is dense in $\mathcal{P}(X)$. 
Let $\mu$ belong to $F_*^{-1}(\{\nu\})$, and let $V\subset\mathcal{P}(X)$ be an open set containing $\mu$. We have to show that $F_*^{-1}(\{\nu\})^c\cap V$ is non-empty.
Thanks to the definition of the weak topology, we can assume (by potentially taking a subset of $V$) that 
$$V = \bigcap_{i=1}^n \left\{\tilde{\mu}\in \mathcal{P}(X) \text{ s.t. } \int_X f_i d\tilde{\mu}  \in ]a_i,b_i[\right\}$$ for some $f_1,\ldots, f_n\in  C_b(X)$ and $a_1, b_1,\ldots,a_n,b_n \in \R$ with $a_i<b_i$ for all $i$.
Let $x_1$ be any point in the support of $\mu$. Then $\mu(B(x_1,\delta)) >0$ for all $\delta>0$ by definition of the support.
As $F$ is  non-constant, there exists $x_2\in X$ such that $F(x_2)$ is not equal to $F(x_1)$. Let us assume that $F(x_1)>F(x_2)$ (the proof is similar if $F(x_2)>F(x_1)$).
By continuity, there exists $\epsilon>0$ such that $F(x)>F(x_2)$ for any $x\in B(x_1, \epsilon)$. Define $m:=\mu(B(x_1,\epsilon))>0$.
For $t\in ]0,1[$, we define a new measure $\mu_t$ as follows : for any measurable set $A$, we let 
$$\mu_t\left(A \right)= \mu\left(A\backslash  B(x_1,\epsilon)  \right) + (1-t) \mu(B(x_1,\epsilon)\cap A) + tm 1_{x_2\in A}.$$
For any such $t\in ]0,1[$, observe that 
\begin{align*}
     F_*(\mu_t)(]F(x_2),+\infty[) &= 
     \mu_t(F^{-1}(]F(x_2),+\infty[) \\
     &=     F_*(\mu)(]F(x_2),+\infty[) -t\mu(B(x_1,\epsilon)) \\
     &<F_*(\mu)(]F(x_2),+\infty[),
\end{align*} 
which shows that $F_*(\mu)\neq F_*(\mu_t)$, hence that $\mu_t \in F_*^{-1}(\{\nu\})^c$.

On the other hand, we see that $|\int_X f_i d\mu_t  - \int_X f_i d\mu |<2tm||f_i||_\infty$ for $i=1,\ldots,n$. Since $\mu\in V = \bigcap_{i=1}^n \left\{\tilde{\mu}\in \mathcal{P}(X) \text{ s.t. } \int_X f_i d\tilde{\mu}  \in ]a_i,b_i[\right\}$, thus $\mu_t \in V $ for $t\in ]0,1[$ small enough.
This shows that $V\cap F_*^{-1}(\{\nu\})^c$ is non-empty, and thus we conclude that $F_*^{-1}(\{\nu\})^c$ is dense in $\mathcal{P}(X)$.

\end{proof}
As a direct corollary, we get the following statement, where \textit{generic}, as above, means that the property is true for any random variable $x'$ whose distribution belongs to a fixed dense open set of the space of distributions on $\R^n$ :
\begin{corollary}
Let $F:\R^n \rightarrow \R^k$ be a non-constant continuous function represented by a neural network, and let $x$ be a random variable on $\R^n$. For a generic random variable $x'$ on $\R^n$, the distribution of $F(x')$ will be different from that of $F(x)$.
\end{corollary}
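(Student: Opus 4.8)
The plan is to deduce the $\R^k$-valued statement from the $\R$-valued Proposition~\ref{prop:change_in_distribution} by composing with a suitable coordinate projection. First I would note that a neural network $F$ as defined in Section~\ref{sec:background} is a composition of affine maps and (continuous) element-wise activations, hence $F \colon \R^n \to \R^k$ is continuous; since $\R^n$ is separable and metric, the framework of Fact~\ref{Fact:pushforward_continuous} and Proposition~\ref{prop:change_in_distribution} applies with $X = \R^n$.

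Next, since $F$ is non-constant, at least one of its coordinate functions $F_j = \pi_j \circ F \colon \R^n \to \R$ is non-constant, where $\pi_j \colon \R^k \to \R$ denotes the $j$-th coordinate projection; this $F_j$ is also continuous. Let $\mu_x \in \mathcal{P}(\R^n)$ be the law of $x$ and set $\nu_j := (F_j)_*(\mu_x) \in (F_j)_*(\mathcal{P}(\R^n)) \subset \mathcal{P}(\R)$. Applying Proposition~\ref{prop:change_in_distribution} to $F_j$ and $\nu_j$, the set
\[
W := (F_j)_*^{-1}(\{\nu_j\})^c \subset \mathcal{P}(\R^n)
\]
is a dense open set for the weak topology. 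I then claim $W$ is the required genericity set: if $\mu \in W$, then $(F_j)_*(\mu) \neq \nu_j = (F_j)_*(\mu_x)$, and since $F_j = \pi_j \circ F$, functoriality of the pushforward gives $(F_j)_*(\mu) = (\pi_j)_*\big(F_*(\mu)\big)$ and $(F_j)_*(\mu_x) = (\pi_j)_*\big(F_*(\mu_x)\big)$. As these $j$-th marginals differ, necessarily $F_*(\mu) \neq F_*(\mu_x)$. Hence for any random variable $x'$ whose law lies in the dense open set $W$, the distribution of $F(x')$ differs from that of $F(x)$, which is exactly the claimed genericity.

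I do not expect a genuine obstacle here: all the analytic content is already carried by Proposition~\ref{prop:change_in_distribution} and Fact~\ref{Fact:pushforward_continuous}, and the only points requiring (elementary) care are that a non-constant vector-valued map admits a non-constant coordinate and that distinct marginals force distinct joint laws. As an alternative to the coordinate reduction, one could instead repeat the proof of Proposition~\ref{prop:change_in_distribution} verbatim with $\R$ replaced by $\R^k$, substituting the comparison ``$F(x_1) > F(x_2)$'' with a choice of open half-space of $\R^k$ strictly separating $F(x_1)$ from $F(x_2)$; the projection argument above is essentially just a way of outsourcing that separation step to the one-dimensional case already proved.
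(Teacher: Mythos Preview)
Your proposal is correct and follows essentially the same route as the paper: pick a non-constant coordinate $F_j$, apply Proposition~\ref{prop:change_in_distribution} to it, and observe that distinct one-dimensional marginals force distinct laws for $F$. Your write-up is in fact slightly more explicit than the paper's (you spell out the functoriality $(F_j)_* = (\pi_j)_* \circ F_*$ and the continuity of $F$), but the argument is the same.
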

\begin{proof}
$\R^n$ is a separable metric space, and if $F$ is non-constant, so is at least one of its coordinate functions $F_i:\R^n \rightarrow \R$, to which Proposition \ref{prop:change_in_distribution} then applies. If the distribution of $F_i(x')$ is different from that of $F_i(x)$, then the distribution of $F(x')$ is different from that of $F(x)$.
\end{proof}

\begin{figure*}[h]
    \centering
    
    \includegraphics[width=0.32\textwidth]{./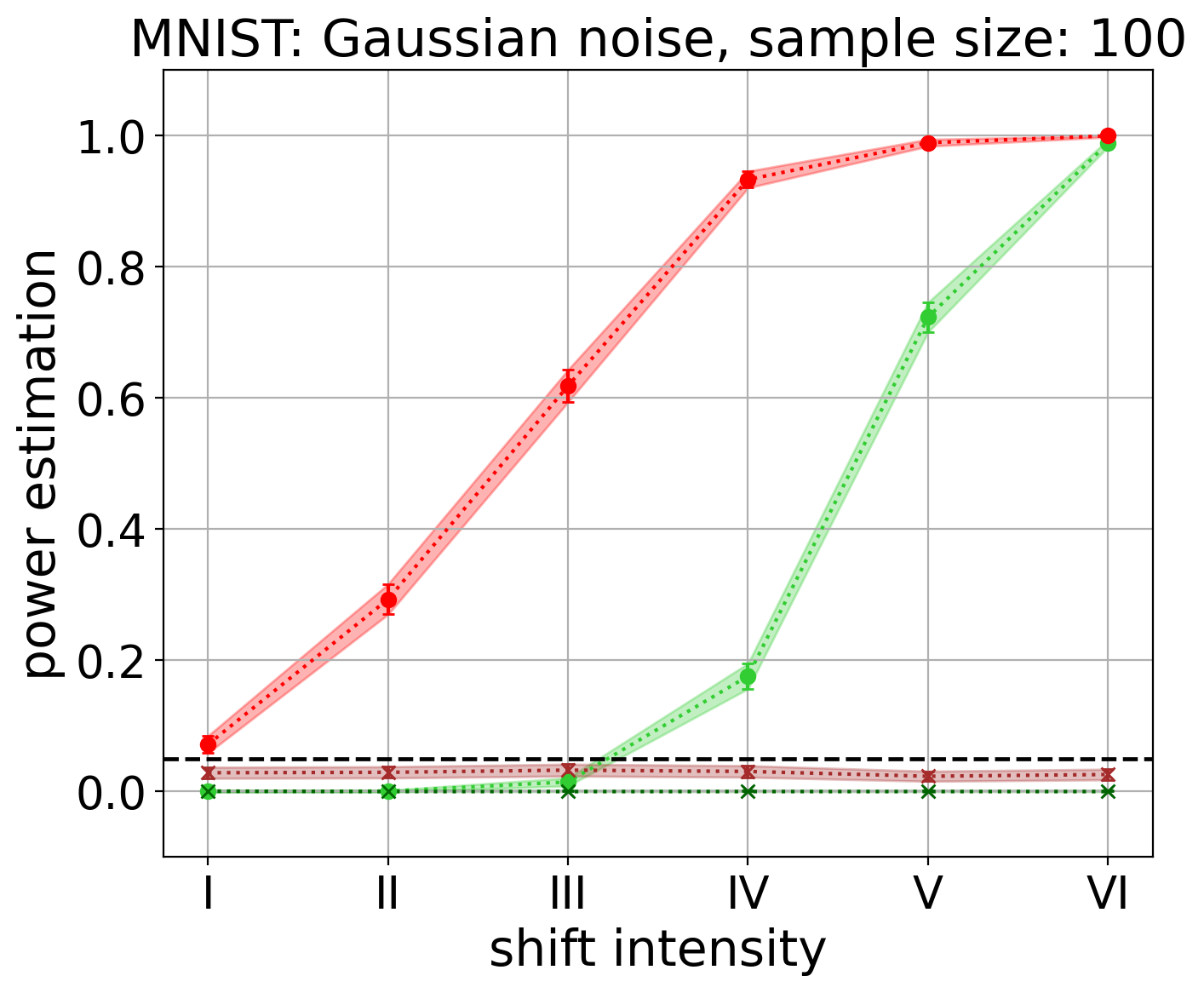}
    \includegraphics[width=0.32\textwidth]{./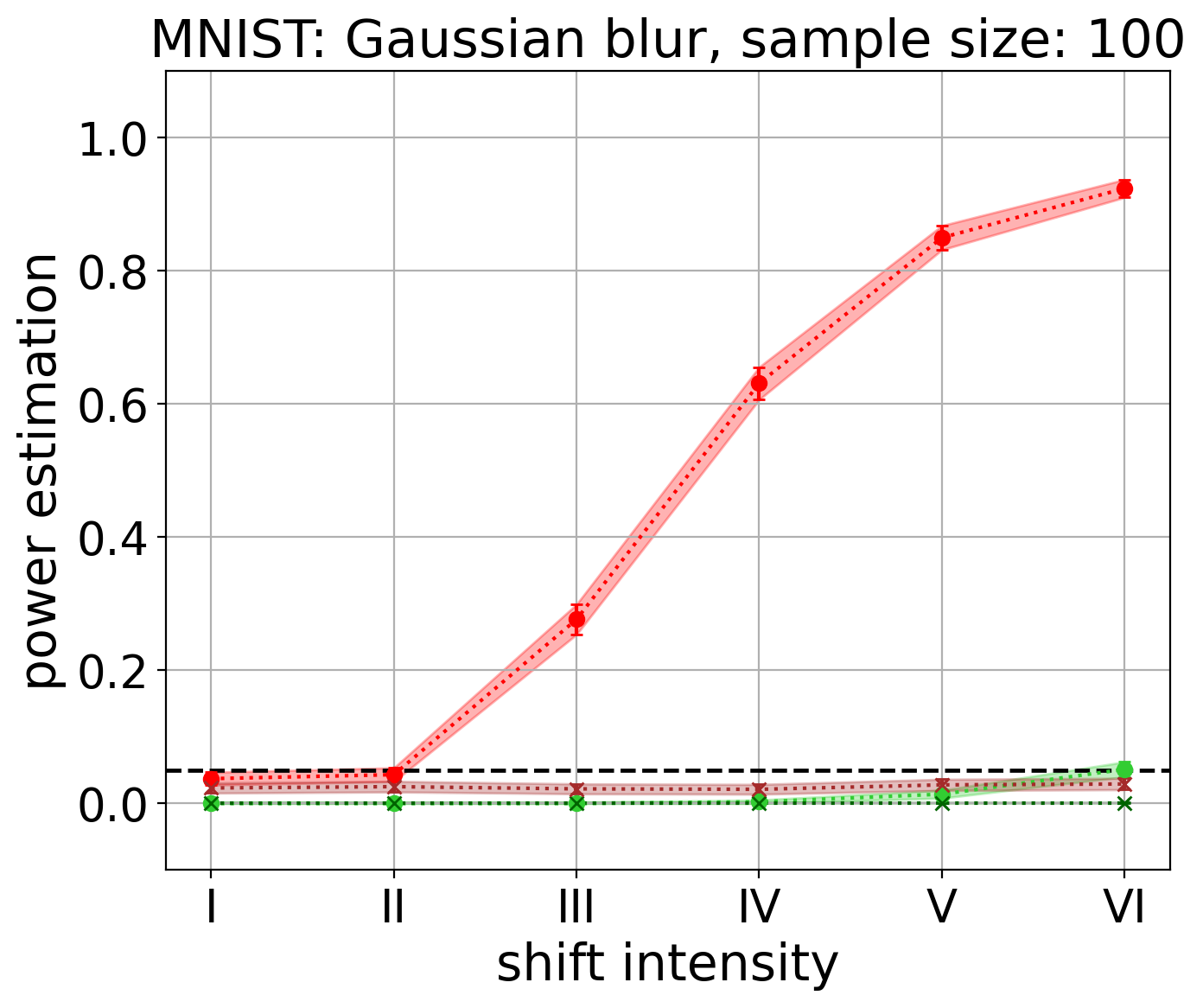}
   \includegraphics[width=0.32\textwidth]{./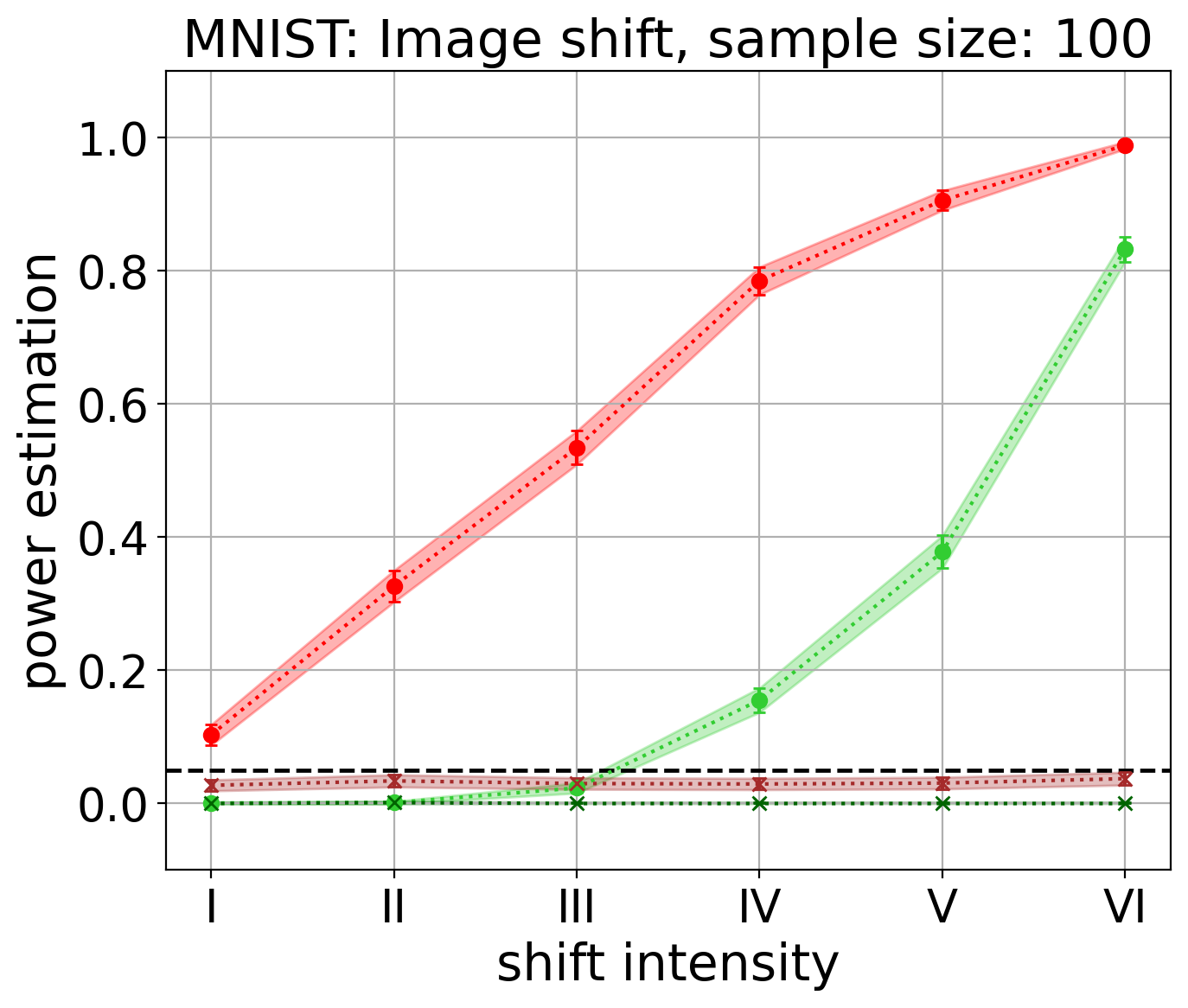}
    \includegraphics[width=0.32\textwidth]{./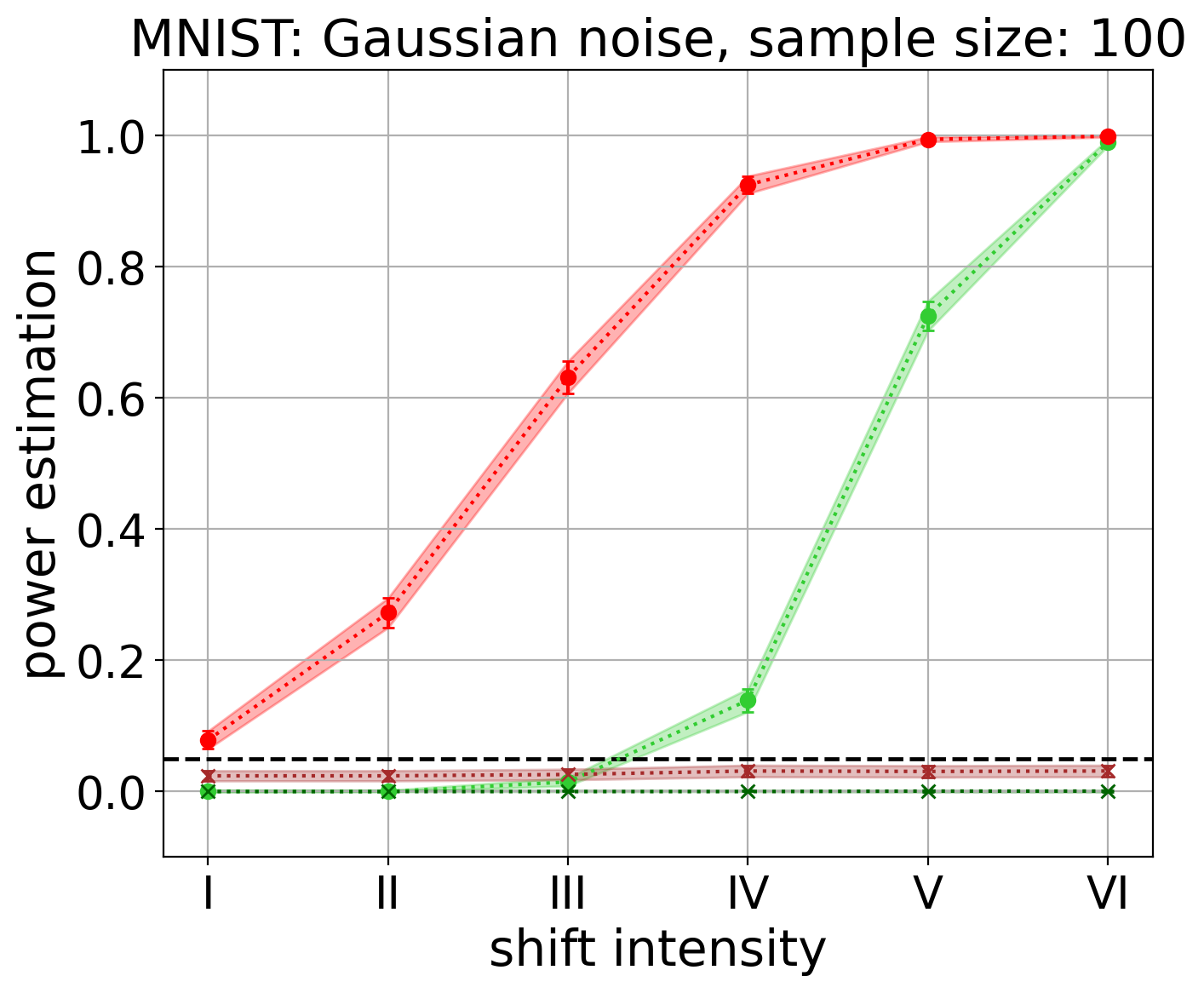}
    \includegraphics[width=0.32\textwidth]{./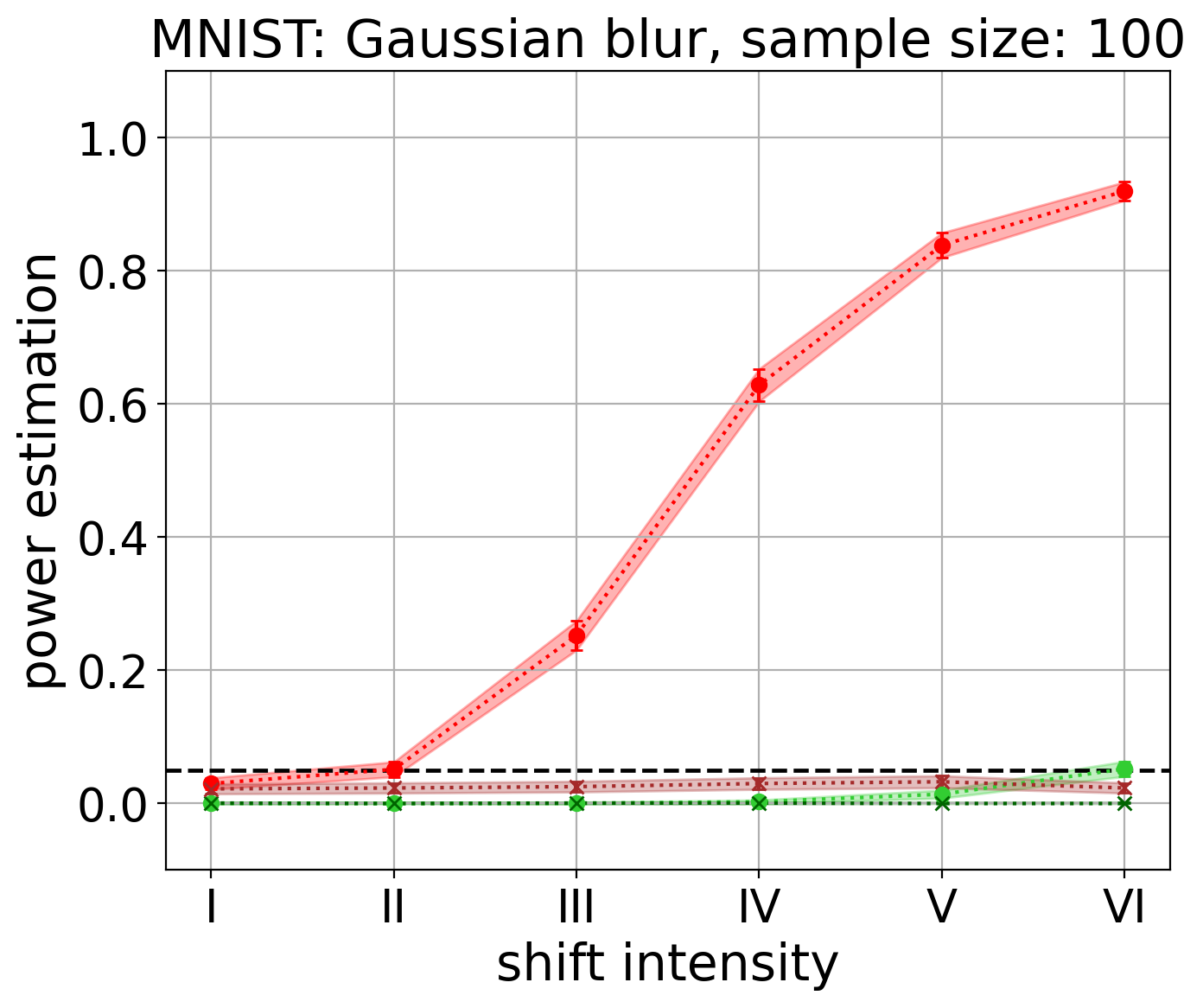}
   \includegraphics[width=0.32\textwidth]{./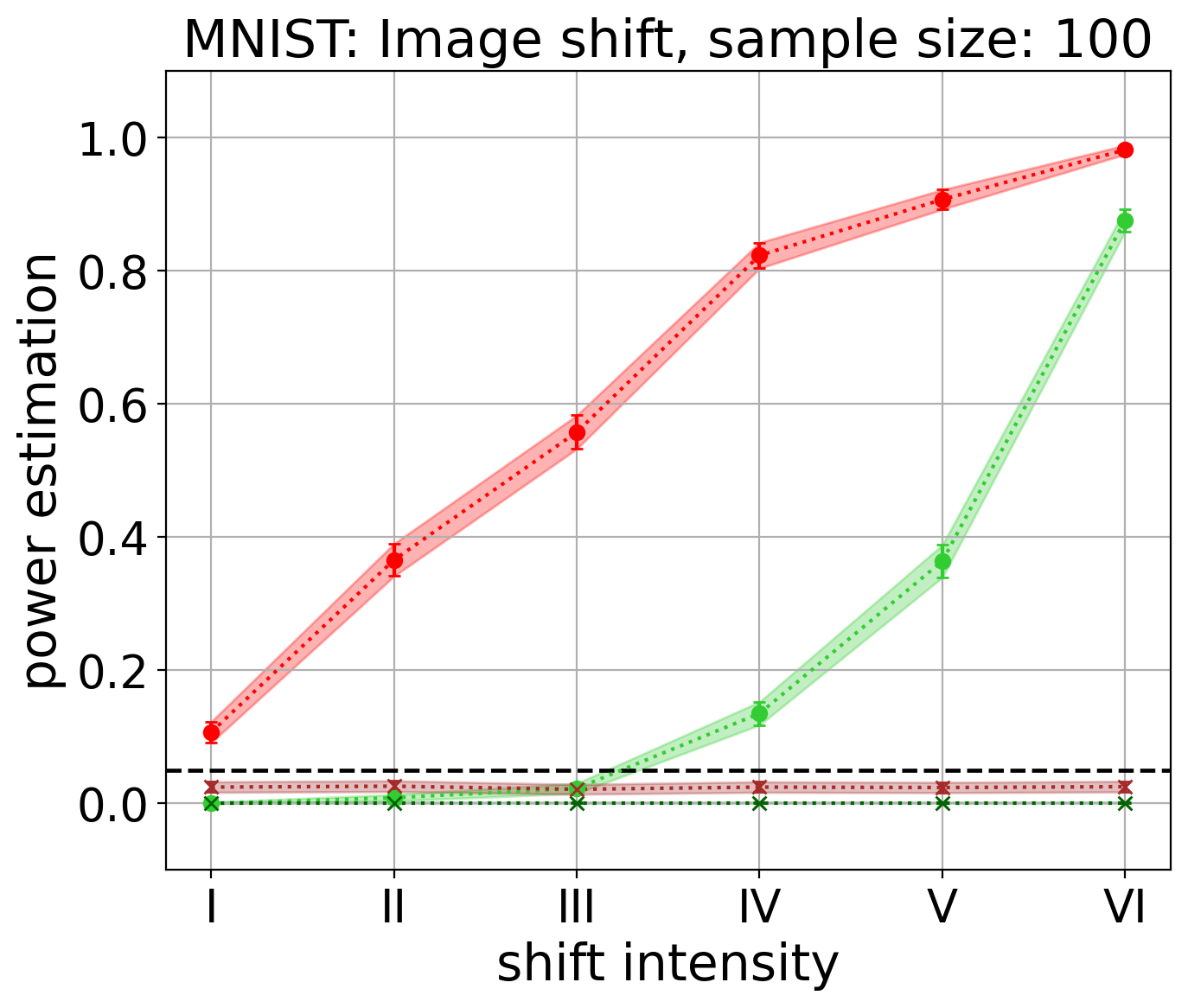}
    \includegraphics[width=0.32\textwidth]{./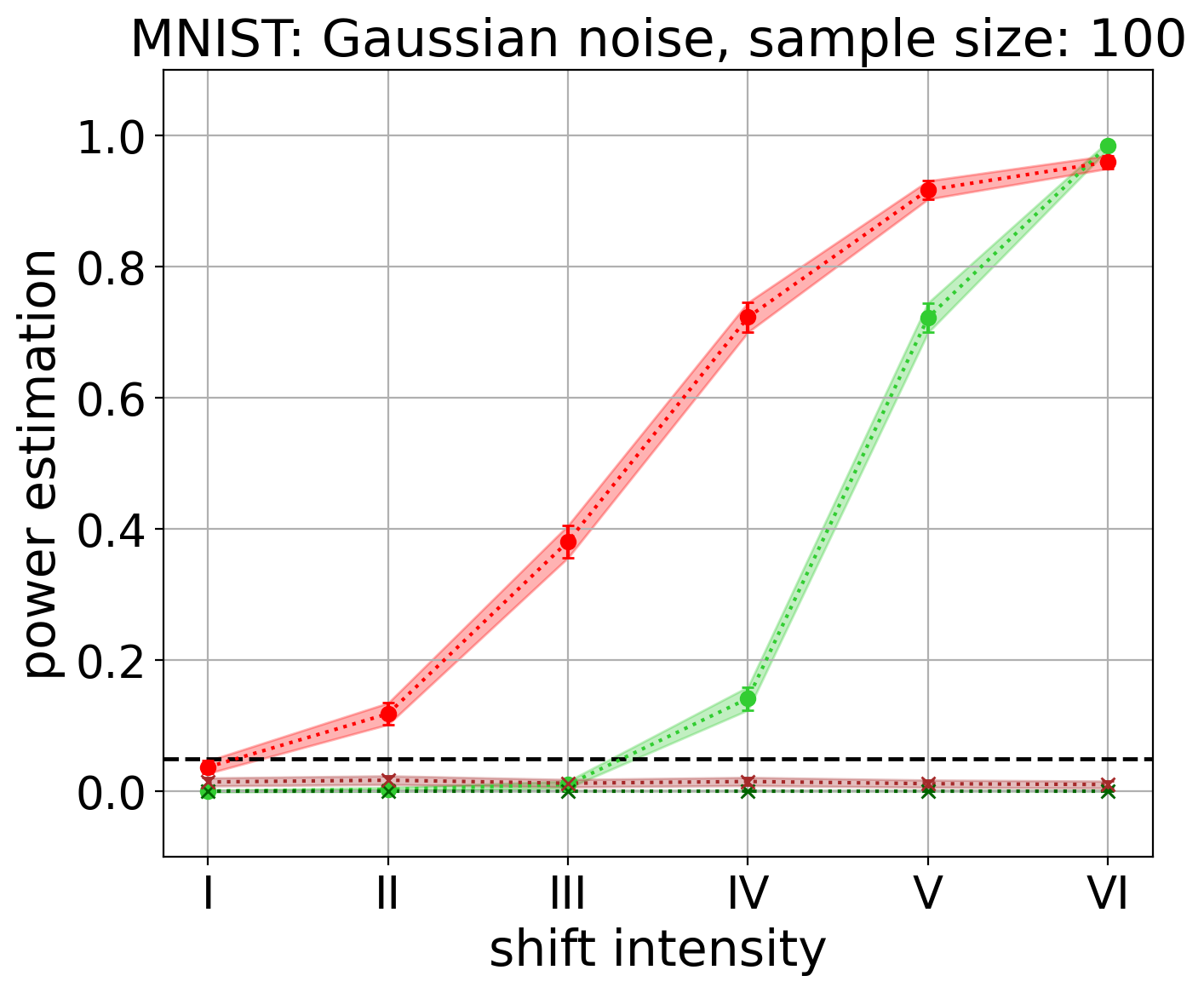}
    \includegraphics[width=0.32\textwidth]{./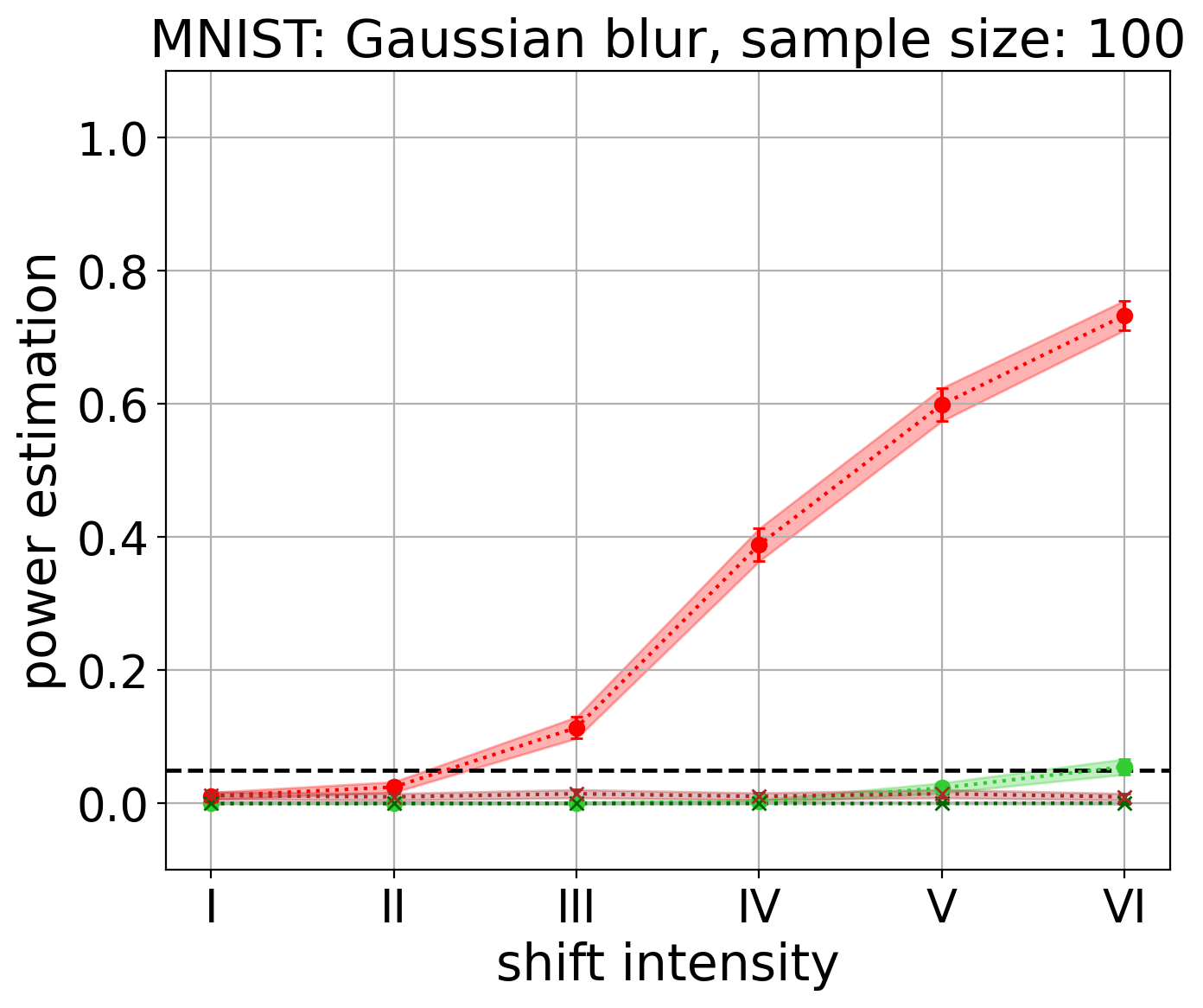}
   \includegraphics[width=0.32\textwidth]{./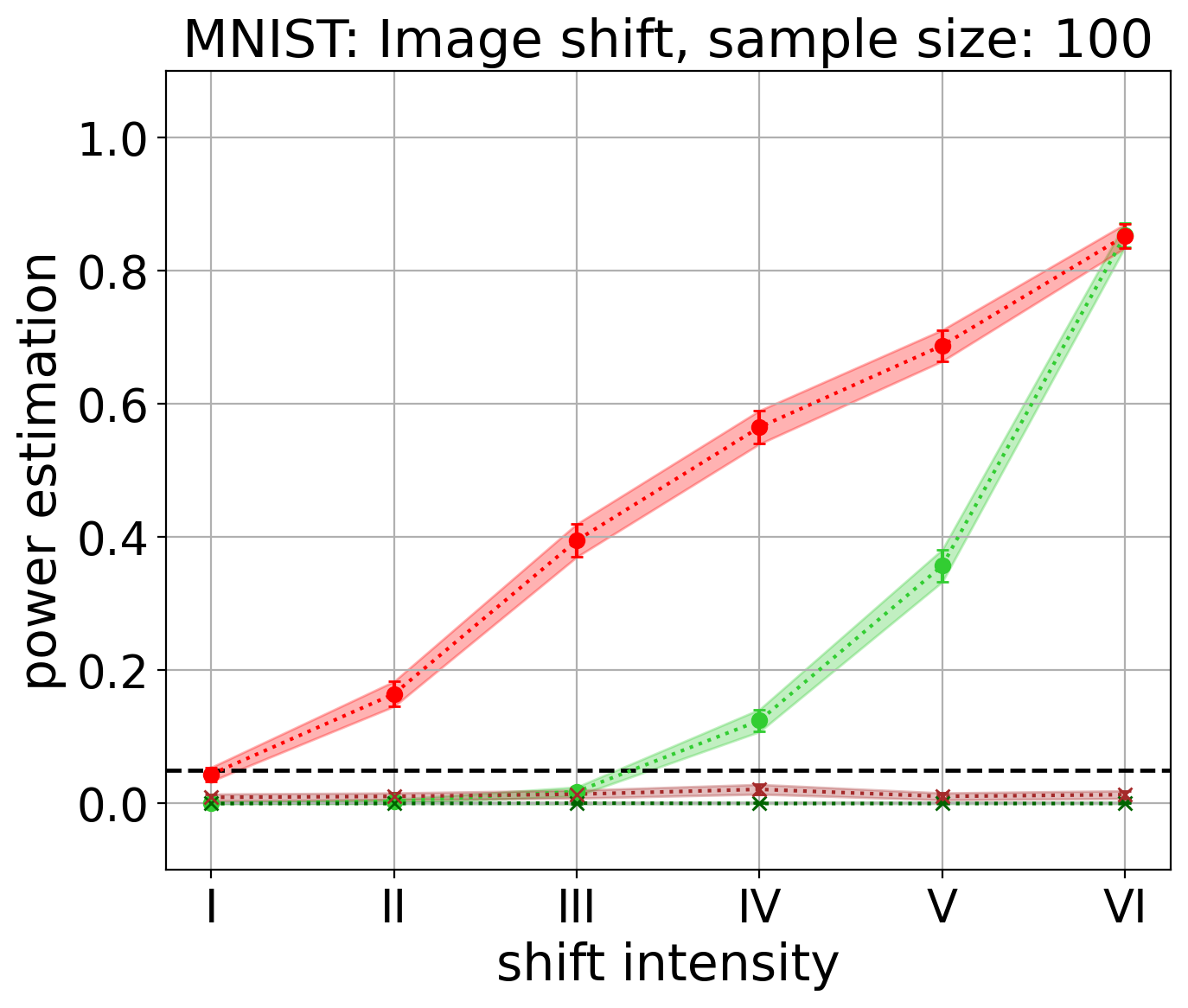}
   
   \caption{Power and type I error of the test with \statname{} (red) w.r.t.~the Frobenius norm, used in all other experiments, (top row), the spectral-norm (middle row) and $\|\cdot\|_{\infty}$ (bottom row) and CV (green) representations w.r.t.~the shift intensity for various shift types on the MNIST dataset with $\delta = 0.5$, sample size $100$, for layer $\ell_{-1}$.
   }
     \label{fig:shift_intensity_norm_variations}
\end{figure*}

\end{document}